\documentclass[10pt]{article} 

\usepackage{etoolbox}
\usepackage{comment}
\newtoggle{neurips}
\togglefalse{neurips}

\newcommand{\neurips}[1]{\iftoggle{neurips}{#1}{}}
\newcommand{\arxiv}[1]{\iftoggle{neurips}{}{#1}}

\arxiv{
\usepackage[letterpaper, left=.95in, right=.95in, top=.95in,bottom=.95in]{geometry}
  \usepackage{parskip}
  \usepackage[colorlinks=true, linkcolor=blue!70!black, citecolor=blue!70!black,urlcolor=black,breaklinks=true]{hyperref}
  \usepackage[dvipsnames]{xcolor}
}

\neurips{
  \usepackage{parskip}
    \usepackage[colorlinks=true, linkcolor=blue!70!black, citecolor=blue!70!black,urlcolor=black,breaklinks=true]{hyperref}
}

\PassOptionsToPackage{hypertexnames=false}{hyperref}  

\usepackage{amsmath}
\usepackage{microtype}
\usepackage{hhline}

\usepackage{amsthm}
\usepackage{mathtools}
\usepackage{bbm}
\usepackage{amsfonts}
\usepackage{amssymb}
\usepackage[nameinlink,capitalize]{cleveref}

\usepackage{xargs}

\makeatletter
\newcommand{\neutralize}[1]{\expandafter\let\csname c@#1\endcsname\count@}
\makeatother

\usepackage{algorithm}

\arxiv{
\usepackage{natbib}
\bibliographystyle{plainnat}
\bibpunct{(}{)}{;}{a}{,}{,}
}

\usepackage{xpatch}

\usepackage{thm-restate}
\declaretheorem[name=Theorem,parent=section]{theorem}
\declaretheorem[name=Lemma,parent=section]{lemma}
\declaretheorem[name=Assumption, parent=section]{assumption}
\declaretheorem[name=Definition, parent=section]{definition}
\declaretheorem[name=Condition, parent=section]{condition}
\declaretheorem[name=Corollary, parent=section]{corollary}

\declaretheorem[name=Proposition, parent=section]{proposition}

\renewcommand{\eqref}[1]{\texorpdfstring{\hyperref[#1]{(\ref*{#1})}}{(\ref*{#1})}}
\crefformat{equation}{#2Eq.\,(#1)#3}
\Crefformat{equation}{#2Eq.\,(#1)#3}
\Crefformat{figure}{#2Figure~#1#3}
\Crefformat{assumption}{#2Assumption~#1#3}
\Crefname{assumption}{Assumption}{Assumptions}
\crefname{fact}{Fact}{Facts}
\Crefformat{figure}{#2Figure #1#3}
\Crefformat{assumption}{#2Assumption #1#3}

\usepackage{crossreftools}
\makeatletter
\renewenvironment{proof}[1][Proof]%
{%
  \par\noindent{\bfseries\upshape {#1.}\ }%
}%
{\qed\newline}
\makeatother

\xpatchcmd{\proof}{\itshape}{\normalfont\proofnameformat}{}{}
\newcommand{\proofnameformat}{\bfseries}

\usepackage[most]{tcolorbox}

\tcbset {
  base/.style={
    arc=0mm, 
    bottomtitle=0.5mm,
    boxrule=0mm,
    colbacktitle=!10!white, 
    coltitle=black, 
    fonttitle=\bfseries, 
    left=2.5mm,
    leftrule=1mm,
    right=3.5mm,
    title={#1},
    toptitle=0.75mm, 
  }
}

\newtcolorbox{mainbox}[1]{
  colframe=blue!10!black,
  colbacktitle=blue!50!black!30!white,
  colback=blue!2!white,
  enhanced,
  fonttitle=\bfseries,
  attach boxed title to top left={yshift=-2.5mm},
  boxed title style={size=small,colframe=blue!40!black,colback=blue!40!black},
  title={\small\textcolor{white}{\textsc{#1}}}
}

\newtcolorbox{minbox}[1]{
  colframe=blue!10!black,
  colbacktitle=blue!50!black!30!white,
  colback=blue!2!white,
  enhanced,
  fonttitle=\bfseries,
}

\usepackage[utf8]{inputenc} 
\usepackage[T1]{fontenc}    
\usepackage{url}            
\usepackage{booktabs}       
\usepackage{amsfonts}       
\usepackage{nicefrac}       
\usepackage{microtype}      
\usepackage{makecell}
\usepackage{enumitem}
\usepackage{breakcites}
\usepackage{mathrsfs}
\usepackage{booktabs,longtable}
\usepackage[normalem]{ulem}

\usepackage{algorithm}
\usepackage{verbatim}
\usepackage[noend]{algpseudocode}

\makeatletter
\let\OldStatex\Statex
\renewcommand{\Statex}[1][3]{%
  \setlength\@tempdima{\algorithmicindent}%
  \OldStatex\hskip\dimexpr#1\@tempdima\relax}
\makeatother

\usepackage{multicol}
\usepackage{colortbl}
\usepackage{setspace}
\usepackage{transparent}
\usepackage{upgreek}

\usepackage{inconsolata}
\usepackage[scaled=.90]{helvet}
\usepackage{xspace}

\usepackage{graphicx}
\neurips{\usepackage{subfigure}}

\usepackage{amsmath,amssymb}
\usetikzlibrary{arrows.meta,calc}

\DeclarePairedDelimiter{\brk}{[}{]}
\DeclarePairedDelimiter{\crl}{\{}{\}}
\DeclarePairedDelimiter{\prn}{(}{)}

\let\Pr\undefined

\DeclareMathOperator{\Pr}{Pr}

\DeclareMathOperator*{\minimize}{minimize} 

\DeclareMathOperator*{\argmin}{arg\,min} 

\def\ddefloop#1{\ifx\ddefloop#1\else\ddef{#1}\expandafter\ddefloop\fi}
\def\ddef#1{\expandafter\def\csname bb#1\endcsname{\ensuremath{\mathbb{#1}}}}
\ddefloop ABCDEFGHIJKLMNOPQRSTUVWXYZ\ddefloop
\def\ddefloop#1{\ifx\ddefloop#1\else\ddef{#1}\expandafter\ddefloop\fi}
\def\ddef#1{\expandafter\def\csname b#1\endcsname{\ensuremath{\mathbf{#1}}}}
\ddefloop ABCDEFGHIJKLMNOPQRSTUVWXYZ\ddefloop
\def\ddef#1{\expandafter\def\csname sf#1\endcsname{\ensuremath{\mathsf{#1}}}}
\ddefloop ABCDEFGHIJKLMNOPQRSTUVWXYZ\ddefloop
\def\ddef#1{\expandafter\def\csname c#1\endcsname{\ensuremath{\mathcal{#1}}}}
\ddefloop ABCDEFGHIJKLMNOPQRSTUVWXYZ\ddefloop
\def\ddef#1{\expandafter\def\csname h#1\endcsname{\ensuremath{\widehat{#1}}}}
\ddefloop ABCDEFGHIJKLMNOPQRSTUVWXYZ\ddefloop
\def\ddef#1{\expandafter\def\csname hc#1\endcsname{\ensuremath{\widehat{\mathcal{#1}}}}}
\ddefloop ABCDEFGHIJKLMNOPQRSTUVWXYZ\ddefloop
\def\ddef#1{\expandafter\def\csname t#1\endcsname{\ensuremath{\widetilde{#1}}}}
\ddefloop ABCDEFGHIJKLMNOPQRSTUVWXYZ\ddefloop
\def\ddef#1{\expandafter\def\csname tc#1\endcsname{\ensuremath{\widetilde{\mathcal{#1}}}}}
\ddefloop ABCDEFGHIJKLMNOPQRSTUVWXYZ\ddefloop
\def\ddefloop#1{\ifx\ddefloop#1\else\ddef{#1}\expandafter\ddefloop\fi}
\def\ddef#1{\expandafter\def\csname scr#1\endcsname{\ensuremath{\mathscr{#1}}}}
\ddefloop ABCDEFGHIJKLMNOPQRSTUVWXYZ\ddefloop

\newcommand{\pipre}{\pi_{\mathrm{pre}}}
\newcommand{\pisft}{\pi_{\mathrm{SFT}}}
\newcommand{\pidata}{\pi_{\mathrm{D}}}
\newcommand{\Xcal}{\mathcal{X}}
\newcommand{\Ycal}{\mathcal{Y}}
\newcommand{\Pcov}{\mathrm{Cov}}
\newcommand{\passat}[1]{pass@$#1$}
\newcommand{\tailsft}{\textsc{TailSFT}}
\let\underbar\undefined

\makeatletter
\let\save@mathaccent\mathaccent
\newcommand*\if@single[3]{%
  \setbox0\hbox{${\mathaccent"0362{#1}}^H$}%
  \setbox2\hbox{${\mathaccent"0362{\kern0pt#1}}^H$}%
  \ifdim\ht0=\ht2 #3\else #2\fi
  }
\newcommand*\rel@kern[1]{\kern#1\dimexpr\macc@kerna}
\newcommand*\widebar[1]{\@ifnextchar^{{\wide@bar{#1}{0}}}{\wide@bar{#1}{1}}}
\newcommand*\underbar[1]{\@ifnextchar_{{\under@bar{#1}{0}}}{\under@bar{#1}{1}}}
\newcommand*\wide@bar[2]{\if@single{#1}{\wide@bar@{#1}{#2}{1}}{\wide@bar@{#1}{#2}{2}}}
\newcommand*\under@bar[2]{\if@single{#1}{\under@bar@{#1}{#2}{1}}{\under@bar@{#1}{#2}{2}}}
\newcommand*\wide@bar@[3]{%
  \begingroup
  \def\mathaccent##1##2{%
    \let\mathaccent\save@mathaccent
    \if#32 \let\macc@nucleus\first@char \fi
    \setbox\z@\hbox{$\macc@style{\macc@nucleus}_{}$}%
    \setbox\tw@\hbox{$\macc@style{\macc@nucleus}{}_{}$}%
    \dimen@\wd\tw@
    \advance\dimen@-\wd\z@
    \divide\dimen@ 3
    \@tempdima\wd\tw@
    \advance\@tempdima-\scriptspace
    \divide\@tempdima 10
    \advance\dimen@-\@tempdima
    \ifdim\dimen@>\z@ \dimen@0pt\fi
    \rel@kern{0.6}\kern-\dimen@
    \if#31
      \overline{\rel@kern{-0.6}\kern\dimen@\macc@nucleus\rel@kern{0.4}\kern\dimen@}%
      \advance\dimen@0.4\dimexpr\macc@kerna
      \let\final@kern#2%
      \ifdim\dimen@<\z@ \let\final@kern1\fi
      \if\final@kern1 \kern-\dimen@\fi
    \else
      \overline{\rel@kern{-0.6}\kern\dimen@#1}%
    \fi
  }%
  \macc@depth\@ne
  \let\math@bgroup\@empty \let\math@egroup\macc@set@skewchar
  \mathsurround\z@ \frozen@everymath{\mathgroup\macc@group\relax}%
  \macc@set@skewchar\relax
  \let\mathaccentV\macc@nested@a
  \if#31
    \macc@nested@a\relax111{#1}%
  \else
    \def\gobble@till@marker##1\endmarker{}%
    \futurelet\first@char\gobble@till@marker#1\endmarker
    \ifcat\noexpand\first@char A\else
      \def\first@char{}%
    \fi
    \macc@nested@a\relax111{\first@char}%
  \fi
  \endgroup
}
\newcommand*\under@bar@[3]{%
  \begingroup
  \def\mathaccent##1##2{%
    \let\mathaccent\save@mathaccent
    \if#32 \let\macc@nucleus\first@char \fi
    \setbox\z@\hbox{$\macc@style{\macc@nucleus}_{}$}%
    \setbox\tw@\hbox{$\macc@style{\macc@nucleus}{}_{}$}%
    \dimen@\wd\tw@
    \advance\dimen@-\wd\z@
    \divide\dimen@ 3
    \@tempdima\wd\tw@
    \advance\@tempdima-\scriptspace
    \divide\@tempdima 10
    \advance\dimen@-\@tempdima
    \ifdim\dimen@>\z@ \dimen@0pt\fi
    \rel@kern{0.6}\kern-\dimen@
    \if#31
      \underline{\rel@kern{-0.6}\kern\dimen@\macc@nucleus\rel@kern{0.4}\kern\dimen@}%
      \advance\dimen@0.4\dimexpr\macc@kerna
      \let\final@kern#2%
      \ifdim\dimen@<\z@ \let\final@kern1\fi
      \if\final@kern1 \kern-\dimen@\fi
    \else
      \underline{\rel@kern{-0.6}\kern\dimen@#1}%
    \fi
  }%
  \macc@depth\@ne
  \let\math@bgroup\@empty \let\math@egroup\macc@set@skewchar
  \mathsurround\z@ \frozen@everymath{\mathgroup\macc@group\relax}%
  \macc@set@skewchar\relax
  \let\mathaccentV\macc@nested@a
  \if#31
    \macc@nested@a\relax111{#1}%
  \else
    \def\gobble@till@marker##1\endmarker{}%
    \futurelet\first@char\gobble@till@marker#1\endmarker
    \ifcat\noexpand\first@char A\else
      \def\first@char{}%
    \fi
    \macc@nested@a\relax111{\first@char}%
  \fi
  \endgroup
}
\makeatother

\defcitealias{olmo2026olmo3}{Olmo, 2026}
\defcitealias{team2026kimi}{Kimi, 2026}
\defcitealias{microsoft2025mai}{MAI, 2025}

 \usepackage{color-edits}
 \addauthor{df}{ForestGreen}
 \addauthor{ja}{red}
 \addauthor{ak}{BurntOrange}
 \addauthor{sm}{purple}
 \addauthor{sj}{cyan}

\let\oldparagraph\paragraph

\renewcommand{\paragraph}[1]{\oldparagraph{#1.}}

\makeatletter
\g@addto@macro\appendix{%
  \crefalias{section}{appendixsection}%
  \crefalias{subsection}{appendixsubsection}%
  \crefalias{subsubsection}{appendixsubsubsection}%
}
\makeatother
\crefname{appendixsection}{Appendix}{Appendices}
\Crefname{appendixsection}{Appendix}{Appendices}
\crefname{appendixsubsection}{Appendix}{Appendices}
\Crefname{appendixsubsection}{Appendix}{Appendices}
\crefname{appendixsubsubsection}{Appendix}{Appendices}
\Crefname{appendixsubsubsection}{Appendix}{Appendices}

\title{TailSFT: Filtered Fine-Tuning Improves Post-Training Performance}

\arxiv{
  \author{}
  \date{}
}

\begin{document}
\maketitle

\arxiv{ 
\begingroup
\renewcommand{\thefootnote}{\fnsymbol{footnote}}
\vspace{-4em}
\begin{center}
\large
\setlength{\tabcolsep}{5pt}
\begin{tabular}{ccccc}
\makecell{Sadhika Malladi\footnotemark[2]} &
\makecell{Samy Jelassi\footnotemark[3]}
&
\makecell{Dylan J. Foster\footnotemark[3]}
&
\makecell{Jordan T. Ash\footnotemark[3]}
&
\makecell{Akshay Krishnamurthy\footnotemark[3]}
\end{tabular}\vspace{1em}
\end{center}
\vspace{2em}

\footnotetext[2]{University of California San Diego, \texttt{sadhika.malladi98@gmail.com}. Work partially completed while at Microsoft Research NYC.}
\footnotetext[3]{Microsoft Research NYC, \texttt{\{samyjelassi,dylanfoster,ash.jordan,akshaykr\}@microsoft.com}.}
\endgroup
}

\begin{abstract}

Reinforcement learning post-training drives reasoning and agentic capabilities in modern AI systems, yet a growing body of work shows that it is most effective when used to fine-tune an already capable base model. 
We question whether existing pipelines yield models that are most suitable for reinforcement learning. 
Building on prior work highlighting the role of \emph{coverage} and \passat{K} as predictors of post-RL performance, we design a simple modification to supervised fine-tuning, \emph{TailSFT}, which filters out already fit sequences during training, thereby focusing learning on under-modeled regions, or the tail, of the data distribution. 
We justify and validate the design choices in \tailsft{}, particularly the specific filtering criteria, through a combination of controlled experiments and theoretical analysis. On OLMo-3 7B, \tailsft{} often improves \passat{16} performance on math and coding evaluations, with gains up to $17\%$ absolute, while incurring minimal computational overhead. These higher-coverage checkpoints consistently translate to up to $4\%$ absolute \passat{1} gains in subsequent GRPO runs, demonstrating that \tailsft{} checkpoints serve as better initializations for RL. We further introduce a lightweight diagnostic for identifying settings where \tailsft{} is most likely to help. More broadly, our results motivate a principled, stage-aware approach to model development, in which intermediate checkpoints are judged by how effectively they support subsequent training.\looseness=-1




\end{abstract}


\begin{figure}[H]
    \centering
    \includegraphics[width=0.8\textwidth]{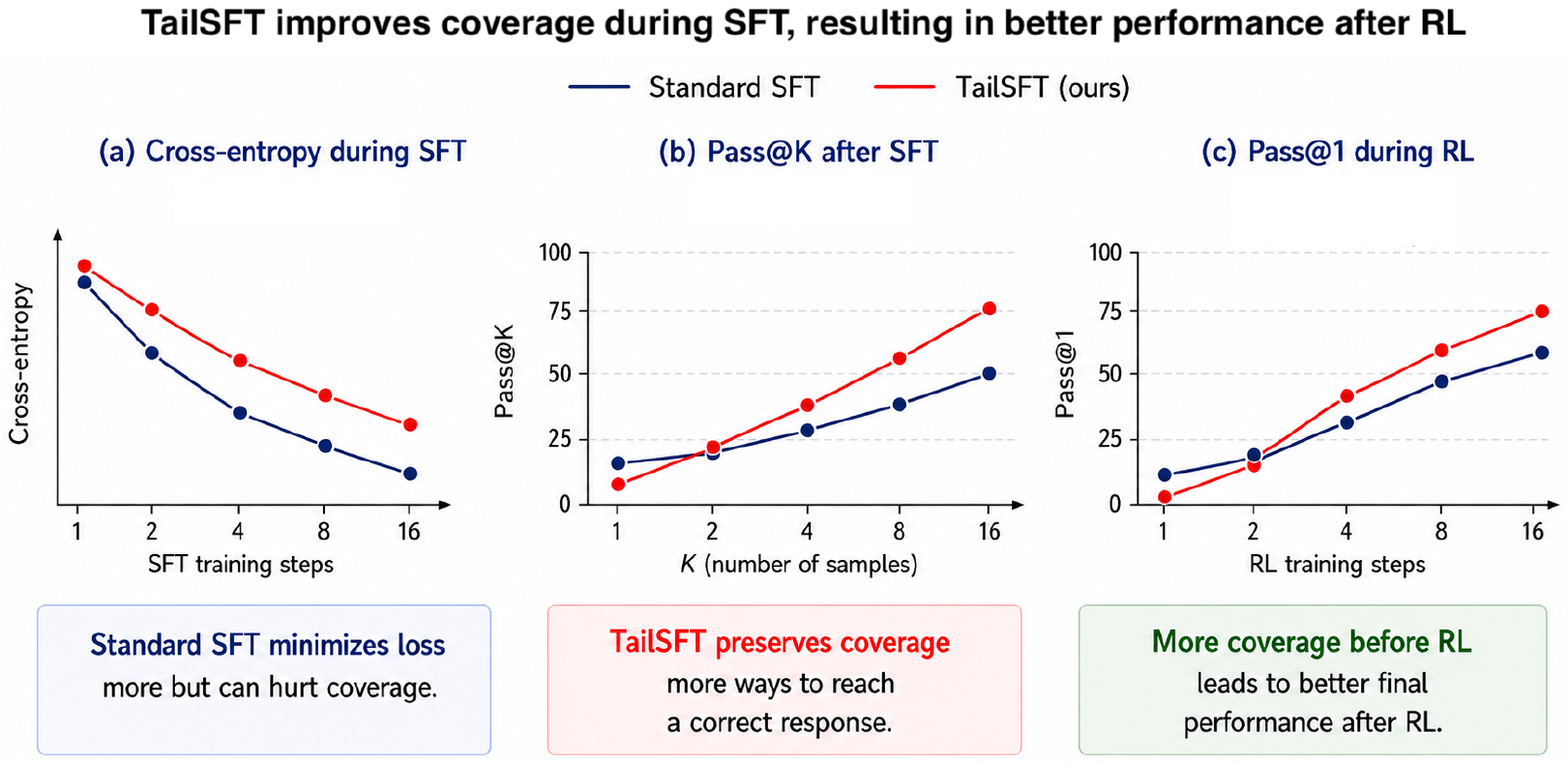}
    \caption{\textbf{\tailsft{} yields better coverage, possibly at the cost of worse cross-entropy, which translates to stronger post-RL performance.} Standard SFT achieves lower cross-entropy, while \tailsft{} preserves higher \passat{K} at large $K$. Starting from this higher-coverage checkpoint, the same RL procedure yields higher \passat{1}. Curves are illustrative.}
    \label{fig:overview}
    \vspace{0.5cm}
\end{figure}

\section{Introduction}
\label{sec:intro}

Language models are commonly trained in multiple phases, each defined by a different dataset and objective. A typical pipeline includes (1) next-token prediction on a general-purpose corpus, (2) supervised fine-tuning (SFT) for instruction following and domain adaptation, and (3) reinforcement learning (RL) for complex reasoning tasks. A tacit assumption in such pipelines is that improving the objective at one stage also produces a better initialization for the next. Recent work suggests that this assumption need not hold: models that perform better according to the local objective of one stage can nevertheless be worse starting points for subsequent training~\citep{zhang2025interplay,wang2025octothinker,springer2025overtraining,chen2026rethinking}. These findings motivate a stage-aware view of model development, in which an intermediate model is judged not only by its standalone performance, but also by how effectively it supports the training that follows.

This issue is particularly important at the transition from SFT to RL. RL has produced substantial gains in mathematics, code, and other reasoning domains~(\citealp{shao2024deepseekmath,deepseekai2025r1}; \citetalias{microsoft2025mai}; \citealp{zhao2026absolute}; \citetalias{team2026kimi}), but requires substantial computation to generate and evaluate on-policy responses. In RL with verifiable rewards (RLVR), the usefulness of these rollouts depends strongly on the initialization. If rewarding responses are difficult to sample from the SFT model, many rollout groups contain little direct positive learning signal; conversely, if rewarding responses are reachable under repeated sampling, RL has more useful behavior to reinforce. Thus, designing SFT for subsequent RL requires a criterion that captures not merely single-sample accuracy, but whether useful responses remain accessible within the rollout budget available to RL.

The \emph{coverage principle} formalizes this desideratum by relating a model's coverage profile to what repeated sampling can recover~\citep{chen2025coverageprinciple} (\Cref{def:coverage}). Empirically, \passat{k} provides a convenient measure of this property, in that it quantifies how often $k$ samples contain at least one reward-bearing response. At values of $k$ comparable to the sampling budget used during RL, large-$k$ \passat{k} therefore measures how frequently the initialization exposes RL to a useful response.

Standard SFT, however, is not explicitly designed to preserve or improve this form of coverage. Cross-entropy continues to reward increases in the likelihood of every demonstration, including responses that the model can readily produce already. Further fitting these responses can shift probability mass away from potentially useful responses represented by the initial model. Consequently, cross-entropy and coverage need not be aligned. A model with worse cross-entropy can have better coverage, and vice versa~\citep{chen2025coverageprinciple,chen2026rethinking}. On the other hand, directly optimizing the coverage profile is difficult, in part because the criterion is non-differentiable and depends on information about the target response distribution that is generally unavailable (\Cref{def:coverage}). This raises the question we study in this work:

\begin{center}
    \emph{Can we modify supervised fine-tuning so that it better preserves the useful response coverage needed for subsequent reinforcement learning?}
\end{center}


\paragraph{Contributions} 
To better prepare models for RL post-training, we propose \tailsft{}, an SFT algorithm that prioritizes coverage (or high \passat{k} at large $k$) over low cross-entropy loss (\Cref{alg:tailsft}).
\tailsft{} is a sequence-level filtering method 
designed to direct training effort towards learning the under-modeled tail of the response distribution. The algorithm is lightweight and serves as a straightforward drop-in objective for SFT.
\begin{enumerate}
    \item We isolate the effect of filtering in a controlled graph navigation task (\Cref{sec:synthetic}) and show that filtering already-fit examples improves coverage, achieving higher \passat{k} at large $k$ despite worse cross-entropy and \passat{1} (\Cref{fig:synthetic_graph,fig:synthetic_main}). \item We design \tailsft{} to filter relative to the initial policy and show that this criterion provably ensures better coverage than standard cross-entropy or more na\"{i}ve filtering (\Cref{thm:offset-clipping-informal}). 
    \item We apply \tailsft{} to OLMo-3 7B on standard math and coding tasks and obtain substantial gains in coverage and post-RL performance. \tailsft{} improves \passat{16}, in absolute terms, by up to 16.8\% on coding and 3.1\% on math (\Cref{tab:sft-main}), and improves final \passat{1} after GRPO by up to 3.9\% (\Cref{tab:grpo-results}).
    \item We introduce a lightweight coverage-ratio diagnostic (\Cref{def:coverage-ratio}), computed from the base model and a standard SFT run, that provides a sufficient condition for \tailsft{} to improve coverage (\Cref{fig:lm-clip-predictor}).
\end{enumerate}

\paragraph{Broader outlook}
Our results illustrate a broader source of suboptimality in multi-stage model training, where the objective that is appropriate for producing a strong model at one stage may not be the objective that produces the best initialization for the next. For the SFT-to-RL transition studied here, this distinction means that high large-$k$ \passat{k} can be more valuable than lower cross-entropy, even when the two criteria are in tension. Recent work has documented related forms of misalignment across pre-training, fine-tuning, and RL~\citep{zhang2025interplay,wang2025octothinker,springer2025overtraining,chen2025coverageprinciple,chen2026rethinking,watts2026sharpness}. \tailsft{} complements these observations with a practical intervention: rather than only diagnosing whether an intermediate model will support later training, it modifies SFT itself to produce a more useful initialization for the RL stage. Based on our results, we believe there is significant potential in adopting a holistic approach to language modeling.

\section{Preliminaries on Coverage}\label{sec:coverage}
We formalize coverage as a property of the SFT initialization that captures how readily reward-bearing responses can be sampled. We then relate coverage to \passat{K}, which provides an empirical measure of the learning signal available to RL at the start of post-training.

\paragraph{Notation and setting}
Consider a prompt distribution $\mu$ over a prompt space $\mathcal{X}$, a response space $\mathcal{Y}$, and a language model $\pi(\cdot\mid x)\in\Delta(\mathcal{Y})$. We focus on verifiable tasks with binary reward $R:\mathcal{X}\times\mathcal{Y}\rightarrow\{0,1\}$, where $R(x,y)=1$ indicates that response $y$ is correct for prompt $x$. In SFT, we observe $(x_i,y_i)$ with $x_i\sim\mu$ and $y_i\sim\pidata(\cdot\mid x_i)$, where $\pidata$ assigns probability to reward-bearing responses. Standard SFT minimizes the sequence-level cross-entropy $\ell_\pi(x,y)=-\log\pi(y\mid x)$, and the resulting model $\pisft$ initializes RL.

\paragraph{The coverage profile}
Consider a single prompt $x$, and let $p_\pi(x):=\Pr_{y\sim\pi(\cdot\mid x)}[R(x,y)=1]$. A batch of $K$ independent rollouts contains at least one rewarding response with probability $1-(1-p_\pi(x))^K$. If all $K$ rollouts receive zero reward, no gradient is received for their corresponding prompt. Thus, useful responses must have sufficient probability under the initialization to have a chance of being produced during RL. Across a dataset of many prompts, the model may benefit via positive transfer from easier to harder prompts, but the initial probability of rewarding responses, $p_\pi(x)$, determines the learning signal available before such transfer.

The coverage profile formalizes this property relative to the ground-truth, SFT-data policy~\citep{chen2025coverageprinciple}. We assume that $\pidata$ assigns non-negligible probability to reward-bearing responses, so coverage of $\pidata$ is relevant to downstream RL.

\begin{definition}[Coverage Profile]
Given a data-generating policy $\pidata$ and model $\pi$, the coverage profile at scale $N$ is
\[
\operatorname{Cov}_N(\pidata\Vert\pi)
:=
\Pr_{x\sim\mu,\,y\sim\pidata(\cdot\mid x)}
\left[
\frac{\pidata(y\mid x)}{\pi(y\mid x)}\geq N
\right].
\]
\label{def:coverage}
\end{definition}

The coverage profile measures the probability mass under $\pidata$ that $\pi$ underweights by a factor of at least $N$; smaller values indicate better coverage. The scale $N$ determines the sampling budget needed to reach this mass. \citet{chen2025coverageprinciple} show that, if $\operatorname{Cov}_N(\pidata\Vert\pi)\leq\tfrac{1}{2}$, then $K\geq 2N\log(1/\varepsilon)$ samples suffice for Best-of-$K$ to achieve expected reward within $\operatorname{Cov}_N(\pidata\Vert\pi)+\varepsilon$ of $\pidata$, with a corresponding worst-case necessity result with a comparable relationship between $N$ and $K$. Thus, coverage at scale $N$ characterizes which useful responses are accessible with a sampling budget on the order of $N$.

Since $\pidata(y\mid x)$ is generally unknown, the coverage profile cannot be measured directly.
However, for binary rewards, \passat{K}$(\pi):=\mathbb{E}_{x\sim\mu}[1-(1-p_\pi(x))^K]$ is exactly the expected reward of Best-of-$K$ sampling~\citep{brown2024monkeys}.  We therefore use \passat{K} at large $K$ as an empirical measure of coverage. At a $K$ comparable to the RL rollout budget, it also measures how often the policy exposes RL to reward-bearing responses.

\section{\tailsft{}: Supervised Fine-Tuning for Coverage}
\label{sec:tailsft}

In this section, we develop \tailsft{}, a sequence-level filtering method designed to direct SFT updates toward improving coverage. The method combines two ideas: stopping updates on responses that are already sufficiently well modeled via filtering, and determining which responses to filter relative to the initial policy. We first introduce the algorithm, then isolate the effect of filtering in a controlled graph-navigation task, and finally analyze why the initial policy provides a useful reference for filtering.

\subsection{The \tailsft{} Algorithm}
\label{sec:thresholding}

Coverage at scale $N$ depends on whether useful responses receive enough probability to be reached via repeated sampling. Once a response is already well covered at that scale, increasing its likelihood further does not reduce $\operatorname{Cov}_N$. Improving coverage instead requires increasing the probability of responses that remain under-modeled. Standard SFT makes no such distinction and continues to increase the likelihood of every training response.

\tailsft{} reduces training on responses that have already improved substantially and concentrates subsequent updates on the remaining tail. Since $\pidata(y\mid x)$ is unknown, we cannot determine directly which responses are already covered. Instead, we compare each response's current loss to its loss under the initial policy and filter the responses whose losses have decreased the most.

\begin{algorithm}[t]
\caption{\tailsft{}}
\label{alg:tailsft}
\begin{algorithmic}[1]
\Require Initial policy $\pi_0$, SFT data $\mathcal D$, filtering schedule $\{\gamma_t\}$

\State Record $\ell_i^0 \gets \ell_i(\pi_0)$ for every $(x_i,y_i)\in\mathcal D$

\For{each training step $t$}
    \State Sample a batch $\mathcal B_t$
    \State Compute $\ell_i^t \gets \ell_i(\pi_t)$ for each $(x_i,y_i)\in\mathcal B_t$
    \State Let $\mathcal F_t\subseteq\mathcal B_t$ be the $\gamma_t$ fraction of sequences with the smallest $\ell_i^t-\ell_i^0$
    \State Set $\pi_{t+1}$ by taking a gradient step from $\pi_t$ on
    \Statex \hspace{\algorithmicindent}
    $\displaystyle
    \mathcal L_t(\pi)
    =
    \frac{
        \sum_{i\in\mathcal B_t\setminus\mathcal F_t}
        -\log \pi(y_i\mid x_i)
    }{
        \sum_{i\in\mathcal B_t\setminus\mathcal F_t}
        |y_i|
    }$
\EndFor
\end{algorithmic}
\end{algorithm}

As described in \Cref{alg:tailsft}, \tailsft{} redirects training away from responses whose losses have decreased the most and toward responses that remain under-modeled. Filtering is applied at the sequence level. The length-normalized loss is used only to determine which sequences are filtered; optimization uses the standard token-averaged cross-entropy over target tokens in the retained sequences. The filtering fraction $\gamma_t$ may be fixed or vary over training.

Filtering itself does not require using the initial policy as a reference. We therefore consider two alternatives alongside \tailsft{}: \emph{absolute filtering}, which stops training on an example once its loss falls below a fixed threshold, and \emph{quantile filtering}, which filters the lowest-loss fraction of each batch. These alternatives allow us to separate the benefit of filtering itself from the benefit of measuring progress relative to the initial policy $\pi_0$.

\subsection{Warm-Up: Graph Navigation}
\label{sec:synthetic}

We study whether halting updates on an example once it is fit by the model can improve coverage, even at the cost of the standard SFT objective and pass@1. To do so, this section considers a controlled setting in which the distribution of rewarding responses is known exactly. Following~\citet{chen2025coverageprinciple}, each prompt $x$ specifies a layered directed graph with source $s$, target $t$, and exactly eight valid $s\!\to\!t$ paths. Pretraining pairs each graph with a uniformly sampled valid path, while the SFT data deterministically selects and rewards a single path (see~\cref{fig:synthetic_graph}). Thus, pretraining teaches generic graph navigation, while SFT teaches the task-specific path-selection rule. Since the target policy is deterministic, \passat{K} is exactly the probability that the rewarded path is contained in $K$ samples, yielding a finite-sample estimate of coverage. Full details of the graph construction and data-generating process appear in~\cref{app:synthetic}.

\begin{figure}
    \centering
    \renewcommand{\familydefault}{\sfdefault}

\definecolor{navy}{HTML}{315B7D}
\definecolor{tailred}{HTML}{D9534F}
\definecolor{gain}{HTML}{2F8F68}
\definecolor{softgreen}{HTML}{EAF6F0}
\definecolor{softred}{HTML}{FBEDEC}
\definecolor{softblue}{HTML}{EAF2F8}
\definecolor{textgray}{HTML}{4B5563}

\begin{tikzpicture}[
  x=1cm,y=1cm,font=\sffamily,
  baseedge/.style={draw=black!13,line width=0.44pt},
  node/.style={circle,draw=black!34,fill=white,line width=0.48pt,
    minimum size=3.0mm,inner sep=0pt},
  endpoint/.style={circle,draw=black!45,fill=white,line width=0.6pt,
    minimum size=4.0mm,inner sep=0pt,font=\fontsize{5.1}{5.7}\selectfont},
  certainpath/.style={draw=navy,line width=1.55pt,line cap=round,line join=round},
  halfpath/.style={draw=navy,line width=1.25pt,opacity=0.58,
    line cap=round,line join=round},
]

\newcommand{\basegraph}[2]{%
  \coordinate (s) at ({#1},{#2+0.91});
  \coordinate (t) at ({#1+4.10},{#2+0.91});
  \foreach \c/\dx in {1/0.65,2/1.48,3/2.62,4/3.45}{
    \foreach \r in {0,...,3}{
      \coordinate (n\c\r) at
        ({#1+\dx},{#2+0.22+0.46*\r});
    }
  }
  \coordinate (dots1) at ({#1+2.05},{#2+0.45});
  \coordinate (dots2) at ({#1+2.05},{#2+0.91});
  \coordinate (dots3) at ({#1+2.05},{#2+1.37});
  \foreach \r in {0,...,3}{
    \draw[baseedge] (s) -- (n1\r);
    \draw[baseedge] (n4\r) -- (t);
  }
  \foreach \c/\d in {1/2,3/4}{
    \foreach \r in {0,...,3}{
      \draw[baseedge] (n\c\r) -- (n\d\r);
    }
    \foreach \r/\q in {0/1,1/0,1/2,2/1,2/3,3/2}{
      \draw[baseedge] (n\c\r) -- (n\d\q);
    }
  }
}

\newcommand{\drawnodes}{%
  \foreach \c in {1,...,4}{
    \foreach \r in {0,...,3}{\node[node] at (n\c\r) {};}
  }
  \node[endpoint] at (s) {$s$};
  \node[endpoint] at (t) {$t$};
  \foreach \d in {1,2,3}{
    \node[font=\scriptsize,text=black!48,fill=white,
      inner xsep=1.2pt,inner ysep=0pt] at (dots\d) {$\hdots$};
  }
}

\newcommand{\drawpathA}[1]{%
  \draw[#1] (s) -- (n13) -- (n22);
  \draw[#1] (n33) -- (n42) -- (t);
}
\newcommand{\drawpathB}[1]{%
  \draw[#1] (s) -- (n10) -- (n21);
  \draw[#1] (n30) -- (n41) -- (t);
}


\draw[black!14,line width=0.55pt] (5.40,0.06) -- (5.40,4.78);
\draw[black!14,line width=0.55pt] (10.55,0.06) -- (10.55,4.78);

\node[font=\bfseries\footnotesize,text=navy] at (2.83,4.60)
  {SFT Distribution};

\node[font=\bfseries\footnotesize,text=navy] at (7.97,4.60)
  {SFT};

\node[font=\bfseries\footnotesize,text=navy] at (13.15,4.60)
  {TailSFT};

\node[anchor=east,font=\tiny\bfseries,text=textgray] at (0.54,3.41) {Majority};
\node[anchor=east,font=\tiny\bfseries,text=textgray] at (0.54,1.39) {Minority};

\begin{scope}
  \basegraph{0.78}{2.50}
  \drawpathA{certainpath}
  \drawnodes
\end{scope}

\begin{scope}
  \basegraph{0.78}{0.48}
  \drawpathB{certainpath}
  \drawnodes
\end{scope}

\begin{scope}
  \basegraph{5.92}{2.50}
  \drawpathA{certainpath}
  \drawnodes
\end{scope}

\begin{scope}
  \basegraph{5.92}{0.48}
  \drawpathA{certainpath}
  \drawnodes
\end{scope}

\node[font=\tiny,text=textgray,align=center] at (7.97,0.13)
  {\(\mathrm{pass@1}=\tfrac78\qquad \mathrm{pass@K}=\tfrac78\)};

\begin{scope}
  \basegraph{11.10}{2.50}
  \drawpathA{halfpath}
  \drawpathB{halfpath}
  \drawnodes
\end{scope}

\begin{scope}
  \basegraph{11.10}{0.48}
  \drawpathA{halfpath}
  \drawpathB{halfpath}
  \drawnodes
\end{scope}

\node[font=\tiny,text=textgray,align=center] at (13.15,0.13)
  {\(\mathrm{pass@1}=\tfrac12\qquad
     \mathrm{pass@K}=1-2^{-K}\)};


\end{tikzpicture}
    \vspace{-0.5cm}
    \caption{\textbf{\tailsft{} trades single-sample accuracy for better coverage under repeated sampling.} The SFT distribution contains majority and minority shards with different rewarding paths. Standard SFT over-anchors on the majority shard, achieving high \passat{1} but no improvement as $K$ grows. A filtered objective, illustrated here by \tailsft{}, retains probability on both paths, yielding lower \passat{1} but substantially better \passat{K} scaling.}
    \label{fig:synthetic_graph}
\end{figure}

Using a GPT-2-style architecture and a shared pretraining setup, we vary only the SFT objective. We compare standard SFT with absolute filtering, quantile filtering, and \tailsft{}, which uses offset quantile filtering (as in \Cref{alg:tailsft}). For each filtering variant, we sweep its single filtering hyperparameter and report the setting with the highest final \passat{8}; complete training details and sweeps appear in~\cref{app:synthetic}.

The results in~\cref{fig:synthetic_main} reveal a clear divergence between the standard SFT objective and coverage. Standard SFT achieves the best cross-entropy and \passat{1}, yet all three filtering methods achieve substantially better \passat{8}, which we use as a surrogate for coverage. Thus, improving cross-entropy and single-sample accuracy can come at the expense of the probability that a rewarded response appears within a finite rollout budget. Filtering mitigates this failure mode by reducing further updates on responses that are already well modeled.

\begin{figure}[t]
    \centering
    \includegraphics[width=\textwidth]{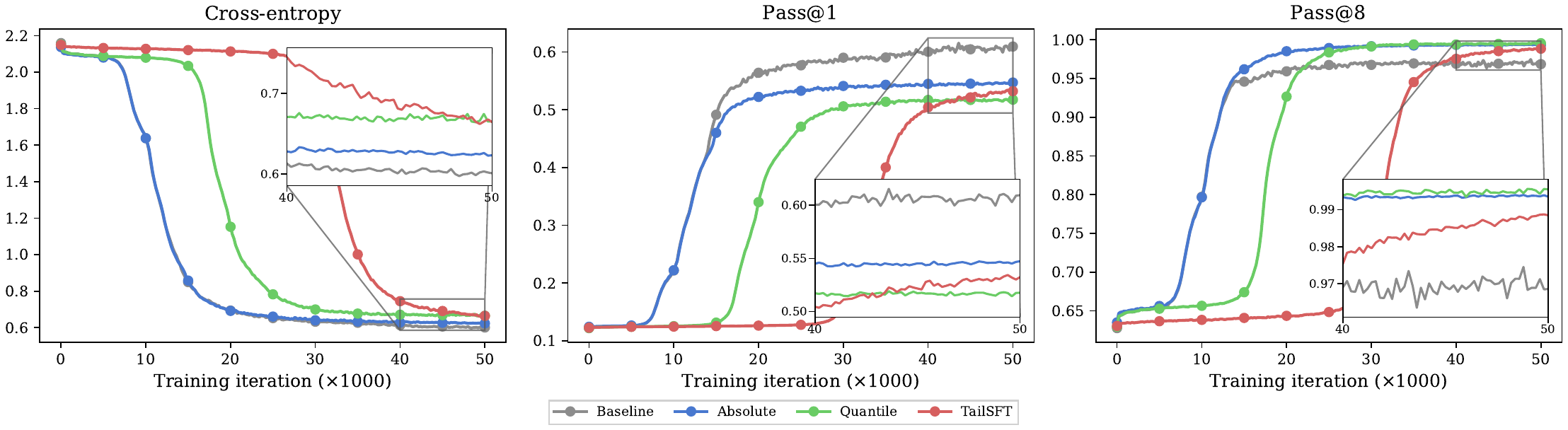}
    \vspace{-0.5cm}
    \caption{\textbf{Filtering improves \passat{K} at the expense of cross entropy and \passat{1}.} Results on the graph-navigation task. Standard SFT achieves the lowest cross-entropy and highest \passat{1}, while all three filtering methods achieve substantially higher \passat{8}. See~\cref{app:synthetic} for details. }
    \label{fig:synthetic_main}
\end{figure}

The graph task therefore provides clean evidence for the first design choice underlying \tailsft{}: filtering already-fit examples can improve coverage. It is, however, too simple a setting to determine whether the filtering criterion should depend on the initial policy. In fact, \tailsft{} is slightly weaker than other filtering variants in this setting, due to homogeneity across prompts. Under the ideal pretrained policy, every prompt has the same possible loss reduction (i.e., $\log 8$), so the reference loss provides no information for distinguishing which examples have more room to improve. Imperfections in pretraining therefore introduce noise into the offset score without providing useful signal. We discuss this effect further in~\cref{app:synthetic}.

\subsection{Theoretical Analysis}
\label{sec:tailsft-theory}

We next turn to a setting where the initial policy carries information about the response distribution, and ask whether  filtering rules can preserve it. To study this question, we turn to theoretical analysis.  
Let $S$ denote the set of rewarding responses, and suppose the target policy is obtained by discarding all responses outside $S$ and renormalizing the initial policy:
\[
\pi^\star(y)\propto \pi_0(y)\mathbf{1}\{y\in S\}.
\]
Thus, among rewarding responses, the target preserves the relative preferences already present in $\pi_0$. The initial model may place too much total probability on unrewarding responses, but it still contains information about how probability should be distributed among the rewarding ones.

We compare two ways to decide when an example should stop contributing to the SFT objective. Let $\ell_\pi(x,y)=-\log \pi(y\mid x)$ denote the sequence-level SFT loss. With an absolute filtering constant $\alpha$, we use
\[
\ell_{\mathrm{abs}}(\pi;x,y)
=
\bigl[\ell_\pi(x,y)+\log(\alpha)\bigr]_+.
\]
An example stops contributing once $\pi(y\mid x) \geq \alpha$. Because the same threshold is applied to every response, this criterion ignores how likely the response was under the initial policy.

\tailsft{} instead measures progress relative to the initial policy $\pi_0$. The corresponding offset loss is
\[
\ell_{\mathrm{off}}(\pi;x,y)
=
\left[
\ell_\pi(x,y)-\ell_{\pi_0}(x,y)+\log\beta
\right]_+,
\]
which stops updating a response once
\[
\pi(y\mid x)\geq\beta\pi_0(y\mid x).
\]
The stopping point therefore adapts to the probability that $\pi_0$ already assigns to each response rather than imposing a common final threshold.

\begin{theorem}[Informal]
Let $\pi_{\mathrm{ERM}}$, $\pi_{\mathrm{ABS},\alpha}$, and $\pi_{\mathrm{OFF},\beta}$ denote the policies obtained using standard SFT, absolute filtering, and offset filtering, respectively.
For any initial policy, rewarding set, and SFT sample, offset filtering can be tuned to achieve coverage no worse than standard SFT or the best absolute threshold:
\[
\inf_{\beta}
\operatorname{Cov}_N(\pi^\star\Vert\pi_{\mathrm{OFF},\beta})
\leq
\min\left\{
\operatorname{Cov}_N(\pi^\star\Vert\pi_{\mathrm{ERM}}),
\inf_{\alpha}
\operatorname{Cov}_N(\pi^\star\Vert\pi_{\mathrm{ABS},\alpha})
\right\}.
\]
The inequality can be strict, and absolute filtering can be worse than standard SFT.
\label{thm:offset-clipping-informal}
\end{theorem}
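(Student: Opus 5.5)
The plan is to make the informal statement precise in a tabular (nonparametric) model. Fix a single prompt, or treat prompts independently. Let the SFT sample be a multiset $\{y_1,\dots,y_n\}\subseteq S$ with empirical counts $c(y)$. Each policy is the minimizer, over $\pi\in\Delta(\mathcal Y)$, of the empirical sum of the corresponding loss. Ties are broken by the natural rule that mass not needed to satisfy the active constraints is spread proportionally to $\pi_0$, which is the gradient-flow limit from $\pi_0$.

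First I compute each minimizer explicitly.
\begin{itemize}
\item \textbf{ERM.} The minimizer is the empirical distribution, $\pi_{\mathrm{ERM}}(y)=c(y)/n$. It puts zero mass on unsampled rewarding responses.
\item \textbf{Absolute filtering.} Any $\pi$ with $\pi(y_i)\ge\alpha$ for all sampled $y_i$ attains zero loss. When $\alpha\le 1/m$, with $m$ the number of distinct sampled responses, the minimizer raises each sampled response to at least $\alpha$ and leaves the remaining mass on $\pi_0$-proportional values, including unrewarding responses. When $\alpha>1/m$ the constraints are infeasible and the solution approaches uniform on the sampled support.
\item \textbf{Offset filtering.} Zero loss is achieved once $\pi(y)\ge\beta\pi_0(y)$ on the sample. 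Taking $\beta=1/\pi_0(\mathrm{sample})$ gives $\pi_{\mathrm{OFF},\beta}(y)=\pi_0(y)/\pi_0(T)$ on the sampled set $T$. More generally, a family indexed by $\beta$ interpolates between $\pi_0$ at $\beta\le1$ and, as $\beta\to\infty$, the $\pi_0$-proportional renormalization on $T$. A further regime, where the constraint saturates unevenly, recovers the empirical-weighted solution.
\end{itemize}

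The key observation is that for $\beta$ in an appropriate range $\pi_{\mathrm{OFF},\beta}$ is a monotone reweighting. It raises sampled rewarding responses by a common factor $\beta$ relative to $\pi_0$ and scales everything else down by a common factor. Meanwhile $\pi^\star(y)=\pi_0(y)/\pi_0(S)$ on $S$.

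Second, I prove the inequality pointwise in $y\in S$ by comparing density ratios $\pi^\star(y)/\pi(y)$. I will show that for each competitor ($\pi_{\mathrm{ERM}}$, or $\pi_{\mathrm{ABS},\alpha}$ for a given $\alpha$) there is a $\beta$ whose offset solution has ratio no larger on every $y\in S$ that the competitor covers at scale $N$. The set of $y$ with ratio $\ge N$ then shrinks, so $\operatorname{Cov}_N$ can only decrease.
\begin{itemize}
\item \textbf{Against ERM.} Unsampled $y\in S$ have infinite ratio under ERM, so only sampled $y$ matter. Taking $\beta$ large (the $\pi_0$-renormalized-on-$T$ limit) gives ratio $\pi_0(T)/\pi_0(S)\le 1<N$ on $T$. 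So offset filtering covers everything in $T$ and nothing else is lost.
\item \textbf{Against absolute filtering.} I split $S$ into sampled and unsampled responses. On unsampled $y$, absolute filtering gives $\pi(y)=\pi_0(y)\cdot r_\alpha$, where $r_\alpha$ is the leftover normalization. I then choose $\beta$ so that the offset solution's leftover factor on unsampled mass is at least $r_\alpha$. Sampled $y$ are still lifted by the factor $\beta\ge1$, enough to keep their ratio below $N$ whenever absolute filtering did. The intuition is that offset filtering spends mass only as needed relative to $\pi_0$, while absolute filtering overspends on responses with small $\pi_0$. Since $\operatorname{Cov}_N$ is measured under $\pi^\star\propto\pi_0$, offset's allocation dominates.
\end{itemize}
Taking the infimum over $\beta$ yields the displayed inequality.

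Third, strictness and the claim that absolute filtering can be worse than ERM come from small explicit examples. Take $S=\{a,b\}$ with $\pi_0(a)\gg\pi_0(b)$ and a sample containing both. Then $\pi^\star$ puts tiny mass on $b$. ERM and offset filtering both cover $a$ well. Absolute filtering with any $\alpha$ either fails to lift $a$ enough, when $\alpha$ is small, or for large $\alpha$ forces near-uniform weight on the sample, inflating $\pi^\star(a)/\pi(a)$. A second example with an unsampled rewarding response separates offset filtering strictly from ERM.

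I expect the main obstacle to be the domination step against arbitrary $\alpha$. Absolute and offset solutions have different active sets, and the minimizer is not unique, so the proof depends on fixing the tie-breaking convention. I would first try a direct case analysis on whether $\alpha\pi_0(\cdot)^{-1}$ exceeds the common lift $\beta$ on each sampled response, choosing $\beta=\max_{y\in T}\alpha/\pi_0(y)$ capped at feasibility. If that fails, I would restrict the formal theorem to the canonical minimizers defined above and verify the ratio comparison coordinatewise.
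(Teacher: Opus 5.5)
Your ERM comparison is fine: at $\beta=1/\pi_0(T)$ the offset solution is $\pi_0$ renormalized on $T$, which has ratio $\pi_0(T)/\pi_0(S)\le 1<N$ on all of $T$. The gap is the domination step against absolute filtering, where your construction of $\beta$ fails. For $1<\beta\le 1/\pi_0(T)$, offset filtering lifts every sampled response by the same factor. It therefore covers either all of $T$ or none of it, depending on whether $\beta>1/(N\pi_0(S))$. Absolute filtering can instead cover a low-$\pi_0$ sampled response through its floor $\alpha$ while spending almost no mass. Your requirement that the offset leftover factor be at least $r_\alpha$ is equivalent to $\beta\pi_0(T)\le\sum_{y\in T}\max(\alpha,r_\alpha\pi_0(y))$, and this can push $\beta$ below the covering threshold.

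Concretely, let $\pi_0(o)=1/2$, $\pi_0(a)=0.44$, $\pi_0(u)=0.05$, $\pi_0(b)=0.01$, $S=\{a,b,u\}$, $T=\{a,b\}$, $N=3/2$, $\alpha=0.02$. Absolute filtering gives $\pi(b)=0.02$ and $r_\alpha=0.98/0.99$; it covers only $b$, so $\operatorname{Cov}_N=0.98$. Your rule forces $\beta<1.02$, where every sampled ratio is $2/\beta>3/2$, so offset filtering has $\operatorname{Cov}_N=1$. Yet any $\beta\in(4/3,1/0.45]$ gives $0.1$. Your fallback $\beta=\max_{y\in T}\alpha/\pi_0(y)$ repairs $T$, but it drives the leftover factor to at most $r_\alpha$, so the coordinatewise comparison can then break on unsampled responses.

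The missing idea is that the two requirements are never needed at the same time. If $N>1/\pi_0(S)$, take $\beta\le 1$: the offset solution is $\pi_0$ itself and covers all of $S$. If $1<N\le 1/\pi_0(S)$, any zero-loss solution $\max(f_y,\lambda\pi_0(y))$ dominates $\lambda\pi_0$ pointwise, so $\lambda\le 1$. Every unsampled rewarding response then has ratio at least $1/\pi_0(S)\ge N$, while infeasible thresholds and ERM give such responses zero mass. Hence every competitor has $\operatorname{Cov}_N\ge\pi^\star(S\setminus T)$, and your ERM choice $\beta=1/\pi_0(T)$ attains this value. The paper proves the inequality through this common lower bound, not through coordinatewise domination.

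Your description of absolute filtering for $\alpha>1/m$ is also wrong, and this breaks the separation examples. There the minimizer is unique and equals $\min(\alpha,\lambda\hat\mu(y))$ on $T$, where $\hat\mu$ is the empirical distribution. It moves from uniform as $\alpha\downarrow 1/m$ to exactly ERM once $\alpha\ge\max_y\hat\mu(y)$; for $\alpha\ge 1$ the hinge never activates, so the loss is the ERM loss plus a constant. Hence $\inf_\alpha\operatorname{Cov}_N(\pi^\star\Vert\pi_{\mathrm{ABS},\alpha})\le\operatorname{Cov}_N(\pi^\star\Vert\pi_{\mathrm{ERM}})$ always. ``Absolute filtering is worse than ERM'' can therefore only hold over thresholds where zero loss is attainable, $\alpha\le 1/m$, which is how the paper states it. Your two-response example does give that restricted claim, e.g.\ when $\pi_0(a)\le 1/2$ and $N\le 2\pi^\star(a)$.

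The same fact removes strictness from that example. If ERM covers $a$ and $\hat\mu(a)>1/2$, any $\alpha\in(\max\{1/2,\pi^\star(a)/N\},\min\{\hat\mu(a),1-\pi^\star(b)/N\})$ yields $\pi(a)=\alpha$ and $\pi(b)=1-\alpha$, which covers both. An unsampled rewarding response cannot supply strictness either. Offset filtering covers it only when $N>1/\pi_0(S)$, and there any $\alpha\le\min_{y\in T}\pi_0(y)$ also returns $\pi_0$. Likewise, offset filtering tends to ERM as $\beta\to\infty$, not to the renormalization on $T$; that renormalization is attained exactly at $\beta=1/\pi_0(T)$.

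A strict separation needs a single instance with $1<N<1/\pi_0(S)$ and many distinct sampled responses that ERM under-covers, for example ones seen once with $1/n\le\pi^\star(y)/N$. No threshold can lift them, because the floor $\alpha\le 1/|T|$, or the bound $1/|T|$ on singleton mass in the infeasible regime, is spread over $|T|=\Theta(n)$ responses. The paper builds this with $n/4$ responses of expert mass $2/n$ and $N=4/3$, using McDiarmid bounds to get $|T|\ge 2n/3$ and $\Omega(n)$ singletons.
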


The theorem formalizes the role of the initial policy as more than merely an initialization. In this setting, $\pi_0$ contains useful information about how probability should be distributed among rewarding responses. Standard SFT can overwrite this structure by fitting the empirical frequencies of a finite SFT sample, while absolute filtering pushes observed responses toward a common probability threshold regardless of where they started. Offset filtering instead uses each response's initial probability to determine when further training is unnecessary. By preserving more of the useful structure already present in $\pi_0$, it can achieve strictly better coverage. The full statement and proof appear in~\cref{app:theory}.

\paragraph{Connections to prior work}
\tailsft{} is most closely related to direct coverage optimization and loss-based data filtering. \citet{chen2026rethinking} introduce the direct coverage optimization loss
\[
\ell_{\mathrm{DCO}}(\pi;x,y)
=
-\log\left(1-\left(1-\pi(y\mid x)\right)^K\right),
\]
which directly optimizes \passat{K} when $y$ is the final solution. For large $K$, its gradient becomes small once $\pi(y\mid x)$ is sufficiently large, making it a soft analogue for filtering without reference to the initial policy. \Cref{thm:offset-clipping-informal} shows that, in our stylized setting, offset filtering can achieve coverage no worse than the best absolute threshold and can be strictly better.
\tailsft{} is also related to RHO-LOSS and subsequent variants, though these methods were designed and tested primarily to improve pretraining efficiency~\citep{mindermann2022prioritized,thirukovalluru2024sequence,lin2024rho1,zhao2026bounded}. These approaches prioritize examples or tokens according to their potential loss reduction. \tailsft{}, by contrast, is designed to improve coverage for a subsequent RL stage and applies filtering at the sequence level. See~\cref{app:related_work} for further discussion.




\section{Language Model Experiments}
\label{sec:lm}
The results in~\Cref{sec:tailsft} suggest that \tailsft{} can preserve reward-bearing responses more effectively than standard SFT. We now evaluate whether this behavior carries over to pretrained language models. We first compare standard SFT and \tailsft{} on math and code tasks. The variation across these results then allows us to study when preserving information from the base model is most useful. Finally, we follow matched SFT checkpoints through GRPO to test whether higher coverage before RL translates into stronger post-RL performance. Throughout, we use \passat{16} as an empirical measure of coverage.

\subsection{SFT Results}
\label{sec:lm-sft-results}

\begin{table}[t]
\centering
\footnotesize
\setlength{\tabcolsep}{5pt}
\renewcommand{\arraystretch}{1.2}
\providecommand{\dgain}[1]{\textcolor{ForestGreen}{#1}}
\providecommand{\dloss}[1]{\textcolor{BrickRed}{#1}}
\providecommand{\dnull}[1]{\textcolor{yellow!65!black}{#1}}
\newcommand{\val}[2]{$#1_{\,\pm#2}$}
\newcommand{\valn}[1]{$#1$}
\begin{tabular}{@{}ll rrr rrr@{}}
\toprule
 & & \multicolumn{3}{c}{\textbf{pass@1}} & \multicolumn{3}{c}{\textbf{pass@16}} \\
\cmidrule(lr){3-5}\cmidrule(lr){6-8}
SFT data & Benchmark & Standard & \tailsft{} & $\Delta$ & Standard & \tailsft{} & $\Delta$ \\
\midrule
OMI & AIME 2022--2025 ($n{=}120$) & \val{3.65}{0.37} & \val{4.03}{0.02} & \dgain{$+0.37$} & \val{15.24}{2.02} & \val{18.31}{0.57} & \dgain{$+3.07$} \\
           & MATH-500 Level 5 ($n{=}134$) & \val{24.80}{0.40} & \val{25.16}{0.40} & \dnull{$+0.36$} & \val{66.42}{0.00} & \val{69.15}{0.86} & \dgain{$+2.74$} \\
           & OMEGA-500 ($n{=}500$) & \val{6.58}{0.50} & \val{6.51}{0.26} & \dnull{$-0.06$} & \val{32.80}{2.25} & \val{32.60}{1.56} & \dnull{$-0.20$} \\
\midrule
BigCode   & MBPP+ ($n{=}378$)          & \val{53.04}{0.93} & \val{51.36}{0.14} & \dloss{$-1.68$} & \val{74.69}{1.10} & \val{78.84}{1.06} & \dgain{$+4.14$} \\
          & HumanEval+ ($n{=}164$)     & \val{45.06}{0.57} & \val{46.33}{1.07} & \dgain{$+1.27$} & \val{76.63}{0.70} & \val{80.08}{0.93} & \dgain{$+3.46$} \\
          & CruxEval-I ($n{=}800$)     & \val{29.79}{0.48} & \val{30.43}{0.15} & \dgain{$+0.64$} & \val{61.71}{2.27} & \val{65.79}{0.26} & \dgain{$+4.08$} \\
          & CruxEval-O ($n{=}800$)     & \val{4.16}{0.44}  & \val{13.18}{0.30} & \dgain{$+9.02$} & \val{24.21}{2.38} & \val{41.00}{1.35} & \dgain{$+16.79$} \\
          & LiveCodeBench ($n{=}612$)  & \val{10.74}{0.46} & \val{11.49}{0.38} & \dgain{$+0.75$} & \val{30.39}{0.86} & \val{33.12}{0.34} & \dgain{$+2.72$} \\
\addlinespace[2pt]
Magicoder & MBPP+ ($n{=}378$)          & \val{55.35}{0.47} & \val{54.60}{0.55} & \dloss{$-0.75$} & \val{75.31}{0.93} & \val{78.66}{1.25} & \dgain{$+3.35$} \\
          & HumanEval+ ($n{=}164$)     & \val{48.49}{0.58} & \val{47.69}{0.19} & \dloss{$-0.80$} & \val{77.85}{0.70} & \val{78.66}{1.83} & \dnull{$+0.81$} \\
          & CruxEval-I ($n{=}800$)     & \val{28.50}{0.13} & \val{26.48}{0.45} & \dloss{$-2.02$} & \val{59.75}{0.62} & \val{68.08}{0.94} & \dgain{$+8.33$} \\
          & CruxEval-O ($n{=}800$)     & \val{12.86}{0.19} & \val{16.16}{0.53} & \dgain{$+3.30$} & \val{38.08}{0.94} & \val{47.92}{1.66} & \dgain{$+9.83$} \\
          & LiveCodeBench ($n{=}612$)  & \val{14.84}{0.11} & \val{14.32}{0.38} & \dloss{$-0.51$} & \val{31.81}{0.34} & \val{33.22}{0.34} & \dgain{$+1.42$} \\
\addlinespace[2pt]
OCI       & MBPP+ ($n{=}378$) & \val{58.26}{0.88} & \val{55.81}{0.06} & \dloss{$-2.44$} & \val{81.22}{0.75} & \val{82.36}{0.81} & \dgain{$+1.15$} \\
          & HumanEval+ ($n{=}164$) & \val{54.76}{1.16} & \val{51.94}{1.83} & \dloss{$-2.82$} & \val{85.98}{1.61} & \val{83.23}{2.16} & \dloss{$-2.74$} \\
          & CruxEval-I ($n{=}800$)     & \val{29.46}{0.46} & \val{30.17}{0.64} & \dnull{$+0.71$} & \val{68.08}{1.77} & \val{71.58}{1.91} & \dgain{$+3.50$} \\
          & CruxEval-O ($n{=}800$)     & \val{9.30}{0.49}  & \val{10.84}{0.37} & \dgain{$+1.54$} & \val{40.12}{0.00} & \val{44.46}{0.47} & \dgain{$+4.33$} \\
          & LiveCodeBench ($n{=}612$)  & \val{12.24}{0.36} & \val{11.49}{0.84} & \dnull{$-0.75$} & \val{34.59}{0.90} & \val{33.50}{0.16} & \dloss{$-1.09$} \\
\bottomrule
\end{tabular}
\caption{
\textbf{\tailsft{} improves coverage over standard SFT.}
Across 18 model--benchmark pairs, \tailsft{} often improves \passat{16}, while changes in \passat{1} are mixed.
Values are percentages reported as mean $\pm$ standard deviation over three seeds; $\Delta$ denotes \tailsft{} minus Standard SFT in percentage points, shown in green when positive, red when negative, and yellow when within one standard deviation.
We later establish a sufficient condition (\Cref{def:coverage-ratio}) that accurately predicts the failure modes of \tailsft{} exhibited in this table (\Cref{fig:lm-clip-predictor}).
MATH-500 Level 5 refers to the subset of the benchmark with the highest difficulty.}
\label{tab:sft-main}
\end{table}

\paragraph{Setup}
We fine-tune the OLMo-3 7B base model~\citepalias{olmo2026olmo3} separately on domain-specific math and code instruction data, giving 18 dataset--benchmark pairs. For math, we train on a $350$k-example subset of OpenMathInstruct-2 (OMI, \citet{toshniwal2024openmathinstruct}), decontaminated against MATH-500. We evaluate with the OLMES protocol~\citep{gu2025olmes} on the 2022--2025 AIME problems, the hardest level of MATH-500, and OMEGA-500. For code, we train separately on Magicoder~\citep{wei2024magicoder}, BigCode Self-OSS-Instruct~\citep{wei2024selfcodealign}, and OpenCodeInstruct (OCI, \citet{ahmad2025opencodeinstruct}). We evaluate on MBPP+, HumanEval+, CruxEval-I/O, and LiveCodeBench. We focus on benchmarks where performance is not saturated.

We report \passat{16} using the task-specific OLMES sampling protocol and average results over three seed sets. Results are reported in \Cref{tab:sft-main} and \Cref{app:sft} contains experimental details.

\paragraph{\tailsft{} improves coverage in most settings}
Across the 18 dataset--benchmark pairs in~\Cref{tab:sft-main}, \tailsft{} improves \passat{16} in 15 cases, while changes in \passat{1} are mixed. OMEGA-500 is essentially unchanged, with an absolute difference of $-0.20\%$. The largest gain occurs on CruxEval-O after SFT on BigCode, where \passat{16} increases by an absolute $16.79\%$. With Magicoder, \passat{16} increases by $8.33\%$ on CruxEval-I and $9.83\%$ on CruxEval-O. On AIME, it increases by $3.07\%$. On MBPP+, absolute gains are $4.14\%$, $3.35\%$, and $1.15\%$ for BigCode, Magicoder, and OCI. Overall, improvements are substantially more consistent at large $K$ than at $K=1$, in line with the coverage motivation for \tailsft{}.

\subsection{Coverage Ratio Diagnostic}
\label{sec:predicting}

\begin{figure}[t]
\centering
\includegraphics[width=0.9\textwidth]{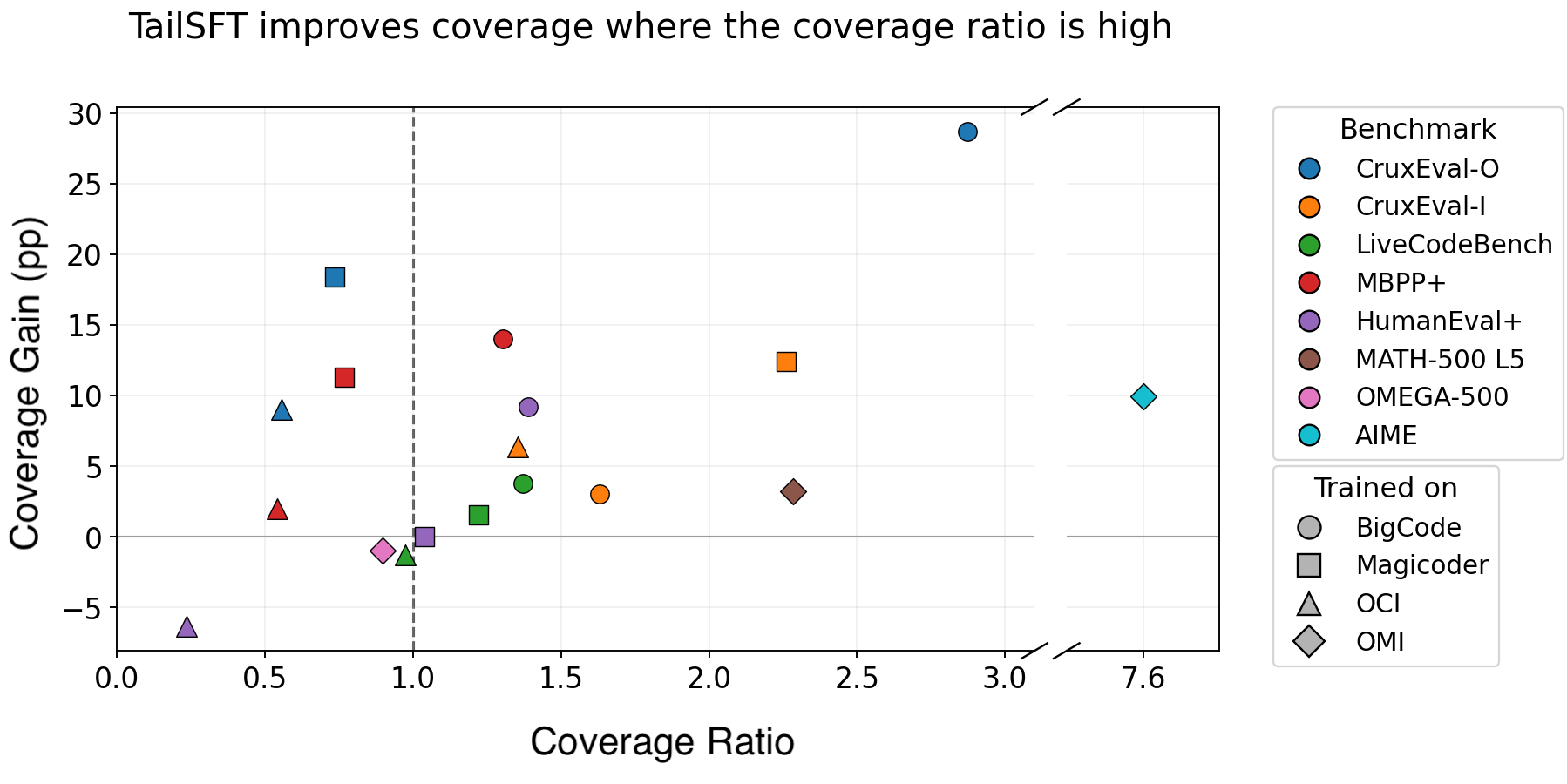}
\vspace{-0.4cm}
\caption{\textbf{When standard SFT loses more coverage than it gains, \tailsft{} preserves or improves coverage.} The vertical line marks $\rho_{16}=1$, where the estimated coverage lost and gained by standard SFT are equal. Of the 11 evaluated settings with $\rho_{16}>1$, 10 have positive coverage gains from \tailsft{} and one is essentially unchanged. Several settings with $\rho_{16}<1$ also improve. Coverage gain and ratio are computed on the base-reachable set (\Cref{def:coverage-ratio}).}
\label{fig:lm-clip-predictor}
\vspace{-0.4cm}
\end{figure}

The variation across these settings motivates a closer look at when \tailsft{} should perform well. In the graph-navigation task, standard SFT increased probability on the majority shard while reducing coverage of a rewarded path that was already represented by the initial policy. 
Based on this, we ask whether SFT in the language model setting exhibits a measurable loss of coverage relative to the base model, and whether this coverage loss is predictive of the benefit of \tailsft{}.

We measure how standard SFT changes coverage on problems that are already reachable from the base model. Restricting attention to problems that are neither effectively unsolved nor already saturated under the base model, we use \passat{1} to estimate the \passat{16} that would result from independent sampling. We then aggregate the decreases and increases induced by standard SFT---their ratio measures whether standard SFT has lost more base-reachable coverage than it has gained.

\begin{definition}[Coverage ratio]
\label{def:coverage-ratio}
For problem $i$, model $\pi$, and
$K\in\{1,16\}$, let $P_{i,K}(\pi)$ denote the empirical \passat{K}, averaged
over seeds. We convert \passat{1} to an estimated \passat{16} value using
$f_{16}(p):=1-(1-p)^{16}$.
The base-reachable set is
$\mathcal R_0:=\{i:0.05<P_{i,16}(\pi_0)<0.95\}$. The coverage lost and gained
by standard SFT on this set are
\[
\begin{aligned}
L
&:=
\sum_{i\in\mathcal R_0}
\Bigl[
    f_{16}\bigl(P_{i,1}(\pi_0)\bigr)
    -
    f_{16}\bigl(P_{i,1}(\pi_{\mathrm{SFT}})\bigr)
\Bigr]_+,
\\
G
&:=
\sum_{i\in\mathcal R_0}
\Bigl[
    f_{16}\bigl(P_{i,1}(\pi_{\mathrm{SFT}})\bigr)
    -
    f_{16}\bigl(P_{i,1}(\pi_0)\bigr)
\Bigr]_+.
\end{aligned}
\]
The coverage ratio is $\rho_{16}:=L/G$. If $G=0$, we set
$\rho_{16}:=\infty$.
\end{definition}

Computing $\rho_{16}$ requires only the base model and a single standard SFT run. A value $\rho_{16}>1$ means that, on the base-reachable set, the estimated coverage lost during standard SFT exceeds the estimated coverage gained. This is the setting most directly targeted by \tailsft{}, which reduces training on responses that have already improved and thereby limits probability shifting away from responses represented by the base model.

\paragraph{The coverage ratio identifies a regime where \tailsft{} consistently preserves or improves coverage}
In~\Cref{fig:lm-clip-predictor}, all 11 dataset--benchmark pairs satisfying $\rho_{16}>1$ have nonnegative coverage gains (improvement in \passat{16} on the base-reachable set $\mathcal{R}_0$) from \tailsft{}. Ten improve and one is essentially unchanged, with gains reaching an absolute $28.69\%$ for BigCode evaluated on CruxEval-O. The condition is not necessary for improvement: several settings with $\rho_{16}<1$ also benefit from \tailsft{}. Thus, the coverage ratio provides a conservative way to identify settings in which \tailsft{} preserves or improves coverage. \Cref{app:coverage-ratio-details} contains further details.

\subsection{GRPO Results}

The coverage principle predicts that a higher-coverage initialization should expose RL to rewarding responses more often~\citep{chen2025coverageprinciple}. This section therefore tests whether the additional coverage resulting from \tailsft{} translates into better post-RL performance.

\paragraph{Setup}
We initialize GRPO from the standard SFT and \tailsft{} checkpoints evaluated in~\Cref{tab:sft-main}. For math reasoning, we train on the MATH training split with MATH-500 held out and evaluate on MATH-500 Level 5 and AIME. For code, we train on the MBPP+ training split and evaluate on its test split. The GRPO procedure and hyperparameters are held fixed within each comparison. The main comparisons use $4$ rollouts per prompt and an actor learning rate of $2\times10^{-5}$. Evaluation follows the OLMES protocol over three seed sets.\looseness=-1 

\paragraph{Improved coverage yields better post-RL performance}
Initializing from \tailsft{} improves post-RL \passat{1} in every matched comparison in~\Cref{tab:grpo-results}, with absolute improvements ranging between $1.21\%$ and $3.93\%$. Our coding experiments make the role of initialization especially salient---each \tailsft{} checkpoint has lower \passat{1} than its standard SFT counterpart before RL, but has higher \passat{1} afterward. Broader initial coverage indeed results in higher single-sample accuracy after GRPO. These results are consistent with the relationship between large-$K$ performance and post-RL outcomes observed by~\citet{kang2026quagmires}.

\begin{table}[t]
\centering
\footnotesize
\setlength{\tabcolsep}{5pt}
\renewcommand{\arraystretch}{1.2}
\providecommand{\dgain}[1]{\textcolor{ForestGreen}{#1}}
\providecommand{\dnull}[1]{\textcolor{yellow!65!black}{#1}}
\newcommand{\val}[2]{$#1_{\,\pm#2}$}
\begin{tabular}{@{}ll rrr rrr@{}}
\toprule
 & & \multicolumn{3}{c}{\textbf{pass@1}} & \multicolumn{3}{c}{\textbf{pass@16}} \\
\cmidrule(lr){3-5}\cmidrule(lr){6-8}
SFT data & Benchmark & Standard & TailSFT & $\Delta$ & Standard & TailSFT & $\Delta$ \\
\midrule
\multicolumn{8}{@{}l}{\textit{GRPO on MATH}}\\
\addlinespace[1.5pt]
OMI       & MATH-500 Level 5 ($n{=}134$) & \val{57.70}{0.44} & \val{60.26}{1.02} & \dgain{$+2.56$} & \val{83.83}{1.72} & \val{87.06}{0.86} & \dgain{$+3.23$} \\
          & AIME 2022--2025 ($n{=}120$) & \val{14.40}{0.26} & \val{15.61}{0.44} & \dgain{$+1.21$} & \val{32.76}{0.88} & \val{36.06}{0.98} & \dgain{$+3.30$} \\
\midrule
\multicolumn{8}{@{}l}{\textit{GRPO on MBPP+}}\\
\addlinespace[1.5pt]
BigCode   & MBPP+ ($n{=}378$)           & \val{69.57}{0.29} & \val{73.50}{0.70} & \dgain{$+3.93$} & \val{75.93}{0.00} & \val{78.66}{0.15} & \dgain{$+2.73$} \\
Magicoder & MBPP+ ($n{=}378$)           & \val{70.52}{0.25} & \val{73.24}{0.09} & \dgain{$+2.72$} & \val{78.22}{0.61} & \val{80.60}{0.55} & \dgain{$+2.38$} \\
OCI       & MBPP+ ($n{=}378$) & \val{74.67}{0.08} & \val{76.30}{0.05} & \dgain{$+1.62$} & \val{84.22}{0.40} & \val{84.04}{0.15} & \dnull{$-0.18$} \\
\bottomrule
\end{tabular}
\caption{
\textbf{Initializing from the \tailsft{} model improves post-GRPO performance across math and code.}
GRPO is initialized from the Standard SFT and \tailsft{} checkpoints evaluated in \cref{tab:sft-main}, and rows are grouped by the corpus used for GRPO training: the MATH train split, excluding MATH-500, for the math rows, and the MBPP+ train split for the code rows.
The SFT data column indicates the corpus used to train the initialization for each GRPO run.
Values are percentages reported as mean $\pm$ standard deviation over three seeds, and $\Delta$ is improvement from \tailsft{}.
Experiment details are in \Cref{app:grpo}.
}
\label{tab:grpo-results}
\end{table}

\paragraph{Improved coverage drives faster learning}
\Cref{fig:grpo-reward-trajectories} shows that the difference between the two initializations appears early in training, with \tailsft{} runs beginning with lower reward but the gap quickly closing. In some settings, early reward increases up to $2.5\times$ faster than for the corresponding standard SFT run. After RL, \tailsft{} retains higher \passat{16} in four of the five matched comparisons, and is approximately tied in the fifth. The coverage preserved during SFT is therefore available to RL early in training and ultimately translates into higher \passat{1}.




\section{Discussion}
\label{sec:discussion}
We present \tailsft{} as a drop-in replacement for standard SFT that filters already-fit sequences to improve coverage after the SFT stage, which produces improvements in post-RL performance. \tailsft{} is derived using a combination of theoretical analysis and controlled empirics, and significantly outperforms standard SFT in our language modeling experiments. 
More broadly, TailSFT uses the coverage principle to target the interaction between the SFT and RL stages of language model training~\citep{chen2025coverageprinciple}. We believe our results motivate a shift in how language models should be optimized, toward targeting the full training trajectory rather than fitting each stage independently.



\bibliography{bibliography}

@article{deepseekai2025r1,
  title   = {{DeepSeek-R1} incentivizes reasoning in {LLM}s through reinforcement learning},
  author  = {Guo, Daya and Yang, Dejian and Zhang, Haowei and Song, Junxiao and Wang, Peiyi and Zhu, Qihao and Xu, Runxin and Zhang, Ruoyu and Ma, Shirong and Bi, Xiao and others},
  journal = {Nature},
  year    = {2025}
}

@inproceedings{yue2025beyondbase,
  title     = {Does Reinforcement Learning Really Incentivize Reasoning Capacity in {LLM}s Beyond the Base Model?},
  author    = {Yue, Yang and Chen, Zhiqi and Lu, Rui and Zhao, Andrew and Wang, Zhaokai and Yue, Yang and Song, Shiji and Huang, Gao},
  booktitle = {Advances in Neural Information Processing Systems},
  year      = {2025}
}

@inproceedings{wu2025invisibleleash,
  title     = {The Invisible Leash: Why {RLVR} May Not Escape Its Origin},
  author    = {Wu, Fang and Choi, Yejin},
  booktitle = {AI for Math Workshop},
  year      = {2025}
}

@inproceedings{zhao2025echochamber,
  title     = {Echo Chamber: {RL} Post-training Amplifies Behaviors Learned in Pretraining},
  author    = {Zhao, Rosie and Meterez, Alexandru and Kakade, Sham M. and Pehlevan, Cengiz and Jelassi, Samy and Malach, Eran},
  booktitle = {Conference on Language Modeling},
  year      = {2025}
}

@inproceedings{wen2025correctreasoning,
  title     = {Reinforcement Learning with Verifiable Rewards Implicitly Incentivizes Correct Reasoning in Base {LLM}s},
  author    = {Wen, Xumeng and Liu, Zihan and Zheng, Shun and Ye, Shengyu and Wu, Zhirong and Wang, Yang and Xu, Zhijian and Liang, Xiao and Li, Junjie and Miao, Ziming and Bian, Jiang and Yang, Mao},
  booktitle = {International Conference on Learning Representations},
  year      = {2026}
}

@inproceedings{chen2025coverageprinciple,
  title     = {The Coverage Principle: How Pre-Training Enables Post-Training},
  author    = {Chen, Fan and Huang, Audrey and Golowich, Noah and Malladi, Sadhika and Block, Adam and Ash, Jordan and Krishnamurthy, Akshay and Foster, Dylan},
  booktitle = {International Conference on Learning Representations},
  year      = {2026}
}

@inproceedings{foster2025goodfoundation,
  title     = {Is a Good Foundation Necessary for Efficient Reinforcement Learning? The Computational Role of the Base Model in Exploration},
  author    = {Foster, Dylan J. and Mhammedi, Zakaria and Rohatgi, Dhruv},
  booktitle = {Conference on Learning Theory},
  year      = {2025}
}

@inproceedings{zhang2025interplay,
  title     = {On the Interplay of Pre-Training, Mid-Training, and {RL} on Reasoning Language Models},
  author    = {Zhang, Charlie and Neubig, Graham and Yue, Xiang},
  booktitle = {International Conference on Machine Learning},
  year      = {2026}
}

@inproceedings{wang2025octothinker,
  title     = {{OctoThinker}: Mid-Training Incentivizes Reinforcement Learning Scaling},
  author    = {Wang, Zengzhi and Zhou, Fan and Li, Xuefeng and Liu, Pengfei},
  booktitle = {AI for Math Workshop},
  year      = {2025}
}

@inproceedings{springer2025overtraining,
  title     = {Overtrained Language Models Are Harder to Fine-Tune},
  author    = {Springer, Jacob Mitchell and Goyal, Sachin and Wen, Kaiyue and Kumar, Tanishq and Yue, Xiang and Malladi, Sadhika and Neubig, Graham and Raghunathan, Aditi},
  booktitle = {International Conference on Machine Learning},
  year      = {2025}
}

@misc{shao2024deepseekmath,
  title        = {{DeepSeekMath}: Pushing the Limits of Mathematical Reasoning in Open Language Models},
  author       = {Shao, Zhihong and Wang, Peiyi and Zhu, Qihao and Xu, Runxin and Song, Junxiao and Bi, Xiao and Zhang, Haowei and Zhang, Mingchuan and Li, Y. K. and Wu, Y. and Guo, Daya},
  howpublished = {arXiv:2402.03300},
  year         = {2024}
}

@misc{qin2025curatedsft,
  title        = {Supervised Fine Tuning on Curated Data is Reinforcement Learning (and can be improved)},
  author       = {Qin, Chongli and Springenberg, Jost Tobias},
  howpublished = {arXiv:2507.12856},
  year         = {2025}
}

@inproceedings{lin2024rho1,
  title     = {Not All Tokens Are What You Need for Pretraining},
  author    = {Lin, Zhenghao and Gou, Zhibin and Gong, Yeyun and Liu, Xiao and Shen, Yelong and Xu, Ruochen and Lin, Chen and Yang, Yujiu and Jiao, Jian and Duan, Nan and Chen, Weizhu},
  booktitle = {Advances in Neural Information Processing Systems},
  year      = {2024}
}

@inproceedings{kang2026quagmires,
  title     = {Quagmires in {SFT-RL} Post-Training: When High {SFT} Scores Mislead and What to Use Instead},
  author    = {Kang, Feiyang and Kuchnik, Michael and Padthe, Karthik and Vlastelica, Marin and Jia, Ruoxi and Wu, Carole-Jean and Ardalani, Newsha},
  booktitle = {International Conference on Learning Representations},
  year      = {2026}
}

@inproceedings{watts2026sharpness,
  title     = {Sharpness-Aware Pretraining Mitigates Catastrophic Forgetting},
  author    = {Watts, Ishaan and Li, Catherine and Goyal, Sachin and Springer, Jacob Mitchell and Raghunathan, Aditi},
  booktitle = {International Conference on Machine Learning},
  year      = {2026}
}

@inproceedings{yoshihara2025twostage,
  title     = {A Practical Two-Stage Recipe for Mathematical {LLM}s: Maximizing Accuracy with {SFT} and Efficiency with Reinforcement Learning},
  author    = {Yoshihara, Hiroshi and Yamaguchi, Taiki and Inoue, Yuichi},
  booktitle = {AI for Math Workshop},
  year      = {2025}
}

@misc{niu2026nondecoupling,
  title        = {On the Non-decoupling of Supervised Fine-tuning and Reinforcement Learning in Post-training},
  author       = {Niu, Xueyan and Bai, Bo and Han, Wei and Zhang, Weixi},
  howpublished = {arXiv:2601.07389},
  year         = {2026}
}

@inproceedings{bansal2026rl,
  title     = {{RL} Excursions during Pre-training: How early is too early for On-policy Learning?},
  author    = {Bansal, Rachit and Mohri, Clara and Qin, Tian and Alvarez-Melis, David and Kakade, Sham M.},
  booktitle = {Workshop on Scaling Post-Training for {LLM}s},
  year      = {2026}
}

@inproceedings{fu2026srft,
  title     = {{SRFT}: A Single-Stage Method with Supervised and Reinforcement Fine-Tuning for Reasoning},
  author    = {Fu, Yuqian and Chen, Tinghong and Chai, Jiajun and Wang, Xihuai and Tu, Songjun and Yin, Guojun and Lin, Wei and Zhang, Qichao and Zhu, Yuanheng and Zhao, Dongbin},
  booktitle = {International Conference on Learning Representations},
  year      = {2026}
}

@inproceedings{huang2026prefixrft,
  title     = {Blending Supervised and Reinforcement Fine-Tuning with Prefix Sampling},
  author    = {Huang, Zeyu and Cheng, Tianhao and Qiu, Zihan and Wang, Zili and Xu, Yinghui and Ponti, Edoardo M. and Titov, Ivan},
  booktitle = {International Conference on Machine Learning},
  year      = {2026}
}

@inproceedings{zhang2026chord,
  title     = {On-Policy {RL} Meets Off-Policy Experts: Harmonizing Supervised Fine-Tuning and Reinforcement Learning via Dynamic Weighting},
  author    = {Zhang, Wenhao and Xie, Yuexiang and Sun, Yuchang and Chen, Yanxi and Wang, Guoyin and Li, Yaliang and Ding, Bolin and Zhou, Jingren},
  booktitle = {International Conference on Learning Representations},
  year      = {2026}
}

@inproceedings{liu2023implicitbias,
  title     = {Same Pre-training Loss, Better Downstream: Implicit Bias Matters for Language Models},
  author    = {Liu, Hong and Xie, Sang Michael and Li, Zhiyuan and Ma, Tengyu},
  booktitle = {International Conference on Machine Learning},
  year      = {2023}
}

@misc{zeng2025indicators,
  title        = {Can Pre-training Indicators Reliably Predict Fine-tuning Outcomes of {LLM}s?},
  author       = {Zeng, Hansi and Hui, Kai and Zhuang, Honglei and Qin, Zhen and Yue, Zhenrui and Zamani, Hamed and Alon, Dana},
  howpublished = {arXiv:2504.12491},
  year         = {2025}
}

@inproceedings{lourie2025scaling,
  title     = {Scaling Laws Are Unreliable for Downstream Tasks: A Reality Check},
  author    = {Lourie, Nicholas and Hu, Michael Y. and Cho, Kyunghyun},
  booktitle = {Findings of the Association for Computational Linguistics},
  year      = {2025}
}

@misc{brown2024monkeys,
  title        = {Large Language Monkeys: Scaling Inference Compute with Repeated Sampling},
  author       = {Brown, Bradley and Juravsky, Jordan and Ehrlich, Ryan and Clark, Ronald and Le, Quoc V. and R{\'e}, Christopher and Mirhoseini, Azalia},
  howpublished = {arXiv:2407.21787},
  year         = {2024}
}

@inproceedings{farahmand2010error,
  title     = {Error Propagation for Approximate Policy and Value Iteration},
  author    = {Farahmand, Amir-massoud and Szepesv{\'a}ri, Csaba and Munos, R{\'e}mi},
  booktitle = {Advances in Neural Information Processing Systems},
  year      = {2010}
}

@inproceedings{chen2019information,
  title     = {Information-Theoretic Considerations in Batch Reinforcement Learning},
  author    = {Chen, Jinglin and Jiang, Nan},
  booktitle = {International Conference on Machine Learning},
  year      = {2019}
}

@inproceedings{xie2020qstar,
  title     = {{Q*} Approximation Schemes for Batch Reinforcement Learning: A Theoretical Comparison},
  author    = {Xie, Tengyang and Jiang, Nan},
  booktitle = {Conference on Uncertainty in Artificial Intelligence},
  year      = {2020}
}

@inproceedings{jin2021pessimism,
  title     = {Is Pessimism Provably Efficient for Offline {RL}?},
  author    = {Jin, Ying and Yang, Zhuoran and Wang, Zhaoran},
  booktitle = {International Conference on Machine Learning},
  year      = {2021}
}

@inproceedings{foster2022offline,
  title     = {Offline Reinforcement Learning: Fundamental Barriers for Value Function Approximation},
  author    = {Foster, Dylan J. and Krishnamurthy, Akshay and Simchi-Levi, David and Xu, Yunzong},
  booktitle = {Conference on Learning Theory},
  year      = {2022}
}

@article{jiang2025offlinerl,
  title   = {Offline Reinforcement Learning in Large State Spaces: Algorithms and Guarantees},
  author  = {Jiang, Nan and Xie, Tengyang},
  journal = {Statistical Science},
  year    = {2025}
}

@inproceedings{xie2023coverage,
  title     = {The Role of Coverage in Online Reinforcement Learning},
  author    = {Xie, Tengyang and Foster, Dylan J. and Bai, Yu and Jiang, Nan and Kakade, Sham M.},
  booktitle = {International Conference on Learning Representations},
  year      = {2023}
}

@inproceedings{jin2025oodforgetting,
  title     = {{RL} Fine-Tuning Heals the {OOD} Forgetting in {SFT}},
  author    = {Jin, Hangzhan and Luan, Sitao and Lyu, Sicheng and Rabusseau, Guillaume and Precup, Doina and Hamdaqa, Mohammad},
  booktitle = {Workshop on Foundations of Reasoning in Language Models},
  year      = {2025}
}

@inproceedings{wei2024selfcodealign,
  title     = {SelfCodeAlign: Self-Alignment for Code Generation},
  author    = {Yuxiang Wei and Federico Cassano and Jiawei Liu and Yifeng Ding and Naman Jain and Zachary Mueller and Harm de Vries and Leandro Von Werra and Arjun Guha and LINGMING ZHANG},
  booktitle = {Advances in Neural Information Processing Systems},
  year      = {2024}
}

@misc{ahmad2025opencodeinstruct,
  title        = {OpenCodeInstruct: A Large-scale Instruction Tuning Dataset for Code LLMs},
  author       = {Wasi Uddin Ahmad and Aleksander Ficek and Mehrzad Samadi and Jocelyn Huang and Vahid Noroozi and Somshubra Majumdar and Boris Ginsburg},
  howpublished = {arXiv:2504.04030},
  year         = {2025}
}

@inproceedings{wei2024magicoder,
  title     = {Magicoder: Empowering Code Generation with {OSS}-Instruct},
  author    = {Yuxiang Wei and Zhe Wang and Jiawei Liu and Yifeng Ding and LINGMING ZHANG},
  booktitle = {International Conference on Machine Learning},
  year      = {2024}
}

@inproceedings{gu2025olmes,
  title     = {{OLMES}: A Standard for Language Model Evaluations},
  author    = {Gu, Yuling and Tafjord, Oyvind and Kuehl, Bailey and Haddad, Dany and Dodge, Jesse and Hajishirzi, Hannaneh},
  booktitle = {Findings of the Association for Computational Linguistics},
  year      = {2025}
}

@inproceedings{toshniwal2024openmathinstruct,
  title     = {OpenMathInstruct-2: Accelerating {AI} for Math with Massive Open-Source Instruction Data},
  author    = {Shubham Toshniwal and Wei Du and Ivan Moshkov and Branislav Kisacanin and Alexan Ayrapetyan and Igor Gitman},
  booktitle = {Workshop on Mathematical Reasoning and AI},
  year      = {2024}
}

@misc{olmo2026olmo3,
  title        = {Olmo 3},
  author       = {{Team Olmo} and {Allyson Ettinger} and {Amanda Bertsch} and {Bailey Kuehl} and {David Graham} and {David Heineman} and {Dirk Groeneveld} and {Faeze Brahman} and {Finbarr Timbers} and {Hamish Ivison} and {Jacob Morrison} and {Jake Poznanski} and {Kyle Lo} and {Luca Soldaini} and {Matt Jordan} and {Mayee Chen} and {Michael Noukhovitch} and {Nathan Lambert} and {Pete Walsh} and {Pradeep Dasigi} and {Robert Berry} and {Saumya Malik} and {Saurabh Shah} and {Scott Geng} and {Shane Arora} and {Shashank Gupta} and {Taira Anderson} and {Teng Xiao} and {Tyler Murray} and {Tyler Romero} and {Victoria Graf} and {Akari Asai} and {Akshita Bhagia} and {Alexander Wettig} and {Alisa Liu} and {Aman Rangapur} and {Chloe Anastasiades} and {Costa Huang} and {Dustin Schwenk} and {Harsh Trivedi} and {Ian Magnusson} and {Jaron Lochner} and {Jiacheng Liu} and {Lester James V. Miranda} and {Maarten Sap} and {Malia Morgan} and {Michael Schmitz} and {Michal Guerquin} and {Michael Wilson} and {Regan Huff} and {Ronan Le Bras} and {Rui Xin} and {Rulin Shao} and {Sam Skjonsberg} and {Shannon Zejiang Shen} and {Shuyue Stella Li} and {Tucker Wilde} and {Valentina Pyatkin} and {Will Merrill} and {Yapei Chang} and {Yuling Gu} and {Zhiyuan Zeng} and {Ashish Sabharwal} and {Luke Zettlemoyer} and {Pang Wei Koh} and {Ali Farhadi} and {Noah A. Smith} and {Hannaneh Hajishirzi}},
  howpublished = {arXiv:2512.13961},
  year         = {2026}
}

@inproceedings{pang2025tokencleaning,
  title     = {Token Cleaning: Fine-Grained Data Selection for {LLM} Supervised Fine-Tuning},
  author    = {Pang, Jinlong and Di, Na and Zhu, Zhaowei and Wei, Jiaheng and Cheng, Hao and Qian, Chen and Liu, Yang},
  booktitle = {International Conference on Machine Learning},
  year      = {2025}
}

@inproceedings{huang2025sharpening,
  title     = {Self-Improvement in Language Models: The Sharpening Mechanism},
  author    = {Huang, Audrey and Block, Adam and Foster, Dylan and Rohatgi, Dhruv and Zhang, Cyril and Simchowitz, Max and Ash, Jordan and Krishnamurthy, Akshay},
  booktitle = {International Conference on Learning Representations},
  year      = {2025}
}

@inproceedings{karan2025reasoning,
  title     = {Reasoning with Sampling: Your Base Model is Smarter Than You Think},
  author    = {Karan, Aayush and Du, Yilun},
  booktitle = {International Conference on Learning Representations},
  year      = {2026}
}

@inproceedings{cheng2026isocompute,
  title     = {{IsoCompute} Playbook: Optimally Scaling Sampling Compute for {LLM} {RL}},
  author    = {Cheng, Zhoujun and Xie, Yutao and Qu, Yuxiao and Setlur, Amrith and Hao, Shibo and Pimpalkhute, Varad and Liang, Tongtong and Yao, Feng and Liu, Zhengzhong and Xing, Eric P. and Smith, Virginia and Salakhutdinov, Ruslan and Hu, Zhiting and Killian, Taylor W. and Kumar, Aviral},
  booktitle = {International Conference on Machine Learning},
  year      = {2026}
}

@inproceedings{he2025rewarding,
  title     = {Rewarding the Unlikely: Lifting {GRPO} Beyond Distribution Sharpening},
  author    = {He, Andre Wang and Fried, Daniel and Welleck, Sean},
  booktitle = {Conference on Empirical Methods in Natural Language Processing},
  year      = {2025}
}

@inproceedings{chen2026rethinking,
  title     = {Rethinking Fine-Tuning when Scaling Test-Time Compute: Limiting Confidence Improves Mathematical Reasoning},
  author    = {Chen, Feng and Ravent{\'o}s, Allan and Cheng, Nan and Ganguli, Surya and Druckmann, Shaul},
  booktitle = {Advances in Neural Information Processing Systems},
  year      = {2025}
}

@inproceedings{katharopoulos2018notall,
  title     = {Not All Samples Are Created Equal: Deep Learning with Importance Sampling},
  author    = {Katharopoulos, Angelos and Fleuret, Francois},
  booktitle = {International Conference on Machine Learning},
  year      = {2018}
}

@inproceedings{swayamdipta2020dataset,
  title     = {Dataset Cartography: Mapping and Diagnosing Datasets with Training Dynamics},
  author    = {Swayamdipta, Swabha and Schwartz, Roy and Lourie, Nicholas and Wang, Yizhong and Hajishirzi, Hannaneh and Smith, Noah A. and Choi, Yejin},
  booktitle = {Conference on Empirical Methods in Natural Language Processing},
  year      = {2020}
}

@inproceedings{brandfonbrener2024colorfilter,
  title     = {{CoLoR-Filter}: Conditional Loss Reduction Filtering for Targeted Language Model Pre-training},
  author    = {Brandfonbrener, David and Zhang, Hanlin and Kirsch, Andreas and Schwarz, Jonathan Richard and Kakade, Sham},
  booktitle = {Advances in Neural Information Processing Systems},
  year      = {2024}
}

@inproceedings{qin2024infobatch,
  title     = {{InfoBatch}: Lossless Training Speed Up by Unbiased Dynamic Data Pruning},
  author    = {Qin, Ziheng and Wang, Kai and Zheng, Zangwei and Gu, Jianyang and Peng, Xiangyu and Xu, Zhaopan and Zhou, Daquan and Shang, Lei and Sun, Baigui and Xie, Xuansong and You, Yang},
  booktitle = {International Conference on Learning Representations},
  year      = {2024}
}

@inproceedings{wang2024greats,
  title     = {{GREATS}: Online Selection of High-Quality Data for {LLM} Training in Every Iteration},
  author    = {Wang, Jiachen T. and Wu, Tong and Song, Dawn and Mittal, Prateek and Jia, Ruoxi},
  booktitle = {Advances in Neural Information Processing Systems},
  year      = {2024}
}

@inproceedings{xia2024less,
  title     = {{LESS}: Selecting Influential Data for Targeted Instruction Tuning},
  author    = {Xia, Mengzhou and Malladi, Sadhika and Gururangan, Suchin and Arora, Sanjeev and Chen, Danqi},
  booktitle = {International Conference on Machine Learning},
  year      = {2024}
}

@inproceedings{li2025preserving,
  title     = {Preserving Diversity in Supervised Fine-Tuning of Large Language Models},
  author    = {Li, Ziniu and Chen, Congliang and Xu, Tian and Qin, Zeyu and Xiao, Jiancong and Luo, Zhi-Quan and Sun, Ruoyu},
  booktitle = {International Conference on Learning Representations},
  year      = {2025}
}

@inproceedings{zhao2026bounded,
  title     = {Don't Force the Fit: Bounded Log-Likelihood Loss for Enhanced Reasoning in Large Language Models},
  author    = {Zhao, Feng and Zhang, Hong and Yang, Yu and Zhao, Ruilin and Xu, Guandong},
  booktitle = {International Conference on Machine Learning},
  year      = {2026}
}

@inproceedings{wang2026curiosft,
  title     = {Learning While Staying Curious: Entropy-Preserving Supervised Fine-Tuning via Adaptive Self-Distillation for Large Reasoning Models},
  author    = {Wang, Hao and Gu, Hao and Piao, Hongming and Gong, Kaixiong and Ye, Yuxiao and Yue, Xiangyu and Han, Sirui and Guo, Yike and Wu, Dapeng},
  booktitle = {Annual Meeting of the Association for Computational Linguistics},
  year      = {2026}
}

@inproceedings{chen2026sedsft,
  title     = {{SED-SFT}: Selectively Encouraging Diversity in Supervised Fine-Tuning},
  author    = {Chen, Yijie and Liu, Yijin and Meng, Fandong},
  booktitle = {Annual Meeting of the Association for Computational Linguistics},
  year      = {2026}
}

@inproceedings{fan2026prefix,
  title     = {Learning Diverse Responses with Prefix-Conditioned Supervised Fine-Tuning},
  author    = {Fan, Zhiyuan and Chen, Guanqiao and Huang, Yanyi and Zhao, Mingkuan and Guo, Dadi and Fung, Yi R.},
  booktitle = {Annual Meeting of the Association for Computational Linguistics},
  year      = {2026}
}

@inproceedings{huang2025bestofn,
  title     = {Is Best-of-N the Best of Them? Coverage, Scaling, and Optimality in Inference-Time Alignment},
  author    = {Huang, Audrey and Block, Adam and Liu, Qinghua and Jiang, Nan and Krishnamurthy, Akshay and Foster, Dylan J.},
  booktitle = {International Conference on Machine Learning},
  year      = {2025}
}

@inproceedings{chow2025inferenceaware,
  title     = {Inference-Aware Fine-Tuning for Best-of-N Sampling in Large Language Models},
  author    = {Chow, Yinlam and Tennenholtz, Guy and Gur, Izzeddin and Zhuang, Vincent and Dai, Bo and Kumar, Aviral and Agarwal, Rishabh and Thiagarajan, Sridhar and Boutilier, Craig and Faust, Aleksandra},
  booktitle = {International Conference on Learning Representations},
  year      = {2025}
}

@inproceedings{snell2025scaling,
  title     = {Scaling {LLM} Test-Time Compute Optimally Can be More Effective than Scaling Parameters for Reasoning},
  author    = {Snell, Charlie and Lee, Jaehoon and Xu, Kelvin and Kumar, Aviral},
  booktitle = {International Conference on Learning Representations},
  year      = {2025}
}

@inproceedings{wu2026modeconditioning,
  title     = {Mode-conditioning unlocks superior test-time compute scaling},
  author    = {Wu, Chen and Goyal, Sachin and Raghunathan, Aditi},
  booktitle = {International Conference on Learning Representations},
  year      = {2026}
}

@inproceedings{liu2025prorl,
  title     = {{ProRL}: Prolonged Reinforcement Learning Expands Reasoning Boundaries in Large Language Models},
  author    = {Liu, Mingjie and Diao, Shizhe and Lu, Ximing and Hu, Jian and Dong, Xin and Choi, Yejin and Kautz, Jan and Dong, Yi},
  booktitle = {Advances in Neural Information Processing Systems},
  year      = {2025}
}

@inproceedings{tuyls2026representation,
  title     = {Representation-Based Exploration for Language Models: From Test-Time to Post-Training},
  author    = {Tuyls, Jens and Foster, Dylan and Krishnamurthy, Akshay and Ash, Jordan},
  booktitle = {International Conference on Learning Representations},
  year      = {2026}
}

@inproceedings{yao2025diversityaware,
  title     = {Diversity-Aware Policy Optimization for Large Language Model Reasoning},
  author    = {Yao, Jian and Cheng, Ran and Wu, Xingyu and Wu, Jibin and Tan, KC},
  booktitle = {Advances in Neural Information Processing Systems},
  year      = {2025}
}

@inproceedings{yano2026pretraining,
  title     = {Pre-training {LLM} without Learning Rate Decay Enhances Supervised Fine-Tuning},
  author    = {Yano, Kazuki and Kiyono, Shun and Kobayashi, Sosuke and Takase, Sho and Suzuki, Jun},
  booktitle = {International Conference on Learning Representations},
  year      = {2026}
}

@inproceedings{chu2025sftmemorizes,
  title     = {{SFT} Memorizes, {RL} Generalizes: A Comparative Study of Foundation Model Post-Training},
  author    = {Chu, Tianzhe and Zhai, Yuexiang and Yang, Jihan and Tong, Shengbang and Xie, Saining and Schuurmans, Dale and Le, Quoc V. and Levine, Sergey and Ma, Yi},
  booktitle = {International Conference on Machine Learning},
  year      = {2025}
}

@inproceedings{mindermann2022prioritized,
  title     = {Prioritized training on points that are learnable, worth learning, and not yet learnt},
  author    = {Mindermann, S{\"o}ren and Brauner, Jan M and Razzak, Muhammed T and Sharma, Mrinank and Kirsch, Andreas and Xu, Winnie and H{\"o}ltgen, Benedikt and Gomez, Aidan N and Morisot, Adrien and Farquhar, Sebastian and others},
  booktitle = {International Conference on Machine Learning},
  year      = {2022}
}

@inproceedings{thirukovalluru2024sequence,
  title     = {Sequence reducible holdout loss for language model pretraining},
  author    = {Thirukovalluru, Raghuveer and Monath, Nicholas and Dhingra, Bhuwan and Wiseman, Sam},
  booktitle = {Joint International Conference on Computational Linguistics, Language Resources and Evaluation},
  year      = {2024}
}

@article{team2026kimi,
  title={Kimi k3: Open frontier intelligence},
  author={Kimi Team and Tongtong Bai and Yifan Bai and Yiping Bao and M. C. and Jianfeng Cai and Xinyuan Cai and Peizhou Cao and Yuxuan Cao and Ziwei Chai and Y. Charles and H. S. Che and Guanduo Chen and Guangyu Chen and Guanzheng Chen and Huarong Chen and Jia Chen and Jianlong Chen and Jun Chen and Kexin Chen and Peng Chen and Ruijue Chen and Wentao Chen and Xin Chen and Yang Chen and Yanru Chen and Yifei Chen and Yingjiang Chen and Yuankun Chen and Yujie Chen and Yutian Chen and Zhirong Chen and Dazhi Cheng and Yean Cheng and Jialei Cui and Jingbing Cui and Anqi Dai and Jiaqi Deng and Hao Ding and Rui Ding and Shaofeng Ding and Mengfan Dong and Mengnan Dong and Yuhao Dong and Yuxin Dong and Angang Du and Chenzhuang Du and Dikang Du and Jusen Du and Yulun Du and Yu Fan and Jing Feng and Qiulin Feng and Yichen Feng and Kelin Fu and Qiang Fu and Fuxuan Gao and Hongcheng Gao and Jingyue Gao and Tong Gao and Weijia Gao and Shangyi Geng and Jie Gong and Linhu Gong and Shengao Gong and Xiaochen Gong and Qizheng Gu and Yicheng Gu and Shuhao Guan and Haiqing Guo and Shiqi Guo and Xiang Guo and Zhengyan Guo and Beixi Hao and Wenxin Hao and Xiaoru Hao and Dailan He and Haotian He and Lehan He and Qi He and Weiran He and Xinran He and Xinyi He and Yibo He and Yunjia He and Chao Hong and Tiange Hong and Hao Hu and Jiaxi Hu and Ruikun Hu and Weiming Hu and Yangyang Hu and Zhenxing Hu and Liang Hua and Jinbin Huang and Ke Huang and Ruiyuan Huang and Siying Huang and Weixiao Huang and Yan Huang and Zhengjie Huang and Zhiqi Huang and Yulong Hui and Chaobo Jia and Yutong Jiang and Zhejun Jiang and Zuoyou Jiang and Wenyi Jin and Xinyi Jin and Yu Jing and Huanjun Kong and Guokun Lai and Aidi Li and Cheng Li and Chengyuan Li and Cong Li and Fang Li and Guanyu Li and Haoyang Li and Jia Li and Junxiong Li and Lei Li and Letian Li and Lincan Li and Weihong Li and Wentao Li and Xintong Li and Yang Li and Yishen Li and Yiwei Li and Yuxiao Li and Zhaowei Li and Zhaoxi Li and Zheming Li and Zhengxiao Li and Zhiyuan Li and Jiawei Lin and Xiaohan Lin and Yibo Lin and Zichao Lin and Ziyan Lin and Bill Liu and Boxiao Liu and Chuan Liu and Liang Liu and Shaowei Liu and Shudong Liu and Shuran Liu and Tianwei Liu and Weizhou Liu and Yangyang Liu and Yanming Liu and Yibo Liu and Yipeng Liu and Zhengying Liu and Zhiheng Liu and Enzhe Lu and Haoyu Lu and Linqiang Lu and Tingzhan Lu and Zhiyuan Lu and Aotian Luo and G. Luo and Junyu Luo and Yifan Luo and B. Lyu and Wenzhou Lyu and Shaoguang Mao and Yuan Mei and Xin Men and Minqing Ni and Yixuan Niu and Siyuan Pan and Shujun Peng and Zhangyang Qi and Ruoyu Qin and ZeChao Qin and Zeyu Qin and Haiquan Qiu and Jianxin Qiu and Jiezhong Qiu and Bowen Qu and Yuhao Qu and Zeyu Shang and Youbo Shao and Han Shen and Jincheng Shi and Juanfeng Shi and Lidong Shi and Shengyuan Shi and Wingchun Siu and Pengwei Song and Xiaoxi Song and Jianlin Su and Yunfeng Su and Zhaochen Su and Lin Sui and Jingsong Sun and Junyao Sun and Shaoning Sun and Shuzhe Sun and Tongyu Sun and Yujun Sun and Yunpeng Tai and Chuning Tang and Heyi Tang and Sirui Tang and Zecheng Tang and Chaoran Tian and Rongpeng Tian and Yu Tian and Wei Tu and Chensi Wang and Chuang Wang and Chunjie Wang and Dinglu Wang and Feng Wang and Hailong Wang and Haiming Wang and Hao Wang and Hao Wang and Huaqing Wang and Hui Wang and Jiayi Wang and Jinglong Wang and Jinhong Wang and Jiuzheng Wang and Linian Wang and Shaobo Wang and Shenzhi Wang and Shuyi Wang and Si Wang and Siyuan Wang and Tianfu Wang and Wenjue Wang and Xingran Wang and Xinmei Wang and Xinyuan Wang and Xusheng Wang and Yalin Wang and Yangkun Wang and Yao Wang and Yaoyu Wang and Yejie Wang and Yiqin Wang and Yucheng Wang and Yuzhi Wang and Zhaoji Wang and Zhaowei Wang and Zhengtao Wang and Zhenhao Wang and Zhongsheng Wang and Zifan Wang and Chu Wei and Ming Wei and Shouxin Wei and Zichen Wen and Fan Wu and Haoning Wu and Rucong Wu and Wenhao Wu and Xiaoxue Wu and Yingcong Wu and Yongqi Wu and Yuxin Wu and Zijian Wu and Xinglang Xian and Chenxuan Xiang and Yuye Xiang and Bocheng Xiao and Chenjun Xiao and Xin Xiao and Jin Xie and Xiaotong Xie and Yifeng Xie and Zhe Xie and Bowei Xing and Yiming Xiong and Baosheng Xu and Boyu Xu and Jiale Xu and Jianfan Xu and Jing Xu and Jinjing Xu and L. H. Xu and Qingtao Xu and Shuyao Xu and Suting Xu and Tiantian Xu and Tianxiang Xu and Weixin Xu and Xinran Xu and Yangchuan Xu and Ye Xu and Yueni Xu and Ziyao Xu and Haonan Xue and Junjie Yan and Yaoyao Yan and Fan Yang and Guangyao Yang and Hao Yang and Junwei Yang and Ruoyu Yang and Wenjie Yang and Xiaofei Yang and Xinyu Yang and Yi Yang and Yiling Yang and Ying Yang and Yuchen Yang and Zhen Yang and Zhilin Yang and Zian Yang and Zuhao Yang and Haotian Yao and Dan Ye and Haoran Ye and Wenjie Ye and Zhanbo Ye and Bohong Yin and Haoxiang Yin and Xietong Yin and Chengzhen Yu and Haozhen Yu and Longhui Yu and Shengnan Yu and Shuying Yu and Tianxiang Yu and Enming Yuan and Mengjie Yuan and Tongtian Yue and Wei Yue and Yang Yue and Dunyuan Zha and Haobing Zhan and B. H. Zhang and Dehao Zhang and Fei Zhang and Hao Zhang and Haoyuan Zhang and Huanyu Zhang and Jiapei Zhang and Jiaxuan Zhang and Jin Zhang and Kaiyi Zhang and Miaozhen Zhang and Puqi Zhang and Qinglei Zhang and Rong Zhang and Rui Zhang and Shaoshuai Zhang and Shiyi Zhang and Xiaobin Zhang and Xiaoyun Zhang and Y. Zhang and Yangkun Zhang and Ye Zhang and Yichi Zhang and Yikun Zhang and Yizhi Zhang and Yongting Zhang and Yu Zhang and Yutao Zhang and Yutong Zhang and Zheng Zhang and Zijing Zhang and Bin Zhao and Chenguang Zhao and Feifan Zhao and Jinglun Zhao and Jinxiang Zhao and Shuai Zhao and Wenshuo Zhao and Xiangyu Zhao and Xuanle Zhao and Yikai Zhao and Zijia Zhao and Haozhi Zheng and Huabin Zheng and Ruihan Zheng and Shaojie Zheng and Tengyang Zheng and Haofeng Zhong and Lei Zhong and Longguang Zhong and M. Zhou and Qiankang Zhou and Runjie Zhou and Ruozhang Zhou and Xinyu Zhou and Yiqiao Zhou and Zaida Zhou and Jinguo Zhu and Liya Zhu and Xinhao Zhu and Yangjunfeng Zhu and Yuxuan Zhu and Zhen Zhu and Chen Zhuang and Weiyu Zhuang and Xinxing Zu},
  journal={arXiv preprint arXiv:2607.24653},
  year={2026}
}

@techreport{microsoft2025mai,
  title={Mai-thinking-1: Building a hill-climbing machine},
  author={{Microsoft AI Team}},
  year={2025},
  institution={Technical report, Microsoft AI, 2026. https://microsoft. ai/pdf/mai-thinking~…}
}

@article{zhao2026absolute,
  title={Absolute zero: Reinforced self-play reasoning with zero data},
  author={Zhao, Andrew and Wu, Yiran and Wu, Tong and Xu, Quentin and Yue, Yang and Lin, Matthieu and Wang, Shenzhi and Wu, Qingyun and Zheng, Zilong and Huang, Gao},
  journal={Advances in Neural Information Processing Systems},
  volume={38},
  pages={105816--105879},
  year={2026}
}


\clearpage

\appendix

\section{Additional Related Work}
\label{app:related_work}

\paragraph{Data selection and diversity-preserving SFT}
Importance sampling and dynamic pruning prioritize high-gradient or
low-information examples to accelerate optimization while preserving the
full-data objective
\citep{katharopoulos2018notall,qin2024infobatch}; TailSFT instead changes the
effective SFT objective to improve coverage and post-RL performance. RHO-LOSS
selects learnable, not-yet-learned examples and Dataset Cartography diagnoses
examples from training dynamics
\citep{mindermann2022prioritized,swayamdipta2020dataset}; TailSFT uses only loss
reduction relative to the initial policy and masks the most-improved sequences
online. Rho-1 and CoLoR use reference-relative losses in pre-training, Token
Cleaning selects SFT tokens, and GREATS and LESS estimate utility or influence
\citep{lin2024rho1,brandfonbrener2024colorfilter,pang2025tokencleaning,
wang2024greats,xia2024less}; TailSFT operates at sequence or document
granularity without target examples or influence computation.
\citet{qin2025curatedsft} cast curated SFT as implicit RL; TailSFT supplies a
dynamic, coverage-motivated rule and evaluates an explicit later RL stage.
Confidence caps, entropy regularization, self-distillation, and token-level
clipping preserve diversity during SFT
\citep{chen2026rethinking,li2025preserving,zhao2026bounded,
wang2026curiosft,chen2026sedsft}, while prefix-conditioned SFT separates
response modes \citep{fan2026prefix}; TailSFT instead masks already-improved
examples relative to the base policy, without auxiliary losses or mode labels.

\paragraph{Coupling supervised and reinforcement training}
\citet{bansal2026rl} apply RL at intermediate pre-training checkpoints and
blend SFT and RL updates, \citet{fu2026srft} jointly weight demonstrations and
on-policy rollouts, \citet{huang2026prefixrft} continue expert prefixes
on-policy, and \citet{zhang2026chord} retain expert SFT as a dynamically
weighted RL auxiliary objective. These methods change when or how the
objectives are coupled; TailSFT leaves the two-stage pipeline and RL algorithm
unchanged and changes only which SFT examples receive gradient.
\citet{yoshihara2025twostage} use prolonged SFT to maximize mathematical
accuracy and then GRPO chiefly to improve token efficiency; TailSFT instead
optimizes the SFT stage for the final post-RL model, even when local SFT
metrics worsen. \citet{niu2026nondecoupling} derive interference between
sequential SFT and RL objectives under their assumptions; TailSFT keeps the
stages sequential but reshapes SFT gradients to better support the later RL
objective.

\paragraph{Coverage and test-time compute}
The coverage principle formalizes response-level likelihood-ratio coverage as
the condition governing Best-of-$N$ recovery
\citep{chen2025coverageprinciple}; TailSFT operationalizes this criterion as an
online SFT rule and proves an advantage for relative clipping. Classical RL
theory uses concentrability and related coverage coefficients to control error
propagation and offline or online learnability
\citep{farahmand2010error,chen2019information,xie2020qstar,
jin2021pessimism,foster2022offline,jiang2025offlinerl,xie2023coverage}; those
works ask whether logged state-action data cover a target policy, whereas
TailSFT asks whether a language model covers expert responses at a finite
sampling budget. Repeated sampling yields predictable coverage gains
\citep{brown2024monkeys}, while compute allocation, Best-of-$N$ analysis,
inference-aware fine-tuning, mode conditioning, and rollout-budget scaling
improve how a fixed policy is sampled or trained for sampling
\citep{snell2025scaling,huang2025bestofn,chow2025inferenceaware,
wu2026modeconditioning,cheng2026isocompute}; TailSFT instead improves the
pre-RL policy without a verifier, test-time mode label, or altered rollout
allocation.

\paragraph{RLVR, sharpening, and exploration}
Sharpening theory formalizes post-training as reallocating mass within covered
behaviors, while sampling analyses show that capabilities can already be
latent in the base policy \citep{huang2025sharpening,karan2025reasoning}; TailSFT
builds on this view by improving useful coverage before RL. Empirically, RLVR
can raise pass@1 without expanding large-$k$ capacity, remain
support-constrained, or amplify earlier behaviors
\citep{yue2025beyondbase,wu2025invisibleleash,zhao2025echochamber}, while
\citet{wen2025correctreasoning} show that verifiable rewards can promote
correct reasoning latent in the base model; TailSFT changes SFT so more
reward-bearing responses remain sampleable. RL-stage methods upweight unlikely
correct trajectories, optimize solution diversity, or add
representation-based exploration
\citep{he2025rewarding,yao2025diversityaware,tuyls2026representation}; these
methods are complementary, whereas TailSFT uses supervised data alone.
\citet{liu2025prorl} show that prolonged, diverse RL can expand the reasoning
boundary, and \citet{foster2025goodfoundation} show that base-policy coverage
controls efficient exploration; TailSFT does not posit an absolute boundary,
but supplies a better foundation for standard on-policy RL.

\paragraph{Stage-aware model development}
\citet{kang2026quagmires} show that high SFT scores need not predict post-RL
performance and identify held-out generalization loss and pass@large-$k$ as
stronger proxies; TailSFT goes beyond diagnosis by modifying SFT to improve
coverage and validating the resulting initialization through matched RL runs.
Models with the same pre-training loss can transfer differently because of
optimization's implicit bias \citep{liu2023implicitbias}; pre-training
perplexity can misrank fine-tuning outcomes \citep{zeng2025indicators}, and
downstream scaling laws can be unstable \citep{lourie2025scaling}. These works
expose failures of local metrics; TailSFT supplies an actionable SFT
intervention and a coverage-based diagnostic for the SFT-to-RL transition.
Extended pre-training can reduce loss while harming adaptability, alternative
schedules can improve SFT despite weaker local metrics, and flatter
pre-training solutions can reduce forgetting after post-training
\citep{springer2025overtraining,yano2026pretraining,watts2026sharpness};
TailSFT makes the analogous intervention inside SFT and targets response
coverage rather than pre-training geometry. \citet{zhang2025interplay} and
\citet{wang2025octothinker} show that pre- or mid-training exposure controls
later RL gains; TailSFT isolates a lightweight SFT intervention and tests it
through matched RL runs. \citet{chu2025sftmemorizes} separate SFT's stabilizing
role from RL's generalization, while \citet{jin2025oodforgetting} find that OOD
performance can peak early in SFT and be partly restored by RL; TailSFT filters
already-fit examples to prevent coverage loss before RL.

\newcommand{\piref}{\pi_{\mathrm{ref}}}
\newcommand{\one}{\mathbf{1}}
\newcommand{\lerm}{L_{\mathrm{ERM}}}
\newcommand{\pierm}{\pi_{\mathrm{ERM}}}
\newcommand{\labs}{L_{\mathrm{ABS},\alpha}}
\newcommand{\piabs}{\pi_{\mathrm{ABS},\alpha}}
\newcommand{\loff}{L_{\mathrm{OFF},\beta}}
\newcommand{\pioff}{\pi_{\mathrm{OFF},\beta}}
\newcommand{\KL}[2]{\mathrm{KL}(#1 \|\| #2)}

\section{Theoretical Analysis: TailSFT in the expert conditioning setting}\label{app:theory}
We consider a stylized setup for supervised fine-tuning called \emph{expert conditioning} where the SFT data distribution and optimal policy for the downstream task are defined by conditioning the pretrained model output distribution on some event. For theoretical analysis, we focus on a simplified setting where there is no context or prompt, and so all policies are elementary distributions over responses $y \in \mathcal{Y}$. Let 
$\piref \in \Delta(\mathcal{Y})$ be the pretrained model and let $S \subset \Ycal$ be some subset of responses. In expert conditioning, we define the expert policy as $\pi^\star(y) \propto \piref(y) \cdot \one\{y \in S\}$. Note that this setup is closely related to formulations of RLHF where the expert is defined as $\piref(y)\cdot \exp(R^\star(y)/\beta)$ for some reward function $R^\star$; formally, expert conditioning is equivalent to this formulation in the limit where  $\beta \to 0$. 

We are given $n$ samples $y_1,\ldots,y_n \sim \pi^\star$ and obtain a policy by solving a certain optimization problem defined in terms of a loss function over the sample. The three loss functions are
\begin{align*}
    \mathrm{ERM}: &~~~~~~~~ \lerm(\pi) := \frac{1}{n}\sum_{i=1}^n - \log (\pi(y_i))\\
    \mathrm{ABS}: &~~~~~~~~ \labs(\pi) := \frac{1}{n}\sum_{i=1}^n \max(-\log (\pi(y_i)) + \log(\alpha),0)\\
    \mathrm{OFF}: &~~~~~~~~ \loff(\pi) := \frac{1}{n}\sum_{i=1}^n \max(-\log (\pi(y_i)) + \log(\beta\cdot \piref(y_i)),0)   
\end{align*}
The first objective is standard empirical risk minimization on the cross entropy loss. The second and third are TailSFT variants with absolute and relative clipping respectively. Indeed for absolute clipping, we ignore sample $y_i$ if $\pi(y_i) \geq \alpha$ and for relative clipping we ignore $y_i$ if $\pi(y_i) \geq \beta \piref(y_i)$. The formal optimization problem for a particular loss function is:
\begin{align}
    \minimize_{\pi\in\Delta(\mathcal{Y})}~ \KL{\pi}{\piref}~ \textrm{ subject to }~ L(\pi) = \min_{\pi'} L(\pi') \label{eq:theory_main_opt}
\end{align}
Thus we minimize the forward Kullback-Leibler (KL) divergence between the optimization variable $\pi$ and the reference policy $\piref$ subject to $\pi$ being among the minimizers of the loss $L$. We use forward KL as simplification of the implicit bias of optimization, and use forward KL primarily due to its mode-covering behavior, so that coverage is preserved to the extent possible subject to minimizing the loss. We take $L$ to be $\lerm,\labs,\loff$ accordingly. 

Formally, fixing the dataset, define $\pierm,\piabs,\pioff$ to be the optimizers of~\cref{eq:theory_main_opt} with loss functions $\lerm, \labs,\loff$ respectively. We are interested in qualitatively understanding  how well these optimizers cover the expert policy $\pi^\star$, where recall that we define coverage as:
\begin{align*}
    \Pcov_N(\pi) := \Pr_{y \sim \pi^\star}\left[\frac{\pi^\star(y)}{\pi(y)} \geq N\right]
\end{align*}

Our main result is as follows:
\begin{theorem}
\label{thm:main}
    We have the following comparisons:
    \begin{enumerate}
        \item \textbf{Offset clipping is always preferred}: For all $\piref, S$ and datasets:
        \begin{align*}
        \inf_\beta \Pcov_N(\pioff) \leq \inf_{\beta: \min_{\pi}\loff(\pi) = 0} \Pcov_N(\pioff) \leq \left(\inf_{\alpha} \Pcov_N(\piabs)\right) \wedge \Pcov_N(\pierm)
        \end{align*}
        \item \textbf{Offset clipping can dominate}: For $|\mathcal{Y}|$ sufficiently large, there exists a reference policy $\piref$ and expert subset $S$ such that with probability at least $1-\mathrm{poly}(|\mathcal{Y}|^{-1})$
        \begin{align*}
        \inf_\beta \Pcov_N(\pioff) < \left(\inf_{\alpha} \Pcov_N(\piabs)\right) \wedge \Pcov_N(\pierm)
        \end{align*}
        \item \textbf{Absolute clipping can be worse than ERM}: There exists a reference policy $\piref$ and expert subset $S$ such that with high probability
        \begin{align*}
        \Pcov_N(\pierm) < \inf_{\alpha: \min_\pi \labs(\pi) = 0} \Pcov_N(\piabs)
        \end{align*}
        That is, in the regime where zero loss is achievable for absolute clipping, we can have that empirical risk minimization strictly dominates absolute loss clipping. 
    \end{enumerate}
\end{theorem}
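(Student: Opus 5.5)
The plan is to first solve the constrained program \eqref{eq:theory_main_opt} in closed form for each loss. After that, all three parts reduce to comparing explicit likelihood ratios $\pi^\star(y)/\pi(y)$.

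\textbf{Step 1: closed forms.} Let $D$ be the set of distinct sampled responses, with counts $n_y$. Note that $D\subseteq S$ and $\pi^\star(y)=\piref(y)/\piref(S)$ on $S$. We assume $N>1$ throughout.
\begin{itemize}
\item \emph{ERM.} The minimizer of $\lerm$ is unique, namely the empirical distribution $\hat\pi(y)=n_y/n$, so the KL step plays no role.
\item \emph{ABS, zero-loss regime.} If $\alpha|D|\le 1$, the zero-loss set is $\{\pi:\pi(y)\ge\alpha\ \forall y\in D\}$. A KKT computation for minimizing $\KL{\pi}{\piref}$ over this polytope gives
\[
\piabs(y)=\max\{\alpha,c\,\piref(y)\}\ \text{ for } y\in D,\qquad \piabs(y)=c\,\piref(y)\ \text{ for } y\notin D,
\]
with $c\le 1$ fixed by normalization.
\item \emph{ABS, positive-loss regime.} If $\alpha|D|>1$, the minimal loss is positive. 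Any mass off $D$ can be moved onto an active point of $D$, strictly decreasing the loss, so every minimizer is supported on $D$.
\item \emph{OFF.} Zero loss is attainable iff $\beta\piref(D)\le1$. In that case the same KKT argument gives $\pioff(y)=\max\{\beta,c\}\piref(y)$ on $D$ and $c\,\piref(y)$ off $D$, where $c=(1-\beta\piref(D))/(1-\piref(D))$ when $\beta\ge1$. As $\beta$ ranges over $[1,1/\piref(D)]$, the off-$D$ scale $c$ sweeps $[0,1]$ continuously. The endpoint $\beta=1/\piref(D)$ gives $\piref(\cdot\mid D)$.
\end{itemize}

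\textbf{Step 2: part 1 by a two-regime argument.} Consider first OFF with $\beta=1/\piref(D)$. Every $y\in D$ has ratio $\piref(D)/\piref(S)\le1<N$, so it is covered. Hence this $\pioff$ has coverage exactly $\pi^\star(S\setminus D)$.

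This single choice already beats ERM and every positive-loss ABS solution. Both of those are supported on $D$, so every $y\in S\setminus D$ has infinite ratio and their coverage is at least $\pi^\star(S\setminus D)$.

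For a zero-loss ABS solution with off-$D$ scale $c$, split on the value of $cN\piref(S)$.
\begin{itemize}
\item If $cN\piref(S)<1$, every $y\in S\setminus D$ is uncovered by ABS. So again $\Pcov_N(\piabs)\ge\pi^\star(S\setminus D)$, and the same $\beta$ wins.
\item If $cN\piref(S)\ge1$, pick the zero-loss $\beta$ whose off-$D$ scale equals this same $c$. Off $D$, the OFF and ABS policies coincide. On $D$, OFF gives $\max\{\beta,c\}\piref(y)\ge c\,\piref(y)$, so every $y\in D$ is covered. Hence $\Pcov_N(\pioff)\le\Pcov_N(\piabs)$.
\end{itemize}
All $\beta$ used above are zero-loss choices, which gives the middle inequality in part 1.

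\textbf{Step 3: parts 2 and 3 by explicit constructions.} The analysis in Step 2 shows that strict separation must come from the regime $N\piref(S)<1$. In that regime every method leaves $S\setminus D$ uncovered, and OFF with $\beta=1/\piref(D)$ covers all of $D$. The constructions must therefore make ABS and ERM each fail on part of $D$.

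\emph{Part 3.} Take $S$ of fixed size $K$ with $n\gg K\log K$, so that $D=S$ and ERM has coverage $0$ with high probability. Let $\pi^\star$ put mass $1/2$ on one heavy item $h$, and set $\piref(S)<1/N$ with $N<K/2$. Any zero-loss ABS solution has $\alpha\le1/K$ and $c\le1$. This gives $\piabs(h)\le\max\{1/K,\piref(h)\}<\pi^\star(h)/N$, so $h$ is uncovered and $\Pcov_N(\piabs)\ge1/2$.

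\emph{Part 2.} Combine two features inside $S$:
\begin{itemize}
\item a few heavy items, which force ABS to take $\alpha\gtrsim\pi^\star(h)/N$ in order to cover them;
\item many light items with $\pi^\star(y)\approx 2N/n$.
\end{itemize}
With high probability many light items are sampled exactly once. ERM then gives them ratio $\approx 2N\ge N$, so they are uncovered, and this mass is bounded below. These once-sampled items also make $|D|$ large enough that $\alpha|D|>1$, which pushes ABS into the positive-loss regime.

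The step I expect to be the main obstacle is exactly this part of part 2. In the positive-loss regime, I still need to show that the ABS minimizer, supported on $D$ with a hinge-type KKT structure, leaves a constant amount of $\pi^\star$-mass uncovered for every $\alpha$, and this must hold simultaneously with the high-probability estimates on the light-item counts. My first attempt would be to show that the positive-loss ABS minimizer has the form $\min\{\alpha,\lambda\piref\}$-type water-filling on $D$. I would then tune the heavy/light mass split so that no single $\alpha$ can serve both groups, while OFF's uniform multiplicative boost on $D$ covers all of $D$ and pays only the $\pi^\star(S\setminus D)$ it leaves uncovered. That cost must be smaller than what ERM and every ABS lose, which requires it to be $o(1)$ with high probability; the parameters should be tuned to keep it so.
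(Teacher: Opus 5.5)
Your Parts 1 and 3 are sound, and in places cleaner than the paper. The $N$-independent choice $\beta=1/\pi_{\mathrm{ref}}(D)$ gives ratio $\pi_{\mathrm{ref}}(D)/\pi_{\mathrm{ref}}(S)\le 1<N$ on $D$. This avoids the ratio-exactly-$N$ boundary that the paper's $\beta=(N\pi_{\mathrm{ref}}(S))^{-1}$ hits under the $\ge N$ convention. One small repair: in the case $cN\pi_{\mathrm{ref}}(S)\ge 1$, the bound $\max\{\beta,c\}\pi_{\mathrm{ref}}(y)\ge c\,\pi_{\mathrm{ref}}(y)$ only gives ratio $\le N$. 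You need $\beta>c$ strictly, which holds when $c<1$; when $c=1$ the two policies coincide. Your Part 3 instance (fixed $S$, $D=S$, one item of $\pi^\star$-mass $1/2$) is simpler than the paper's reuse of the Part 2 construction.

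The genuine gap is Part 2. The step you flag as the obstacle is never carried out, and it is where the whole argument lives. The missing idea is the paper's water-filling bound:
\begin{itemize}
\item For $\alpha|D|>1$, the unique minimizer is $\pi_{\mathrm{ABS},\alpha}(y)=\min\{\alpha,\lambda\hat\mu(y)\}$ on $D$, where $\hat\mu$ is the empirical distribution. Every observed $y$ has $\hat\mu(y)\ge 1/n$, so normalization gives $1\ge |D|\min\{\alpha,\lambda/n\}$. Hence any item sampled exactly once receives at most $1/|D|$.
\item In the zero-loss regime, the same item receives $\max\{\alpha,c\,\pi_{\mathrm{ref}}(y)\}\le\max\{1/|D|,\pi_{\mathrm{ref}}(y)\}$.
\end{itemize}
So a once-sampled $y$ is uncovered by every ABS solution and by ERM as soon as $\pi_{\mathrm{ref}}(S)\le 1/N$ and $\pi^\star(y)\ge N/|D|$. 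Meanwhile $\beta=1/\pi_{\mathrm{ref}}(D)$ covers it. The construction therefore only needs $|D|$ to be a constant fraction of $n$, with some $\Theta(1/n)$-mass items sampled once. The paper uses $N=4/3$, $n/4$ items of mass $2/n$, and $3n$ items of mass $1/(6n)$ whose only role is to push $|D|\ge 2n/3$. No heavy items are needed.

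Your instance, as described, does not supply this.
\begin{itemize}
\item Light items of mass about $2N/n$ number at most about $n/(2N)$, so $|D|<n/2$. The cap $1/|D|$ then exceeds the $2/n$ a once-sampled light item needs, and the heavy/light tension must do all the work.
\item That tension fails for small light mass $m_L$. With $k$ equal heavy items, take $\alpha$ just above $\pi^\star(h)/N$. Then $\lambda=(1-k\alpha)/\hat m_L$, with $\hat m_L$ the empirical light mass, exceeds $2$. The positive-loss ABS solution then covers every heavy item and every observed light item, so it ties OFF.
\item The tension only bites once $m_L$ exceeds roughly $(N-1)/(2N-1)$. Then covering all of $D$ would require total mass above $(1-m_L)/N+2\hat m_L>1$. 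You neither state nor verify this condition.
\end{itemize}
Finally, your closing requirement that $\pi^\star(S\setminus D)$ be $o(1)$ is a mistake. As you noted a paragraph earlier, when $N\pi_{\mathrm{ref}}(S)<1$, ERM and every ABS solution also pay $\pi^\star(S\setminus D)$. Strictness needs only positive extra uncovered mass on $D$. The requirement is also unattainable in your instance: each light item is unsampled with probability about $e^{-2N}$, so $\pi^\star(S\setminus D)$ stays bounded away from zero.
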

Note that the first claim implies that offset clipping---even when restricted to the parameter regime where zero loss is achievable--is never worse than empirical risk minimization. 

\subsection{Proof of~\cref{thm:main}}
First we derive a structural characterization of the solutions of~\cref{eq:theory_main_opt}. Next, we use this characterization to understand the coverage of the policies $\pierm,\piabs,\pioff$. Finally we establish the comparisons in the theorem statement. 

\begin{lemma}[Structure of KL projections]
\label{lem:kl_projection}
    Let $\{f_y: y \in \mathcal{Y}\} \subset [0,1]$ satisfy $\sum_y f_y \leq 1$. Consider the optimization problem
    \begin{align*}
        \widehat{\pi} \gets \argmin_{\pi\in\Delta(\mathcal{Y})}~ \KL{\pi}{\piref}~ \mathrm{subject\ to}~ \forall y: \pi(y) \geq f_y.
    \end{align*}
    Then the minimizer $\widehat{\pi}$ is unique and given by
    \begin{align*}
        \widehat{\pi}(y) = \max(f_y,\lambda \piref(y)) \quad \mathrm{where} \quad \sum_y \max(f_y, \lambda \piref(y)) = 1.
    \end{align*}
\end{lemma}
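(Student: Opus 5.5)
The plan is a direct convex-analytic argument: show that the feasible set is nonempty, convex and compact; show that the candidate $\widehat{\pi}$ is well defined; verify the first-order optimality condition for $\widehat{\pi}$; and get uniqueness from strict convexity of $\KL{\cdot}{\piref}$. Throughout I assume $\piref(y)>0$ for all $y$, or at least on every $y$ with $f_y>0$, since otherwise every feasible $\pi$ has infinite KL. I will first treat $\mathcal{Y}$ as finite and comment on the general case at the end.

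\textbf{Step 1 (existence of $\lambda$).} Define $g(\lambda)=\sum_y \max(f_y,\lambda\piref(y))$ for $\lambda\ge 0$. This function is continuous and nondecreasing, with $g(0)=\sum_y f_y\le 1$ and $g(\lambda)\ge\lambda\to\infty$. By the intermediate value theorem some $\lambda\ge 0$ satisfies $g(\lambda)=1$, so the candidate $\widehat{\pi}$ is a probability distribution. It is feasible by construction.

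The edge case $\sum_y f_y=1$ is handled separately. There the only feasible point is $\pi=f$: if $\pi\ge f$ coordinatewise and both sum to one, then $\pi=f$. The formula gives this point with $\lambda=0$, or with any $\lambda$ small enough that $\lambda\piref\le f$. So from now on I assume $\sum_y f_y<1$, which forces $\lambda>0$.

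\textbf{Step 2 (optimality via a first-order condition).} Since $\KL{\pi}{\piref}=\sum_y \pi(y)\log(\pi(y)/\piref(y))$ is convex, it suffices to show that its directional derivative at $\widehat{\pi}$ toward any feasible $\pi$ is nonnegative:
\[
\sum_y (\pi(y)-\widehat{\pi}(y))\Bigl(\log\tfrac{\widehat{\pi}(y)}{\piref(y)}+1\Bigr)\ge 0 .
\]
The constant $+1$ contributes nothing, because $\sum_y(\pi(y)-\widehat{\pi}(y))=0$. For the logarithmic term I split the coordinates into two groups.
\begin{itemize}
\item Where $\widehat{\pi}(y)=\lambda\piref(y)$, the log equals $\log\lambda$.
\item Where $\widehat{\pi}(y)=f_y>\lambda\piref(y)$, the log exceeds $\log\lambda$. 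Feasibility of $\pi$ gives $\pi(y)-\widehat{\pi}(y)=\pi(y)-f_y\ge 0$.
\end{itemize}
Hence every term satisfies $(\pi(y)-\widehat{\pi}(y))\log(\widehat{\pi}(y)/\piref(y))\ge(\pi(y)-\widehat{\pi}(y))\log\lambda$. Summing these and using $\sum_y(\pi-\widehat{\pi})=0$ gives the claim. This is exactly the KKT structure: the Lagrangian has multipliers $\mu_y\ge 0$ that are active only where $\widehat{\pi}=f_y$, and stationarity gives $\pi(y)=\lambda\piref(y)e^{\mu_y}$.

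\textbf{Step 3 (uniqueness).} The map $\pi\mapsto\KL{\pi}{\piref}$ is strictly convex, since $t\log t$ is strictly convex, and the feasible set is convex. Therefore the minimizer is unique. It follows that $\widehat{\pi}$, and hence the values $\max(f_y,\lambda\piref(y))$, do not depend on which solution $\lambda$ of $g(\lambda)=1$ is chosen, even though $\lambda$ itself may be non-unique when $g$ is flat.

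\textbf{Main obstacle.} The step I expect to need the most care is the bookkeeping in the degenerate cases and in the general setting: $\lambda=0$, where $\log\lambda$ is undefined and Step 1 reduces it to the trivial singleton feasible set; zero-mass coordinates of $\piref$; and, if $\mathcal{Y}$ is countably infinite, justifying that the directional derivative can be interchanged with the infinite sum. For the infinite case I would avoid derivatives altogether. Instead I would use the pointwise inequality $a\log(a/c)\ge b\log(b/c)+(a-b)(\log(b/c)+1)$, which is convexity of $t\log t$, apply it termwise, and sum.
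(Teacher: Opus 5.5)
Your proof is correct, but it runs the convex-duality argument in the opposite direction from the paper.

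The paper writes the Lagrangian, imposes stationarity and complementary slackness, and deduces that any optimizer must have the form $\max(f_y,\pi_{\mathrm{ref}}(y)e^{-1-\lambda})$. This is a necessity argument. It tacitly assumes that the KKT conditions hold at the optimum, which needs, for instance, that the optimizer be positive wherever $\pi_{\mathrm{ref}}$ is, so the objective is differentiable there. It also assumes that a normalizing multiplier exists.

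You instead construct the candidate and prove that a normalizing $\lambda$ exists by the intermediate value theorem. You then verify optimality with the first-order inequality for a convex objective, which needs no constraint qualification. Both proofs get uniqueness from strict convexity.

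Your route gives a self-contained argument and also exposes degenerate cases the paper passes over:
\begin{itemize}
\item the case $\sum_y f_y=1$, where the feasible set is a singleton and $\lambda$ is not unique;
\item the implicit hypothesis $\pi_{\mathrm{ref}}(y)>0$ whenever $f_y>0$, without which every feasible point has infinite KL and uniqueness fails.
\end{itemize}
The paper's route has the advantage of deriving the formula rather than verifying a guessed one.

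One small addition to your countably infinite remark: uniqueness there also needs $\mathrm{KL}(\widehat{\pi}\,\|\,\pi_{\mathrm{ref}})<\infty$. This can fail even when $\sum_y f_y\le 1$, and then every feasible point is a minimizer. For finite $\mathcal{Y}$ it is automatic under your positivity assumption.
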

\begin{proof}[Proof of~\cref{lem:kl_projection}]
    Observe that the optimization problem is strictly convex and feasible under the condition that $\sum_y f_y\leq 1$. We write the Lagrangian of the optimization problem as
    \begin{align*}
        \sum_y \pi(y) \log \frac{\pi(y)}{\piref(y)} + \lambda\left(\sum_y \pi(y) - 1\right) + \sum_y \mu_y(f(y) - \pi(y)) \qquad \mu_y \geq 0.
    \end{align*}
    The stationary conditions are
    \begin{align*}
        \forall y:~ \log \frac{\pi(y)}{\piref(y)} + 1 + \lambda - \mu_y = 0.
    \end{align*}
    If $\pi(y) > f_y$, so the constraint is inactive, then by complementary slackness we have $\mu_y=0$, so $\pi(y) = \piref(y) \cdot \exp(-1-\lambda)$ and this must be larger than $f_y$. Otherwise, we must have $\pi(y) = f_y$ solving the stationary condition for $\mu_y$ gives
    \begin{align*}
        \mu_y = \log (\pi(y)/\piref(y)) + 1 + \lambda = \log\frac{f_y}{\piref(y)\cdot\exp(-1-\lambda)}
    \end{align*}
    The constraint that $\mu_y \geq 0$ thus implies that $f_y \geq \piref(y)\cdot\exp(-1-\lambda)$. Taken together, this gives $\pi(y) = \max(f_y,\piref(y)\exp(-1-\lambda))$ and $\lambda$ is chosen to normalize the distribution. 
\end{proof}

\begin{lemma}[Structure of clipped minimizer]
\label{lem:clipped_minimizer}
Let $\{f_y: y \in \mathcal{Y}\}$ let $p \in \Delta(Y)$ and let $\sum_y \mathbf{1}\{p(y)>0\} f_y > 1$. Then the minimizer of
\begin{align*}
    \sum_{y \in \mathcal{Y}} p(y) \max(- \log \pi(y) + \log(f_y), 0)
\end{align*}
is unique and given by
\begin{align*}
    \hat{\pi}(y) = \min(f_y, \lambda p(y)) \quad \mathrm{where} \quad \sum_y \min(f_y, \lambda p(y)) = 1.
\end{align*}
\end{lemma}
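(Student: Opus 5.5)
The plan is to treat the problem as a convex program over the simplex and read the minimizer off the KKT conditions, in the same spirit as \cref{lem:kl_projection}. Throughout, take $f_y>0$ and write $\operatorname{supp}(p)=\{y:p(y)>0\}$. Each summand $p(y)\max(-\log\pi(y)+\log f_y,0)$ is convex in $\pi(y)$, since it is a maximum of convex functions. So the objective $F(\pi)$ is convex on $\Delta(\mathcal{Y})$, and Slater's condition holds, which makes the KKT conditions necessary and sufficient.

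\textbf{Subdifferentials and the sign of the multiplier.} For $y\in\operatorname{supp}(p)$ the subdifferential of the $y$-th term is
\[
\partial_y=\{-p(y)/\pi(y)\} \text{ if } \pi(y)<f_y,\qquad
[-p(y)/f_y,\,0] \text{ if } \pi(y)=f_y,\qquad
\{0\} \text{ if } \pi(y)>f_y .
\]
For $y\notin\operatorname{supp}(p)$ the term is identically $0$. Stationarity with multiplier $\nu$ for $\sum_y\pi(y)=1$ and multipliers $\kappa_y\ge 0$ for $\pi(y)\ge 0$ reads $0\in\partial_y+\nu-\kappa_y$.

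Next I show $\nu>0$ at any minimizer. Because $\sum_{y\in\operatorname{supp}(p)}f_y>1$, not every support point can have $\pi(y)\ge f_y$. Hence some $y\in\operatorname{supp}(p)$ has $\pi(y)<f_y$. That point has $\pi(y)>0$: if $\pi(y)=0$, its term is infinite while a finite value is attainable. So $\kappa_y=0$ there, and therefore $\nu=p(y)/\pi(y)>0$.

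\textbf{Case analysis with $\lambda:=1/\nu$.}
\begin{itemize}
\item If $y\in\operatorname{supp}(p)$ and $\pi(y)>f_y$, stationarity would force $\nu=\kappa_y=0$, a contradiction. So this case cannot occur.
\item If $y\in\operatorname{supp}(p)$ and $\pi(y)<f_y$, then $\pi(y)=\lambda p(y)$, and this is below $f_y$.
\item If $y\in\operatorname{supp}(p)$ and $\pi(y)=f_y$, we need $-\nu\in[-p(y)/f_y,0]$, that is, $f_y\le\lambda p(y)$.
\item If $y\notin\operatorname{supp}(p)$, stationarity gives $\kappa_y=\nu>0$, so complementary slackness forces $\pi(y)=0=\min(f_y,\lambda\cdot 0)$.
\end{itemize}
Combining the cases, every minimizer has the form $\hat\pi(y)=\min(f_y,\lambda p(y))$. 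Conversely, any $\pi$ of this form that sums to one satisfies KKT and is therefore a minimizer.

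\textbf{Existence and uniqueness.} Let $g(\lambda)=\sum_y\min(f_y,\lambda p(y))$. It is continuous and nondecreasing, with $g(0)=0$ and $g(\lambda)\to\sum_{y\in\operatorname{supp}(p)}f_y>1$ as $\lambda\to\infty$. So some $\lambda$ with $g(\lambda)=1$ exists.

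Uniqueness is the step I expect to be the main obstacle, because $F$ is not strictly convex: it is flat wherever $\pi(y)\ge f_y$. I would not argue through strict convexity. Instead I would note that any minimizer must take the form above, so it suffices to show the normalizing $\lambda$ is unique.
\begin{itemize}
\item Near any $\lambda$ with $g(\lambda)=1$, at least one support term is unclipped, since otherwise $g(\lambda)=\sum_{\operatorname{supp}}f_y>1$. An unclipped term has slope $p(y)>0$ in $\lambda$, so $g$ is strictly increasing there.
\item Hence the level set $\{g=1\}$ is a single point, and $\hat\pi$ is unique.
\end{itemize}
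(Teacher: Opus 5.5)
Your proposal is correct and takes essentially the same route as the paper. Both arguments use Lagrangian stationarity with the subdifferential of the clipped log-loss, observe that $\sum_{y\in\operatorname{supp}(p)} f_y>1$ forces some support point strictly below its cap (so the multiplier is positive), and read off that no $\pi(y)>f_y$ and hence $\pi(y)=\min(f_y,\lambda p(y))$.

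Your version is more complete than the paper's in three ways. You handle off-support points through the nonnegativity multipliers rather than a mass-shifting argument. Setting $\lambda=1/\nu$ avoids the paper's conflation of the multiplier with its reciprocal, since its stationarity condition actually gives $p(y)/\lambda$. Finally, you prove existence and uniqueness of the normalizing constant, which the paper asserts without argument.
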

\begin{proof}
    First observe that for any $y$ such that $p(y) = 0$ we will have $\pi(y) = 0$ as well, because the conditions on $f_y$ imply that we cannot achieve an objective value of zero, and we can always shift mass from actions with $p(y)=0$ to those with $p(y) \ne 0$ to reduce the loss. 
    Thus, without loss of generality we can assume that $p(y) \ne 0$ which also implies $\pi(y) \ne 0$. Next, we write the Lagrangian:
    \begin{align*}
        \sum_y p(y) \max(-\log \pi(y) + \log (f_y), 0) + \lambda\left(\sum_y \pi(y) - 1\right)
    \end{align*}
    The stationary condition is that $\frac{-p(y)}{\pi(y)} + \lambda = 0$ if $\pi(y) < f_y$. If $\pi(y) = f_y)$ the sub-differential for the loss term is in $[-p(y)/\pi(y), 0]$ and if $\pi(y)>0$ we must have $\lambda=0$ to satisfy stationarity. However, since at least one $y$ must have $\pi(y)< f_y$ we get that $\lambda>0$. Therefore no action can have $\pi(y) > f_y$ and we have that $\pi(y) = \min(f_y,\lambda p(y))$.
\end{proof}

Next we characterize the structure and the coverage of the $\pioff$.
\begin{lemma}[Coverage of $\pioff$]
\label{lem:offset-coverage}
    For any dataset, let $\hat{S} \subseteq S$ be the actions observed in the dataset. We have
    \begin{align*}
        \inf_{\beta \geq 0} \Pcov_N(\pioff) \leq \inf_{\beta: \min_\pi \loff(\pi) = 0} \Pcov_N(\pioff) = \begin{cases}
            0, & N \geq 1/\piref(S)\\
            \frac{\piref(S\setminus\hat{S})}{\piref(S)}, & 1 < N < 1/\piref(S)
        \end{cases}
    \end{align*}
\end{lemma}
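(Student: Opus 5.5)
The plan is to compute $\pioff$ in closed form for every $\beta$ at which zero loss is attainable, then read off its coverage directly. The first inequality, $\inf_{\beta\ge0}\le\inf_{\beta:\min\loff=0}$, is trivial because the right-hand infimum ranges over a subset.

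\textbf{Step 1: the zero-loss regime.} Since $\loff(\pi)=0$ iff $\pi(y)\ge\beta\piref(y)$ for every $y\in\hat S$, zero loss is achievable iff $\beta\,\piref(\hat S)\le 1$. In that case the feasible set of \cref{eq:theory_main_opt} is exactly $\{\pi:\pi(y)\ge f_y\}$, with $f_y=\beta\piref(y)\mathbf{1}\{y\in\hat S\}$.

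\textbf{Step 2: apply \cref{lem:kl_projection}.} The lemma gives $\pioff(y)=\max(f_y,\lambda\piref(y))$, with $\lambda$ fixed by normalization. There are two regimes.
\begin{itemize}
\item If $\beta\le1$, then $\lambda=1$ works, so $\pioff=\piref$.
\item If $1<\beta\le 1/\piref(\hat S)$, the solution is $\pioff=\beta\piref$ on $\hat S$ and $\lambda\piref$ off $\hat S$. Here $\lambda=(1-\beta\piref(\hat S))/(1-\piref(\hat S))\in[0,1)$, and indeed $\lambda<\beta$, consistent with the max.
\end{itemize}
Since $\pi^\star(y)=\piref(y)/\piref(S)$ on $S$, the density ratio $\pi^\star/\pioff$ takes three values:
\begin{itemize}
\item $1/\piref(S)$ everywhere on $S$ when $\beta\le1$;
\item $1/(\beta\piref(S))$ on $\hat S$ when $\beta>1$;
\item $1/(\lambda\piref(S))$ on $S\setminus\hat S$ when $\beta>1$, which equals $+\infty$ if $\lambda=0$.
\end{itemize}

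\textbf{Step 3: optimize over $\beta$ by cases.}

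For $1<N<1/\piref(S)$, take $\beta\le1$ first: every $y\in S$ has ratio $1/\piref(S)>N$, so the coverage is $1$. Now take $\beta>1$. Because $\lambda<1$, every $y\in S\setminus\hat S$ has ratio at least $1/\piref(S)>N$, so the coverage is at least $\piref(S\setminus\hat S)/\piref(S)$. This bound is attained at $\beta=1/\piref(\hat S)$: there $\lambda=0$, and the ratio on $\hat S$ is $\piref(\hat S)/\piref(S)\le1<N$, so only $S\setminus\hat S$ contributes. Hence the infimum equals $\piref(S\setminus\hat S)/\piref(S)$.

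For $N\ge 1/\piref(S)$, choose $\beta$ slightly larger than $1$. The ratio on $\hat S$ is then strictly below $1/\piref(S)\le N$. The ratio on $S\setminus\hat S$, namely $1/(\lambda\piref(S))$, decreases to $1/\piref(S)$ as $\beta\downarrow1$. So for $N>1/\piref(S)$, any $\beta$ close enough to $1$ gives coverage $0$; alternatively, when $N>1/\piref(S)$ one can take $\beta=1$ directly, where every ratio is $1/\piref(S)<N$.

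\textbf{Main obstacle.} The delicate point is the boundary $N=1/\piref(S)$. There the ratio on $S\setminus\hat S$ is $\ge N$ for every $\beta\ge1$, and the non-strict inequality in the definition of $\Pcov_N$ makes those points count. The infimum is then not $0$ but $\piref(S\setminus\hat S)/\piref(S)$, unless $\hat S=S$. I would resolve this either by reading the case split with $N>1/\piref(S)$ for the value $0$ and $N\le 1/\piref(S)$ for the other value, which is what the case analysis above actually shows, or by noting that at equality the middle-case argument applies verbatim. Apart from this boundary, the only care needed is checking that $\lambda\in[0,1)$ and $\lambda<\beta$, so that the form of the solution in Step 2 is valid throughout the range $1<\beta\le1/\piref(\hat S)$.
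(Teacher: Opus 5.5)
Your proof is correct and follows the paper's route. In the zero-loss regime you apply \cref{lem:kl_projection} with $f_y=\beta\piref(y)\mathbf{1}\{y\in\hat S\}$, use $\lambda\le 1$ to show that every unobserved action in $S$ stays uncovered, and exhibit an explicit feasible $\beta$ attaining $\piref(S\setminus\hat S)/\piref(S)$. Your boundary remark is also right: the paper's proof counts a ratio exactly equal to $N$ as covered, both at $N=1/\piref(S)$ with $\beta=1$ and in its achievability choice $\beta=(\piref(S)N)^{-1}$, which puts the ratio on $\hat S$ exactly at $N$; the non-strict definition of $\Pcov_N$ does not allow either, so your choice $\beta=1/\piref(\hat S)$, which gives ratio $\piref(\hat S)/\piref(S)<N$ on $\hat S$, is the sound one.
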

\begin{proof}[Proof of~\cref{lem:offset-coverage}]
If $N \geq 1/\piref(S)$ then take $\beta=1$ so that $\piref$ itself is a minimizer of $\loff$, i.e., $\loff(\piref) = 0$. With this choice of $\beta$, clearly $\piref$ is the solution to~\cref{eq:theory_main_opt}. To compute the coverage, observe that for every action $y$, $\pi^\star(y)/\piref(y) = 1/\piref(S) \leq N$. Thus, in this case $\Pcov(\piref) = 0$ and with optimally tuned $\beta$, the KL projection onto the set of offset-loss minimizers preserves this coverage. 

If $1<N<1/\piref(S)$ we first translate the offset loss optimization problem into the form in~\cref{lem:kl_projection} and then compute the normalizing constant $\lambda$ in the distribution. Observe that the offset loss only involves observed actions $y \in \hat{S}$. If $\beta$ is such that $0$ offset loss is feasible, then we have
\begin{align*}
    \loff(\pi) = 0 \Leftrightarrow \forall y \in \hat{S}:~ \pi(y) \geq \beta\piref(y) \textrm{ and } \forall y \notin \hat{S}: \pi(y) \geq 0
\end{align*}
Here the first condition arises from achieving zero loss, while the second condition arises because $\pi$ must be a distribution. Thus the offset loss version of~\cref{eq:theory_main_opt} corresponds to taking $f_y = \beta\piref(y)$ if $y \in \hat{S}$ and $f_y = 0$ otherwise in~\cref{lem:kl_projection}. Next, observing that every action in the minimizing distribution has mass at least $\lambda \piref(y)$ we get that $\lambda \leq 1$. Since $f_y = 0$ for actions $y \notin \hat{S}$, we have that $\pioff(y) = \lambda \piref(y)$ for all $y \notin\hat{S}$. Such actions that are further supported by $\pi^\star$ are uncovered, since
\begin{align*}
    \frac{\pi^\star(y)}{\pioff(y)} \geq \frac{\pi^\star(y)}{\piref(y)} = \frac{1}{\piref(S)}  > N
\end{align*}
The first inequality uses that $\lambda \leq 1$ and the second inequality is by our assumed regime for $N$. Thus we obtain for any feasible $\beta$ we obtain a coverage lower bound
\begin{align*}
    \Pcov_N(\pioff) \geq \frac{\piref(S\setminus\hat{S})}{\piref(S)}.
\end{align*}

To show that this is achievable take $\beta = (\piref(S)N)^{-1} > 1$ and observe that
\begin{align*}
    \beta \piref(S) = 1/N < 1. 
\end{align*}
Thus if we set $\pi(y) = \beta \piref(y)$ for $y\in S$ we achieve 0 loss and have $\sum_y \pi(y) < 1$; we can allocate the remaining mass arbitrarily to preserve 0 offset loss and obtain a distribution. 

Since zero-loss is achievable, the KL projection ensures that $\pioff(y) \geq \beta\piref(y)$ for all $y \in \hat{S}$ as this is required to achieve zero loss. By the choice of $\beta$ this actions are covered. On the other hand, all other actions have $\pioff(y) = \lambda \piref(y) \leq \piref(y)$ using the argument above that $\lambda \leq 1$. These actions are all not covered under the condition that $1/\piref(S)>N$. Thus this choice of $\beta$ achieves coverage exactly $\piref(S\setminus\hat{S})/\piref(S)$. 
\end{proof}

\begin{proof}[Proof of Claim 1 of~\cref{thm:main}]
We first argue about $\pierm$. Note that the ERM is unique and is exactly the empirical distribution over the sample. In particular, we have $\pierm(S \setminus \hat{S}) = 0$, and therefore these actions are always uncovered, so $\Pcov_N(\pierm)\geq \piref(S\setminus\hat{S})/\piref(S)$. 

Next we turn to $\piabs$. Here we consider two cases. First consider the case that $\alpha$ is large enough such that zero loss is not achievable. In this case, if we have a policy $\pi$ that allocates mass to unobserved actions, we can always strictly improve the loss by moving that mass onto some observed action that is not already saturated, and such an action must exist if zero loss is not achievable. Thus the loss minimizer must allocate no mass on unobserved actions. In this case the same reasoning as we applied to analyze the ERM holds. 

Next, consider that $\alpha$ is small enough such that zero loss is achievable. As with the offset loss, we can translate the optimization problem to the form in~\cref{lem:kl_projection} and take $f_y = \alpha\one\{y \in \hat{S}\}$. However, as in the proof of~\cref{lem:offset-coverage}, we still require that $\lambda \leq 1$. As before when $1< N < 1/\piref(S)$, this implies that all unobserved actions in $S$ are uncovered, establishing a lower bound of $\piref(S \setminus\hat{S})/\piref(S)$ in this regime. 
\end{proof}

\begin{proof}[Proof of Claim 2 in~\cref{thm:main}]
    We construct an instance with three types of arms with $|\mathcal{Y}| = 1 + n/4 + 3n$ for some $n \in \mathbb{N}$ such that $\mathcal{Y} = \{o\}\cup H\cup L$ where $|H| = n/4$ and $L = 3n$. Define
    \begin{align*}
        \piref(o) = \frac{1}{2}, \quad \piref(h) = \frac{1}{n}, h \in H, \quad \piref(\ell) = \frac{1}{12n}, \ell \in L.
    \end{align*}
    Set $S = H\cup L$ such that
    \begin{align*}
        \pi^\star(h) = \frac{2}{n}, h \in H, \quad \pi^\star(\ell) = \frac{1}{6n}, \ell \in L
    \end{align*}

    Recall that $\hat{S}$ is the set of actions observed in the sample. The high probability event is the intersection of two events: (1) $|\hat{S}|$ is sufficiently large and (2) $|\hat{H}^{(1)}|$ is sufficiently large, where $\hat{H}^{(1)}$ is the set of actions in $H$ that are observed exactly once. 
    
    To control $|\hat{S}|$, we first calculate:
    \begin{align*}
        \mathbb{E} | \hat{S} \cap H | &= \sum_{h \in H} \Pr[ h \in \hat{S}] = \frac{n}{4} (1 - (1-2/n)^n) \geq \frac{n}{4} (1 - e^{-2})\\
        \mathbb{E} |\hat{S} \cap L| & = \sum_{\ell \in L} \Pr[\ell \in \hat{S}] = 3n (1 - (1-1/(6n))^n) \geq 3n(1 - e^{-1/6})
    \end{align*}
    Observe that $|\hat{S}| = |\hat{S}\cap H| + |\hat{S} \cap L|$ and that the random variable $|\hat{S}|$ satisfies the conditions of McDiarmid's inequality with constant $1$. Thus with probability at least $1-\delta/2$ we have
    \begin{align*}
        |\hat{S}| \geq \mathbb{E}|\hat{S}| - \sqrt{ \frac{n \log(2/\delta)}{2}} \geq \frac{n}{4} (1 - e^{-2}) + 3n(1 - e^{-1/6}) - \sqrt{\frac{n \log(2/\delta)}{2}}.
    \end{align*}
    The constant on the $\Theta(n)$ term is at least $0.676$, and for $\delta = \mathrm{poly}(1/n)$ the $\Theta(n)$ term dominates, and so we have that for $n$ sufficiently large, $|\hat{S}| \geq 2n/3$. 

    We control $|\hat{H}^{(1)}|$ similarly:
    \begin{align*}
        \mathbb{E}|\hat{H}^{(1)}| = \sum_{h \in H} \Pr[B_h = 1] = \frac{n}{4} \left( n \cdot \frac{2}{n} \cdot (1 - 2/n)^{n-1}\right) \geq \frac{n}{2}\cdot(1-2/n)^{n-1}
    \end{align*}
    Here $B_h$ is a Binomially distributed random variable with parameters $(n,2/n)$. Again we apply McDiarmid's inequality, here $|\hat{H}^{(1)}|$ satisfies the conditions with a constant of $2$, so that with probability at least $1-\delta/2$ we have
    \begin{align*}
        |\hat{H}^{(1)}| \geq \mathbb{E}|\hat{H}^{(1)}| - \sqrt{2n \log(2/\delta)} \geq \frac{n}{2}\cdot(1-2/n)^{n-1} - \sqrt{2n \log(2/\delta)}
    \end{align*}
    Observe that the first term asymptotically approaches $n\cdot e^{-2}/2$ while the second term is lower order in $n$. Therefore for $n$ sufficiently large, we can see that $|\hat{H}^{(1)}| \geq e^{-2}/3$ with high probability. 

    Next we turn to the coverage calculations. Since $N=4/3$ satisfies $1 < N < 1/\piref(S) = 2$ we know that $\inf_{\beta\geq 0}\Pcov(\pioff) = \piref(S\setminus\hat{S})/\piref(S)=\pi^\star(S\setminus\hat{S})$. This is exactly the expert mass of the unobserved set. On the other hand $\pierm$ matches exactly the empirical frequencies in the data. In particular, we have $\pierm(h) = 1/n$ for $h \in \hat{H}^{(1)}$. This implies
    \begin{align*}
        \Pcov(\pierm) \geq \sum_{y \in S \setminus \hat{S}} \pi^\star(y) + \sum_{y \in \hat{H}^{(1)}} \frac{2}{n} \mathbf{1}\left\{ \frac{2/n}{1/n} \geq N\right\} = \pi^\star(S \setminus \hat{S}) + \pi^\star(\hat{H}^{(1)})
    \end{align*}
    Finally for $\piabs$ we need to show that no absolute clipping threshold improves on this value. To cover any $h \in \hat{H}^{(1)}$ we need $\piabs(h) \geq 3/(2n)$. Recall that we only see $|\hat{S}|$ actions in the dataset. If we set $\alpha \leq 1/|\hat{S}|$ then we can achieve zero loss but from~\cref{lem:kl_projection} we have
    \begin{align*}
    \piabs(h) = \max(\alpha, \lambda  \piref(h)) \leq \max(1/|\hat{S}|, 1/n) < 3/(2n).
    \end{align*}
    Here recall that $\lambda \leq 1$ (otherwise $\piabs(h)$ will not be a distribution) and that we have $|\hat{S}| > \frac{2n}{3}$ with high probability. In this regime, unobserved actions get mass $\lambda \piref(h)$ but this is insufficient to cover $\pi^\star$ (since $2=\pi^\star(y)/\piref(y)=2 \geq N=4/3$). 
    
    On the other hand, if $\alpha > 1/|\hat{S}|$ then, by~\cref{lem:clipped_minimizer} the unique minimizer of the absolute clipped loss is
    \begin{align*}
        \piabs(y) = \min(\alpha, \lambda \hat{\mu}(y))
    \end{align*}
    where $\hat{\mu}(y)$ is the empirical distribution. The normalizing condition gives,
    \begin{align*}
    1 \geq \sum_{y \in \hat{S}}\min( \alpha, \lambda \hat{\mu}(y)) \geq |\hat{S}| \min(\alpha, \lambda/n),
    \end{align*}
    which follows because every observed action is observed at least once, so it has empirical mass at least $1/n$. Since $\alpha > 1/|\hat{S}|$ we must have $\lambda/n \leq 1/|\hat{S}|$ and therefore actions $h \in \hat{H}^{(1)}$ have $\piabs(h) \leq 1/|\hat{S}| \leq \frac{3}{2n}$ (while unobserved actions have $\piabs(y) = 0$).

    Thus in both cases we have
    \begin{align*}
        \Pcov(\piabs) \geq \pi^\star(S \setminus \hat{S}) + \pi^\star(\hat{H}^{(1)})
    \end{align*}
    The high probability event implies that $|\hat{H}^{(1)}|>0$ and so $\pi^\star(\hat{H}^{(1)}) > 0$, establishing strict separation. 
\end{proof}
\begin{proof}[Proof of Claim 3 in~\cref{thm:main}]
We use the same construction as in the proof of Claim 2. Recall that in that construction, zero absolute clipped loss is achievable only if $\alpha\leq 1/|\hat{S}|$. Let $\hat{H}^{(>1)}$ denote the set of actions in $H$ that are observed at least twice in the sample $\hat{S}$. Since $\alpha \leq 1/|\hat{S}|\leq 3/(2n)$ (since with high probability we have $|\hat{S}| \geq 2n/3$), via~\cref{lem:kl_projection}, the absolute clipped loss solution satisfies
\begin{align*}
\piabs(h) = \max(\alpha, \lambda \piref(h)) \leq \max(1/|\hat{S}|, 1/n) < 3/(2n) = \pi^\star(h)/N,
\end{align*}
if we take $N = 4/3$ as above. Thus all actions in $\hat{H}^{(<1)}$ violate coverage. We already saw that all actions in $\hat{H}^{(1)}$ violate coverage as well as all unobserved actions in $L$. Thus the coverage is
\begin{align*}
    \inf_{\alpha: \min_{\pi} \labs(\pi) = 0} \Pcov(\piabs) = \frac{1}{2} + \pi^\star(L \setminus \hat{S}).
\end{align*}
On the other hand, ERM covers actions in $\hat{H}^{(>1)}$, because for any $h \in \hat{H}^{(>1)}$, we have $\pierm(h) \geq 2/n$ while $\pi^\star(h) = 2/n$. Thus ERM strictly dominates absolute clipping whenever $|\hat{H}^{(>1)}| > 0$. Since
\begin{align*}
    \mathbb{E}|\hat{H}^{(>1)}| = \sum_{h \in H} \Pr[B_h > 1] = \frac{n}{4}\cdot (1 - (1-2/n)^n - 2(1-2/n)^{n-1})
\end{align*}
where $B_h \sim \textrm{Binomial}(n, 2/n)$, and since $|\hat{H}^{(>1)}|$ satisfies the conditions of McDiarmid's inequality with constant 1, we have that with probability at least $1-\delta$
\begin{align*}
|\hat{H}^{(>1)}| \geq \frac{n}{4}\cdot (1 - (1-2/n)^n - 2(1-2/n)^{n-1}) - \sqrt{\frac{n}{2}\log(1/\delta)}.
\end{align*}
For sufficiently large $n$ the first term is larger than $(1 - 3e^{-2})n/5$ and the second term is lower order, so for sufficiently large $n$ we get that $|H^{(>1)}|>1$ with high probability. 
\end{proof}

\section{Details for Graph Navigation Experiments}
\label{app:synthetic}
In this section, we describe the task, training details, and results for the graph navigation experiments presented in~\cref{fig:synthetic_main}. The setting is closely related to that of~\cite{chen2025coverageprinciple}; we present all details for completeness but highlight where our experiments diverge from theirs. 

\subsection{Graph Reasoning Task and Data}
Following~\citet{chen2025coverageprinciple}, we use a path-following task in a directed acyclic graph (DAG) as an abstraction for reasoning problems. The setup builds on a long line of work using synthetic tasks to understand phenomena in language modeling and serves as a minimal, expressive, yet flexible setting to develop interventions like TailSFT. 

The task is path-following a directed acyclic graph. Each prompt $x \in \Xcal$ encodes a graph $G$ along with a source node $s$ and a target node $t$, such that $x = (G,s,t)$, and each response $y \in \Ycal$ ideally encodes an $s\to t$ path via a sequence of vertices $(s, v_1, \ldots, v_n, t)$. In our experiments we fix a single prompt distribution $\mu \in \Delta(\Xcal)$ for both pre-training and SFT stages, however we use different data-collection policies $\pipre, \pisft: Y \to \Delta(\Xcal)$, both of which map prompts to (distributions over) $s\to t$ paths. 

All graphs $G$ are directed layered graphs with $10$ total layers and with edges only between consecutive layers. The source vertex $s$ is the only vertex in layer $L=1$ and the target vertex $t$ is the only vertex in layer $10$. The subsequent 8 layers have 4 vertices each. For each of layers $L \in \{1,\ldots,9\}$, a subset of the vertices in that layer are called \emph{passable}. The edge structure is such that every passable node in layer $\ell$ has directed edges to all nodes in layer $\ell+1$. Nodes that are not passable have no out-going edges. Thus a valid $s\to t$ path consists of $10$ total nodes $(s, v_1,\ldots,v_8, t)$ where $v_\ell$ is in layer $\ell$ and each $v_\ell$ must be passable. 

Graphs are parameterized by a number $k \in \{0,\ldots,8\}$ denoting the number of layers with \emph{two} passable nodes; the remaining layers have exactly \emph{one} passable node. Given $k$ we choose the layers with two passable nodes at random. Choosing the passable nodes is more intricate. We identify every vertex with a number $\in \{0,\ldots,99\}$, such that in each of the intermediate layers, at least one vertex is \emph{odd} and at least one vertex is \emph{even}. When there is just one passable vertex in a layer, it is chosen uniformly at random. When there are two passable vertices, they are chosen at random such that one is odd and one is even. Thus, there are $2^k$ valid $s \to t$ paths in each graph. We use $k=3$ for all experiments.

As described above, pre-training and SFT share the same prompt distribution. These are graphs of the above structure with vertex names assigned randomly (subject to the aforementioned even/odd restrictions in each layer), with passable nodes assigned randomly, and with layers with two passable nodes selected randomly. For pre-training, the data-collection policy $\pipre$ selects an $s \to t$ path uniformly at random from all $2^k$ valid paths. 

For SFT, the data-collection policy is more complex. First the policy computes a certain ``cryptographic'' function of the input graph to determine if the graph belongs to \textsc{shard0} or \textsc{shard1} (specifically, the cryptographic function is the and of the parity of the vertices layers 1-5 and the parity of the vertices in layers 6-10). The shard determines a certain rule for how the policy chooses vertices in the layers where there are two passable nodes. Specifically in \textsc{shard0} the policy alternates between agreeing with the parity of the target vertex $t$ and disagreeing with the parity of target node $t$, such that it agrees with the target vertex parity in the first layer with two passable nodes, disagrees in the second, and so on. In \textsc{shard1} the policy does exactly the opposite, it disagrees with the target vertex parity in the first layer with two passable nodes, agrees in the second, and so on. 

Intuitively, learning the pre-training policy's response distribution is relatively easy as it only requires local rules such as identifying all passable nodes in the subsequent layer and choosing one at random. Pre-training on this distribution teaches the model to understand the input format and the high-level task. On the other hand, learning the SFT policy's response distribution is very challenging, as it requires identifying global structure both to determine the shard and to determine which passable vertex to select in each layer. 

To measure the performance of the trained model, we observe that the SFT policy is deterministic (for any input graph $x$, there is a single path in the support of the SFT policy's distribution $\pisft(\cdot \mid x)$) and set the reward to be $1$ if and only if the path chosen by the trained model matches the path selected by the SFT policy. Since learning the SFT policy's response distribution is challenging, it is corresponding quite challenging to achieve high reward in this task. 

Inputs and outputs are represented as follows. First, all vertices, including the source and target, are assigned a number in $\{0,\ldots,99\}$. Then, the input prompt is represented as an edge list followed by the source and target nodes, formatted as:
\begin{align*}
x: ~~ \texttt{u\_1 v\_1 | u\_2 v\_2 | \ldots | u\_k v\_k / s t =}
\end{align*}
where $u_i,v_i$ are the numerical values assigned to the vertices in the $i$th edge. We use the delimiter characters \texttt{|}, \texttt{/}, and \texttt{=} to separate edges from each other, the edge list from the source and target vertices, and the input from the response, respectively. Responses $y$ are formatted as:
\begin{align*}
    y: ~~ \texttt{v\_1 v\_2 v\_2 v\_3 \ldots v\_{9} v\_10}.
\end{align*}

\subsection{Training Details}

We use the same numerical tokenizer and transformer architecture as~\citet{chen2025coverageprinciple}. For tokenization, each node is $v$ is tokenized as its numerical value and the special characters are tokenized as $100, 101, 102$. The architecture is a GPT2-style transformer model with 4 heads, 6 transformer blocks and a 384 dimensional embeddings. We used absolute positional encodings. We always use the Adam optimizer with a fixed learning($1\times{}10^{-4}$ during pre-training and $5\times{}10^{-6}$ during SFT) and a batch size of 128. Pre-training operates in an offline training setting with a fixed pool of 256,000 graphs while SFT operates in an online training setting with new graphs generated on-the-fly for each batch. We pre-train for 200k training steps and SFT for 50k training steps with evaluation every 200 steps. 

\begin{figure}[t]
    \centering
    \includegraphics[width=\textwidth]{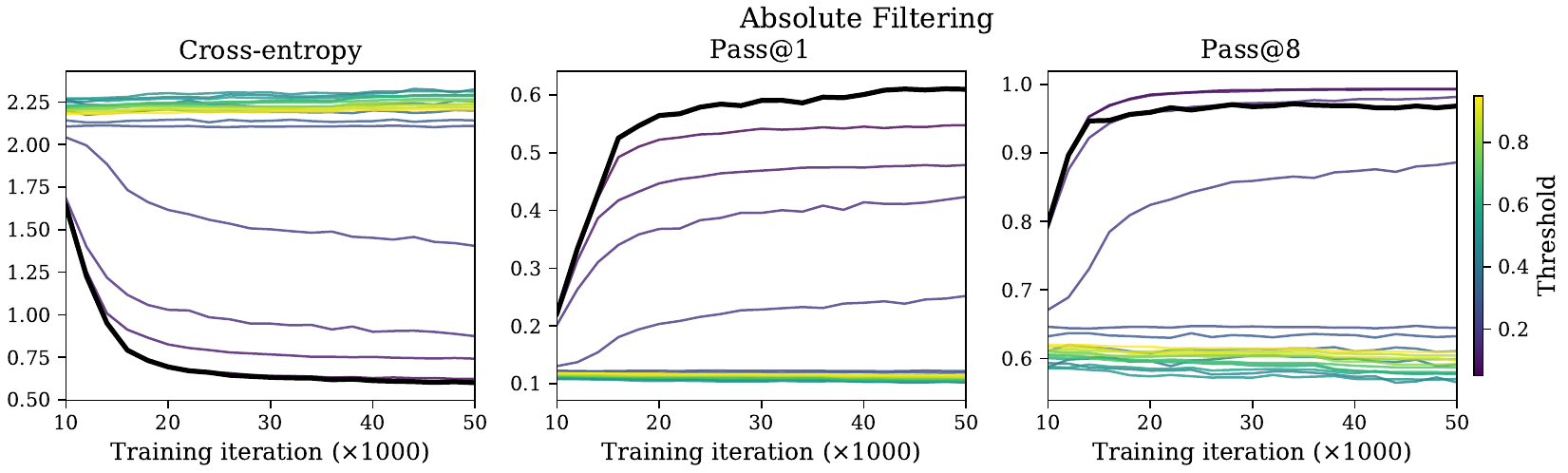}
    \vspace{-0.5cm}
    \caption{Detailed results of hyperparameter sweep for TailSFT with absolute clipping in the synthetic graph navigation experiment.}
    \vspace{-0.5cm}
    \label{fig:synthetic_absolute}
\end{figure}
\begin{figure}[t]
    \centering
    \includegraphics[width=\textwidth]{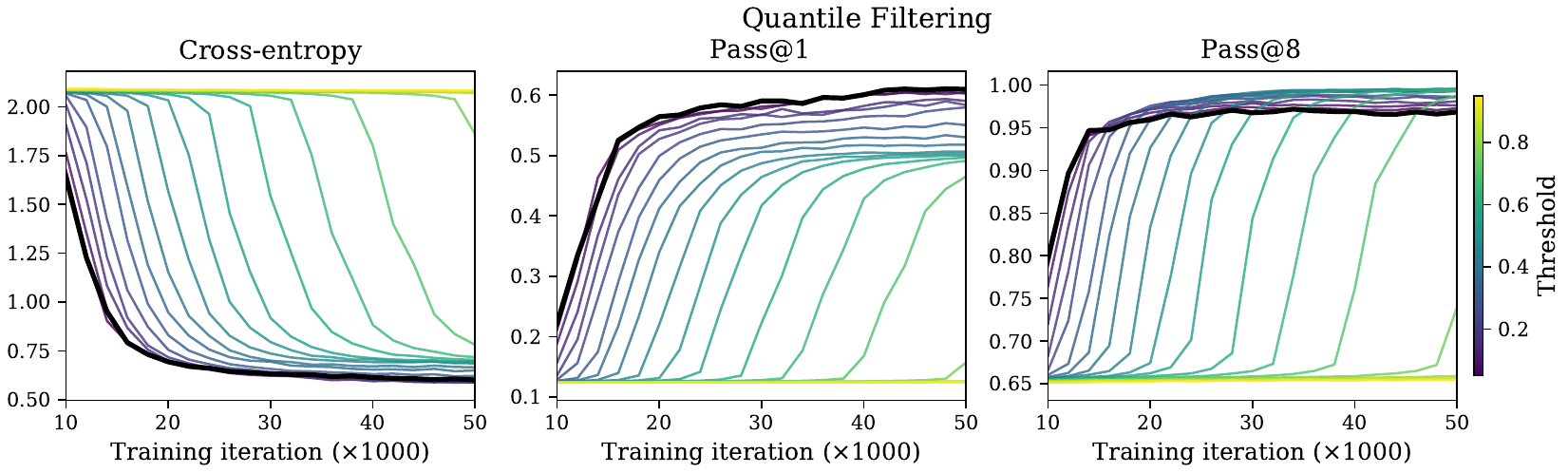}
    \vspace{-0.5cm}
    \caption{Detailed results of hyperparameter sweep for TailSFT with quantile clipping in the synthetic graph navigation experiment.}
    \vspace{-0.5cm}
    \label{fig:synthetic_quantile}
\end{figure}
\begin{figure}[t]
    \centering
    \includegraphics[width=\textwidth]{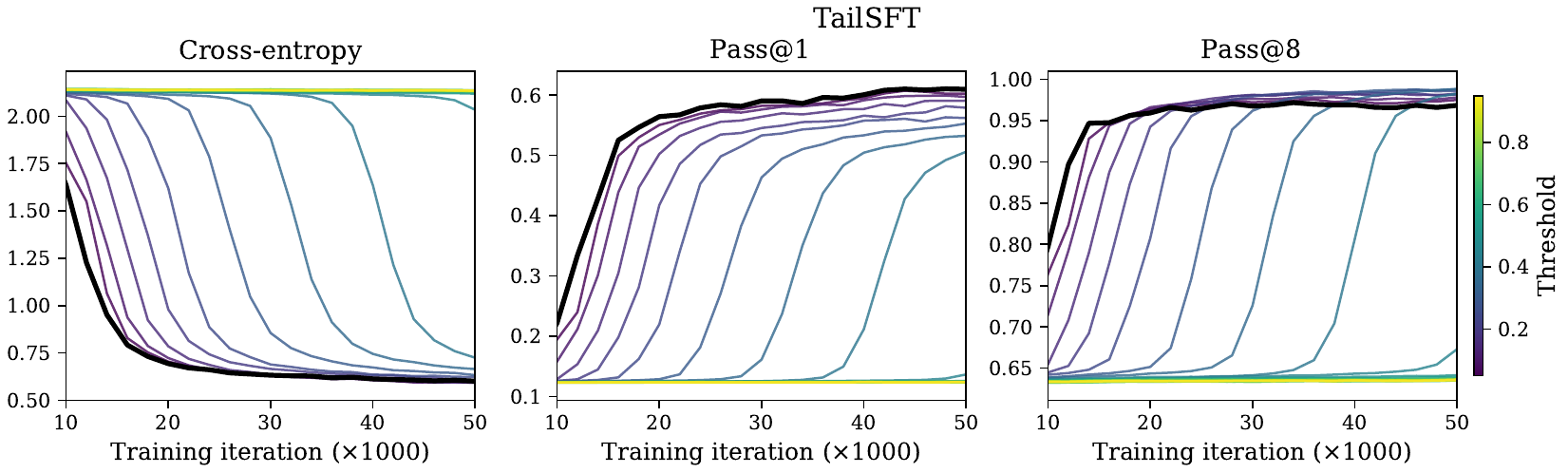}
    \vspace{-0.5cm}
    \caption{Detailed results of hyperparameter sweep for TailSFT with offset quantile clipping in the synthetic graph navigation experiment.}
    \vspace{-0.5cm}
    \label{fig:synthetic_offset}
\end{figure}

\subsection{Additional Experimental Results}
In~\cref{fig:synthetic_main} we show the cross entropy, pass@1, and pass@8 for each of four methods: vanilla SFT, TailSFT with absolute clipping, TailSFT with quantile clipping, and TailSFT with offset quantile clipping using the pre-trained model as the reference. For the TailSFT variants we sweep over clipping hyperparameter and select the best hyperparameter based on pass@8 performance at the end of training. The specific hyperparameter selected are 0.05 for absolute, 0.4 for quantile, and 0.35 for offset quantile. 

In~\cref{fig:synthetic_absolute,fig:synthetic_quantile,fig:synthetic_offset}, we show detailed results of the hyperparameter sweep for the three TailSFT variants. In all cases, we sweep the clipping hyperparameter in the range $0.05\times\{1\ldots,19\}$. For all variants clipping at value $0$ corresponds to vanilla SFT as no sequences are dropped. We highlight two observations. First, absolute clipping is much more sensitive to the clipping hyperparameter than quantile or offset quantile variants. This is rather intuitive, as setting an absolute threshold for the cross-entropy loss requires understanding how the distribution of per-example cross-entropy losses evolves during training. On the other hand, setting the threshold as a per-batch quantile (either on the loss or on the offset loss) is much easier and these methods are much more robust. Indeed, the second observation is that for quantile-variants of TailSFT, setting the threshold too small recovers baseline SFT performance and setting the threshold too large does not impact pass@8 performance, but rather slows training convergence. This is also intuitive, as a large clipping parameter still focuses training on the harder examples but sacrifices computational efficiency as fewer per-sample gradients are utilized. 

\section{Details for Language Modeling Experiments}
\subsection{SFT}
\label{app:sft}

The language modeling experiments use the same \tailsft{} algorithm (\Cref{alg:tailsft}) for both math and code but
differ in data, prompt format, hyperparameter grid, and evaluation.  We
describe the shared procedure here; the code (\cref{app:sft-code}) and math
(\cref{app:sft-math}) subsections give the domain-specific details.

\paragraph{Supervised fine-tuning}
Every run starts from the OLMo-3 7B base model and minimizes the standard
supervised cross-entropy loss on chat-formatted instruction/response pairs.  We
apply the OLMo-3 chat template and mask every prompt token, so only the final
assistant turn and its terminating end-of-turn token count toward the loss.  All
runs use AdamW with $(\beta_1,\beta_2)=(0.9,0.95)$ and $\epsilon=10^{-8}$,
gradient-norm clipping at $1.0$, $8$-way data parallelism, and a learning rate
held constant after a linear warmup over the first $3\%$ of steps.  Sequence
length, weight decay, microbatch size, effective batch, epoch count, and the
learning-rate grid are set per domain and given in each subsection.  The per-step
loss is length-normalized as in OLMo-3: we sum the target-token
cross-entropy over the batch and divide by the number of unmasked target tokens,
so every target token is weighted equally rather than every sequence.

\paragraph{Example filtering (\tailsft{})}
\tailsft{} drops a fraction of the already-fit examples from the loss at each step.
Before training we score every example $x$ once with the base model, using the
same tokenization and assistant-only mask as SFT, to get its initialization loss
$\ell_0(x)$, the mean cross-entropy over the target tokens.  During training we
compare the current per-example loss $\ell_t(x)$ against this baseline through the
signed margin
\[
    m_t(x) = \ell_t(x) - \ell_0(x).
\]
A very negative margin means the loss on $x$ has already dropped well below its
starting value.  We filter out
the examples with the smallest (most negative) margins and train on the rest.
Both $\ell_0$ and $\ell_t$ are length-normalized per example---the mean
cross-entropy over that example's target tokens---so the filtering decision does
not favor longer or shorter responses.

The filtering decision runs on the \emph{selection batch}, the $W\times b$
examples in a single forward pass across the $W=8$ data-parallel ranks with
per-rank microbatch $b$.  The per-example losses are all-gathered so every rank
computes the same mask.  We do not filter over the full gradient-accumulated
optimizer batch or over one rank's local microbatch; each accumulation step is
filtered on its own selection batch.  Within a selection batch we rank the
examples by $m_t$ and zero out the $\lfloor Wb\,f\rceil$ with the smallest
margins, then train the survivors with the usual token-averaged cross-entropy.
The code runs use $b=1$ (an $8$-example selection batch) and the math runs use
$b=2$ (a $16$-example selection batch).  Reproducing the method needs only the
base-model losses $\ell_0$, the filter fraction $f$, and its schedule; everything
else is ordinary distributed SFT.

\paragraph{Filtering notation}
A run is specified by its filter fraction $f\in[0,1]$, the fraction of each
selection batch that is dropped ($\lfloor Wb\,f\rceil$ examples), and a schedule
for how $f$ moves over training.  A \emph{static} schedule holds $f$ fixed.  A
\emph{ramp} schedule written $0\!\to\!f$ raises the fraction linearly from $0$ at
the first step to $f$ at the last, using every example early and filtering
hardest late.  Standard SFT is $f=0$.  We describe every run by its filter
fraction $f$ and schedule in the below results.

\paragraph{Filtering rarely hurts}
The consistent finding across both domains is that filtering preserves pass@16
coverage.  At matched settings it almost always matches or improves the no-filter
pass@16, the gains are sometimes large, and the only cost is a small pass@1
decrease from deprioritizing examples the model already fits.  Each subsection
reports the full per-benchmark grids and tallies how rarely filtering regresses
coverage.  \cref{tab:sft-base-p1,tab:sft-base-p16} collect the base model, Standard
SFT, and \tailsft{} side by side at the main-table settings for pass@1 and pass@16.
In several cases where standard SFT marginally lowers pass@16 below the base model,
\tailsft{} rescues the coverage, recovering it back above the base level.
If we instead select the best pass@16 checkpoint for
each of Standard SFT and \tailsft{}, \tailsft{} still matches or improves
coverage on most benchmarks.

\begin{table}[t]
\centering
\footnotesize
\caption{Per-benchmark pass@1 for the OLMo-3 7B base model (no SFT), Standard SFT ($f{=}0$), and \tailsft{}, at the settings used in the main results table. $\Delta_{\mathrm{Std}-\mathrm{Base}}$ is Standard SFT minus base; $\Delta_{\mathrm{Tail}-\mathrm{Base}}$ is \tailsft{} minus base. Emdashes indicate missing evaluations.}
\label{tab:sft-base-p1}
\begin{tabular}{@{}ll ccc cc@{}}
\toprule
SFT data & Benchmark & Base & Standard SFT & \tailsft{} & $\Delta_{\mathrm{Std}-\mathrm{Base}}$ & $\Delta_{\mathrm{Tail}-\mathrm{Base}}$ \\
\midrule
OMI & AIME & $5.16$ & $3.65_{\pm0.37}$ & $4.03_{\pm0.02}$ & $-1.51$ & $-1.13$ \\
 & MATH-500 L5 & \textemdash & $24.80_{\pm0.40}$ & $25.16_{\pm0.40}$ & \textemdash & \textemdash \\
 & OMEGA-500 & $3.54_{\pm0.30}$ & $6.58_{\pm0.50}$ & $6.51_{\pm0.26}$ & $+3.04$ & $+2.97$ \\
\midrule
BigCode & MBPP+ & $36.82_{\pm0.56}$ & $53.04_{\pm0.93}$ & $51.36_{\pm0.14}$ & $+16.22$ & $+14.54$ \\
 & HumanEval+ & $32.94_{\pm1.11}$ & $45.06_{\pm0.57}$ & $46.33_{\pm1.07}$ & $+12.12$ & $+13.39$ \\
 & CruxEval-I & $25.67_{\pm0.62}$ & $29.79_{\pm0.48}$ & $30.43_{\pm0.15}$ & $+4.12$ & $+4.76$ \\
 & CruxEval-O & $4.22_{\pm0.74}$ & $4.16_{\pm0.44}$ & $13.18_{\pm0.30}$ & $-0.06$ & $+8.96$ \\
 & LiveCodeBench & $5.50_{\pm0.42}$ & $10.74_{\pm0.46}$ & $11.49_{\pm0.38}$ & $+5.24$ & $+5.99$ \\
\midrule
Magicoder & MBPP+ & $36.82_{\pm0.56}$ & $55.35_{\pm0.47}$ & $54.60_{\pm0.55}$ & $+18.53$ & $+17.78$ \\
 & HumanEval+ & $32.94_{\pm1.11}$ & $48.49_{\pm0.58}$ & $47.69_{\pm0.19}$ & $+15.55$ & $+14.75$ \\
 & CruxEval-I & $25.67_{\pm0.62}$ & $28.50_{\pm0.13}$ & $26.48_{\pm0.45}$ & $+2.83$ & $+0.81$ \\
 & CruxEval-O & $4.22_{\pm0.74}$ & $12.86_{\pm0.19}$ & $16.16_{\pm0.53}$ & $+8.64$ & $+11.94$ \\
 & LiveCodeBench & $5.50_{\pm0.42}$ & $14.84_{\pm0.11}$ & $14.32_{\pm0.38}$ & $+9.34$ & $+8.82$ \\
\midrule
OCI & MBPP+ & $36.82_{\pm0.56}$ & $58.26_{\pm0.88}$ & $55.81_{\pm0.06}$ & $+21.44$ & $+18.99$ \\
 & HumanEval+ & $32.94_{\pm1.11}$ & $54.76_{\pm1.16}$ & $51.94_{\pm1.83}$ & $+21.82$ & $+19.00$ \\
 & CruxEval-I & $25.67_{\pm0.62}$ & $29.46_{\pm0.46}$ & $30.17_{\pm0.64}$ & $+3.79$ & $+4.50$ \\
 & CruxEval-O & $4.22_{\pm0.74}$ & $9.30_{\pm0.49}$ & $10.84_{\pm0.37}$ & $+5.08$ & $+6.62$ \\
 & LiveCodeBench & $5.50_{\pm0.42}$ & $12.24_{\pm0.36}$ & $11.49_{\pm0.84}$ & $+6.74$ & $+5.99$ \\
\bottomrule
\end{tabular}
\end{table}

\begin{table}[t]
\centering
\footnotesize
\caption{Per-benchmark pass@16 (coverage) for the OLMo-3 7B base model (no SFT), Standard SFT ($f{=}0$), and \tailsft{}, at the settings used in the main results table. Columns as in \cref{tab:sft-base-p1}. }
\label{tab:sft-base-p16}
\begin{tabular}{@{}ll ccc cc@{}}
\toprule
SFT data & Benchmark & Base & Standard SFT & \tailsft{} & $\Delta_{\mathrm{Std}-\mathrm{Base}}$ & $\Delta_{\mathrm{Tail}-\mathrm{Base}}$ \\
\midrule
OMI & AIME & $20.91$ & $15.24_{\pm2.02}$ & $18.31_{\pm0.57}$ & $-5.67$ & $-2.60$ \\
 & MATH-500 L5 & \textemdash & $66.42_{\pm0.00}$ & $69.15_{\pm0.86}$ & \textemdash & \textemdash \\
 & OMEGA-500 & $26.13_{\pm1.42}$ & $32.80_{\pm2.25}$ & $32.60_{\pm1.56}$ & $+6.67$ & $+6.47$ \\
\midrule
BigCode & MBPP+ & $75.93_{\pm1.06}$ & $74.69_{\pm1.10}$ & $78.84_{\pm1.06}$ & $-1.24$ & $+2.91$ \\
 & HumanEval+ & $79.27_{\pm0.61}$ & $76.63_{\pm0.70}$ & $80.08_{\pm0.93}$ & $-2.64$ & $+0.81$ \\
 & CruxEval-I & $70.50_{\pm1.54}$ & $61.71_{\pm2.27}$ & $65.79_{\pm0.26}$ & $-8.79$ & $-4.71$ \\
 & CruxEval-O & $32.08_{\pm1.98}$ & $24.21_{\pm2.38}$ & $41.00_{\pm1.35}$ & $-7.87$ & $+8.92$ \\
 & LiveCodeBench & $32.30_{\pm0.93}$ & $30.39_{\pm0.86}$ & $33.12_{\pm0.34}$ & $-1.91$ & $+0.82$ \\
\midrule
Magicoder & MBPP+ & $75.93_{\pm1.06}$ & $75.31_{\pm0.93}$ & $78.66_{\pm1.25}$ & $-0.62$ & $+2.73$ \\
 & HumanEval+ & $79.27_{\pm0.61}$ & $77.85_{\pm0.70}$ & $78.66_{\pm1.83}$ & $-1.42$ & $-0.61$ \\
 & CruxEval-I & $70.50_{\pm1.54}$ & $59.75_{\pm0.62}$ & $68.08_{\pm0.94}$ & $-10.75$ & $-2.42$ \\
 & CruxEval-O & $32.08_{\pm1.98}$ & $38.08_{\pm0.94}$ & $47.92_{\pm1.66}$ & $+6.00$ & $+15.84$ \\
 & LiveCodeBench & $32.30_{\pm0.93}$ & $31.81_{\pm0.34}$ & $33.22_{\pm0.34}$ & $-0.49$ & $+0.92$ \\
\midrule
OCI & MBPP+ & $75.93_{\pm1.06}$ & $81.22_{\pm0.75}$ & $82.36_{\pm0.81}$ & $+5.29$ & $+6.43$ \\
 & HumanEval+ & $79.27_{\pm0.61}$ & $85.98_{\pm1.61}$ & $83.23_{\pm2.16}$ & $+6.71$ & $+3.96$ \\
 & CruxEval-I & $70.50_{\pm1.54}$ & $68.08_{\pm1.77}$ & $71.58_{\pm1.91}$ & $-2.42$ & $+1.08$ \\
 & CruxEval-O & $32.08_{\pm1.98}$ & $40.12_{\pm0.00}$ & $44.46_{\pm0.47}$ & $+8.04$ & $+12.38$ \\
 & LiveCodeBench & $32.30_{\pm0.93}$ & $34.59_{\pm0.90}$ & $33.50_{\pm0.16}$ & $+2.29$ & $+1.20$ \\
\bottomrule
\end{tabular}
\end{table}

\paragraph{The gains are broad, not concentrated}
\Cref{tab:perproblem-passk} decomposes each \passat{16} gain into per-problem
paired changes, showing that the improvements are spread across many problems
rather than driven by a handful.  
\begin{table}[t]
\centering
\footnotesize
\setlength{\tabcolsep}{3.5pt}
\begin{tabular}{@{}ll rrr rrr@{}}
\toprule
SFT data & Benchmark & \#gained & \#lost & \#unchanged & median $\Delta$ (pp) & mean $\Delta$ (pp) & 95\% CI \\
\midrule
OMI & AIME & 31 & 14 & 75 & +0.00 & +3.07 & [+0.77, +5.59] \\
OMI & MATH-500 L5 & 25 & 15 & 94 & +0.00 & +2.74 & [-1.24, +6.47] \\
OMI & OMEGA-500 & 75 & 77 & 348 & +0.00 & -0.20 & [-2.33, +1.80] \\
\addlinespace[2pt]
BigCode & MBPP+ & 39 & 19 & 320 & +0.00 & +4.14 & [+2.12, +6.35] \\
BigCode & HumanEval+ & 21 & 11 & 132 & +0.00 & +3.46 & [+0.00, +6.91] \\
BigCode & CruxEval-I & 157 & 89 & 554 & +0.00 & +4.08 & [+2.12, +6.12] \\
BigCode & CruxEval-O & 274 & 44 & 482 & +0.00 & +16.79 & [+14.54, +19.04] \\
BigCode & LiveCodeBench & 72 & 41 & 499 & +0.00 & +2.72 & [+1.09, +4.41] \\
\addlinespace[2pt]
Magicoder & MBPP+ & 42 & 19 & 317 & +0.00 & +3.35 & [+1.06, +5.73] \\
Magicoder & HumanEval+ & 13 & 10 & 141 & +0.00 & +0.81 & [-2.03, +3.86] \\
Magicoder & CruxEval-I & 196 & 82 & 522 & +0.00 & +8.33 & [+6.17, +10.46] \\
Magicoder & CruxEval-O & 183 & 48 & 569 & +0.00 & +9.83 & [+7.92, +11.83] \\
Magicoder & LiveCodeBench & 56 & 41 & 515 & +0.00 & +1.42 & [-0.22, +3.05] \\
\addlinespace[2pt]
OCI & MBPP+ & 22 & 15 & 341 & +0.00 & +1.15 & [-0.35, +2.60] \\
OCI & HumanEval+ & 8 & 17 & 139 & +0.00 & -2.74 & [-5.18, -0.51] \\
OCI & CruxEval-I & 134 & 69 & 597 & +0.00 & +3.50 & [+2.04, +4.92] \\
OCI & CruxEval-O & 132 & 57 & 611 & +0.00 & +4.33 & [+2.83, +5.79] \\
OCI & LiveCodeBench & 45 & 65 & 502 & +0.00 & -1.09 & [-2.61, +0.44] \\
\bottomrule
\end{tabular}
\caption{Per-problem paired changes in \passat{16} for \tailsft{} versus Standard SFT at the main-table settings. A problem counts as gained or lost only when its paired \passat{16} strictly changes; unchanged includes ceiling and floor ties. The final column is a paired bootstrap $95\%$ interval for the mean per-problem change.}
\label{tab:perproblem-passk}
\end{table}

\subsubsection{Code}
\label{app:sft-code}

\paragraph{Model and optimization}
Code SFT trains at sequence length $2048$, which effectively never truncates training targets.  Runs use no weight decay and a per-rank microbatch
of $1$ with $4$ gradient-accumulation steps, for an effective batch of $32$ and an
$8$-example selection batch.  The learning rate is searched over
$\{1,2,3\}\times10^{-5}$.  Each dataset uses a fixed epoch count.

\begin{table}[t]
\centering
\small
\begin{tabular}{p{0.1\linewidth} p{0.45\linewidth} p{0.1\linewidth} }
\toprule
SFT data & Source & Train / val \\
\midrule
Magicoder & \texttt{ise-uiuc/Magicoder-OSS-Instruct-75K} & 33,817 / 342  \\
BigCode & \texttt{bigcode/self-oss-instruct-sc2-exec-filter-50k} & 49,260 / 498 \\
OCI & \texttt{nvidia/OpenCodeInstruct} & 72,635 / 365  \\
\bottomrule
\end{tabular}
\caption{Code SFT datasets. Row counts are after decontamination, format conversion, and initialization-loss annotation.}
\label{tab:code-sft-datasets}
\end{table}

\paragraph{Magicoder}
We keep the Python subset of \texttt{ise-uiuc/Magicoder-OSS-Instruct-75K}
($38{,}284$ of $75{,}197$ rows).  We drop non-Python rows, rows without
extractable code, rows matching an MBPP+ or HumanEval+ signature, and rows
containing unit tests, and discard solutions that fail to parse as an abstract
syntax tree.  The assistant target starts with the prefix
\texttt{Here is the completed function:\textbackslash n\textbackslash n```python\textbackslash n},
followed by the extracted code and a closing fence, matching the MBPP+ and
HumanEval+ answer format.  One further row is removed because it does not fit the
$2048$-token training tokenization, leaving $33{,}817$ train rows.

\paragraph{BigCode}
For \texttt{bigcode/self-oss-instruct-sc2-exec-filter-50k} ($50{,}661$ raw rows)
the user message is the instruction and the target uses the same evalplus-aligned
prefix and fenced Python format as Magicoder.  We remove exact and near MBPP+ and
HumanEval+ contamination, benchmark-signature contamination, rows containing
tests, and non-Python rows, leaving $49{,}260$ train rows.

\paragraph{OCI}
For \texttt{nvidia/OpenCodeInstruct} we load five evenly spaced shards and keep
rows with \texttt{average\_test\_score}$=1.0$.  We remove MBPP+ contamination by
exact prompt match, by test-assertion substring match against inputs, outputs,
and unit tests, and by MBPP+ function-name definitions appearing in outputs, then
shuffle with seed $42$ and hold out a $0.5\%$ validation split.  OCI keeps its
native format, with the user message the OCI \texttt{input} and the target the OCI
\texttt{output}, both verbatim.

\paragraph{Hyperparameter configurations}
We vary the learning rate and filtering configuration as summarized in Table~7. The configurations used in \Cref{tab:sft-main} and the additional evaluated configurations are reported across several tables.
Tables~\ref{tab:code-sft-grid-bigcode} and \ref{tab:code-sft-grid-magicoder} give
MBPP+ pass@1 and pass@16 for every completed BigCode and Magicoder
configuration, with the change against the no-filter run at the same learning
rate.  

\begin{table}[t]
\centering
\small
\begin{tabular}{p{0.32\linewidth} p{0.55\linewidth}}
\toprule
Hyperparameter & Values \\
\midrule
Base model & OLMo-3 7B base \\
Learning rate & $\{1,2,3\}\times10^{-5}$ \\
Filter schedule & none; static; ramp $0\!\to\!\cdot$ \\
Filter fraction $f$ & $0$ (none); static $\{0.125,0.25,0.5\}$; ramp $0\!\to\!0.5$ \\
Epochs & fixed per dataset ($4$ / $5$ / $3$ for BigCode / Magicoder / OCI) \\
Sequence length & $2048$ \\
Effective batch & $32$ ($8\times1\times4$) \\
Warmup / schedule & $3\%$ linear warmup, then constant \\
Optimizer & AdamW $(0.9,0.95)$, wd $0$, clip $1.0$ \\
\bottomrule
\end{tabular}
\caption{Code SFT hyperparameter grid over learning rates and filtering configurations. }
\label{tab:code-sft-hp-grid}
\end{table}

\begin{table}[t]
\centering
\small
\begin{tabular}{llrrrr}
\toprule
Filter & LR & pass@1 & $\Delta$ & pass@16 & $\Delta$ \\
\midrule
base (no SFT) & --- & $36.82\,\pm\,0.56$ & --- & $75.93\,\pm\,1.06$ & --- \\
\addlinespace[2pt]
no filter & $1\times 10^{-5}$ & $47.45\,\pm\,0.82$ & --- & $79.45\,\pm\,0.81$ & --- \\
static $f{=}0.125$ & $1\times 10^{-5}$ & $47.00\,\pm\,0.60$ & $-0.45$ & $79.81\,\pm\,0.81$ & $+0.36$ \\
ramp $0{\to}0.5$ & $1\times 10^{-5}$ & $45.62\,\pm\,0.57$ & $-1.83$ & $79.72\,\pm\,0.85$ & $+0.27$ \\
\addlinespace[2pt]
no filter & $2\times 10^{-5}$ & $52.58\,\pm\,0.11$ & --- & $76.72\,\pm\,1.06$ & --- \\
static $f{=}0.125$ & $2\times 10^{-5}$ & $52.07\,\pm\,0.70$ & $-0.51$ & $77.51\,\pm\,0.46$ & $+0.79$ \\
static $f{=}0.25$ & $2\times 10^{-5}$ & $51.36\,\pm\,0.14$ & $-1.22$ & $78.84\,\pm\,1.06$ & $+2.12$ \\
static $f{=}0.5$ & $2\times 10^{-5}$ & $47.22\,\pm\,0.54$ & $-5.36$ & $77.95\,\pm\,0.76$ & $+1.23$ \\
ramp $0{\to}0.5$ & $2\times 10^{-5}$ & $51.65\,\pm\,0.18$ & $-0.93$ & $78.66\,\pm\,0.40$ & $+1.94$ \\
\addlinespace[2pt]
no filter & $3\times 10^{-5}$ & $53.04\,\pm\,0.93$ & --- & $74.69\,\pm\,1.10$ & --- \\
static $f{=}0.125$ & $3\times 10^{-5}$ & $51.90\,\pm\,0.59$ & $-1.14$ & $75.40\,\pm\,1.06$ & $+0.71$ \\
ramp $0{\to}0.5$ & $3\times 10^{-5}$ & $50.67\,\pm\,0.26$ & $-2.37$ & $75.93\,\pm\,0.46$ & $+1.24$ \\
\bottomrule
\end{tabular}
\caption{BigCode MBPP+ ($n{=}378$) SFT filtering grid, $4$ epochs.  Values are
percent, mean $\pm$ sample standard deviation over three seeds;
$\Delta$ is the change against the no-filter run at the same learning rate.
The \emph{base (no SFT)} row is the untuned OLMo-3 7B base under the same MBPP+
protocol (format-robust EvalPlus stops).}
\label{tab:code-sft-grid-bigcode}
\end{table}

\begin{table}[t]
\centering
\small
\begin{tabular}{llrrrr}
\toprule
Filter & LR & pass@1 & $\Delta$ & pass@16 & $\Delta$ \\
\midrule
base (no SFT) & --- & $36.82\,\pm\,0.56$ & --- & $75.93\,\pm\,1.06$ & --- \\
\addlinespace[2pt]
no filter & $1\times 10^{-5}$ & $53.14\,\pm\,0.52$ & --- & $80.69\,\pm\,0.26$ & --- \\
static $f{=}0.125$ & $1\times 10^{-5}$ & $52.19\,\pm\,0.25$ & $-0.95$ & $79.98\,\pm\,0.61$ & $-0.71$ \\
static $f{=}0.25$ & $1\times 10^{-5}$ & $51.51\,\pm\,0.04$ & $-1.63$ & $80.42\,\pm\,1.40$ & $-0.27$ \\
static $f{=}0.5$ & $1\times 10^{-5}$ & $50.00\,\pm\,0.59$ & $-3.14$ & $79.45\,\pm\,1.55$ & $-1.24$ \\
\addlinespace[2pt]
no filter & $2\times 10^{-5}$ & $55.01\,\pm\,0.36$ & --- & $75.13\,\pm\,1.15$ & --- \\
static $f{=}0.125$ & $2\times 10^{-5}$ & $54.83\,\pm\,0.52$ & $-0.18$ & $77.51\,\pm\,0.53$ & $+2.38$ \\
static $f{=}0.25$ & $2\times 10^{-5}$ & $54.60\,\pm\,0.55$ & $-0.41$ & $78.66\,\pm\,1.25$ & $+3.53$ \\
static $f{=}0.5$ & $2\times 10^{-5}$ & $51.92\,\pm\,0.41$ & $-3.09$ & $79.72\,\pm\,0.31$ & $+4.59$ \\
ramp $0{\to}0.5$ & $2\times 10^{-5}$ & $54.73\,\pm\,0.73$ & $-0.28$ & $78.31\,\pm\,1.47$ & $+3.18$ \\
\addlinespace[2pt]
no filter & $3\times 10^{-5}$ & $55.35\,\pm\,0.47$ & --- & $75.31\,\pm\,0.93$ & --- \\
static $f{=}0.125$ & $3\times 10^{-5}$ & $54.75\,\pm\,0.58$ & $-0.60$ & $76.54\,\pm\,0.15$ & $+1.23$ \\
static $f{=}0.25$ & $3\times 10^{-5}$ & $52.94\,\pm\,0.67$ & $-2.41$ & $75.93\,\pm\,1.06$ & $+0.62$ \\
static $f{=}0.5$ & $3\times 10^{-5}$ & $51.14\,\pm\,0.47$ & $-4.21$ & $77.95\,\pm\,1.36$ & $+2.64$ \\
\bottomrule
\end{tabular}
\caption{Magicoder MBPP+ ($n{=}378$) SFT filtering grid, $5$ epochs.  Values are
percent, mean $\pm$ sample standard deviation over three seeds;
$\Delta$ is the change against the no-filter run at the same learning rate.
The \emph{base (no SFT)} row is the same untuned base as in
\cref{tab:code-sft-grid-bigcode}.}
\label{tab:code-sft-grid-magicoder}
\end{table}

For OCI, Table~10 reports the no-filter Standard configuration and the TailSFT ramp configuration used in Table~1.  MBPP+ coverage
improves under filtering ($+1.15$~pp), while HumanEval+ loses coverage
($-2.74$~pp).  Accordingly, OCI HumanEval+ has $\rho_{16}=0.23$, outside the $\rho_{16}>1$ regime identified by the coverage-ratio diagnostic.

\begin{table}[t]
\centering
\small
\begin{tabular}{llrrrr}
\toprule
Benchmark & Filter & pass@1 & $\Delta$ & pass@16 & $\Delta$ \\
\midrule
MBPP+ ($n{=}378$) & base (no SFT) & $36.82\,\pm\,0.56$ & --- & $75.93\,\pm\,1.06$ & --- \\
MBPP+ ($n{=}378$) & no filter & $58.26\,\pm\,0.88$ & --- & $81.22\,\pm\,0.75$ & --- \\
MBPP+ ($n{=}378$) & ramp $0{\to}0.5$ & $55.81\,\pm\,0.06$ & $-2.44$ & $82.36\,\pm\,0.81$ & $+1.15$ \\
\addlinespace[2pt]
HumanEval+ ($n{=}164$) & base (no SFT) & $32.94\,\pm\,1.11$ & --- & $79.27\,\pm\,0.61$ & --- \\
HumanEval+ ($n{=}164$) & no filter & $54.76\,\pm\,1.16$ & --- & $85.98\,\pm\,1.61$ & --- \\
HumanEval+ ($n{=}164$) & ramp $0{\to}0.5$ & $51.94\,\pm\,1.83$ & $-2.82$ & $83.23\,\pm\,2.16$ & $-2.74$ \\
\bottomrule
\end{tabular}
\caption{OCI MBPP+ and HumanEval+ SFT grid, $3$ epochs, learning rate $2\times10^{-5}$;
Standard is no filtering and \tailsft{} is the ramp $0\!\to\!0.5$.  Values are
percent, mean $\pm$ sample standard deviation; $\Delta$ is the change against no
filtering.  The \emph{base (no SFT)} rows are the untuned base under the same
per-benchmark protocol.}
\label{tab:code-sft-grid-oci}
\end{table}

\paragraph{Coverage holds across benchmarks and can repair coverage collapse}
Tables~\ref{tab:code-sft-grid-heplus}--\ref{tab:code-sft-grid-lcb} report every
filtered configuration we evaluated on HumanEval+, CruxEval-I, CruxEval-O, and
LiveCodeBench, against the no-filter run for the same dataset.  The largest coverage gains land where standard SFT had driven
coverage \emph{below} the base model: on CruxEval-I standard SFT falls
$8$--$11$~pp under the base ($70.5\!\to\!61.7$ for BigCode, $59.8$ for Magicoder), so
much of \tailsft{}'s advantage is repairing a coverage collapse that ordinary SFT
introduced.  

\begin{table}[t]
\centering
\small
\begin{tabular}{llrrrr}
\toprule
Filter & LR & pass@1 & $\Delta$ & pass@16 & $\Delta$ \\
\midrule
base (no SFT) & --- & $32.94\,\pm\,1.11$ & --- & $79.27\,\pm\,0.61$ & --- \\
\addlinespace[2pt]
\multicolumn{6}{l}{\textit{BigCode}} \\
no filter & $3\times 10^{-5}$ & $45.06\,\pm\,0.57$ & --- & $76.63\,\pm\,0.70$ & --- \\
static $f{=}0.125$ & $2\times 10^{-5}$ & $48.08\,\pm\,0.12$ & $+3.02$ & $79.47\,\pm\,1.76$ & $+2.85$ \\
static $f{=}0.25$ & $2\times 10^{-5}$ & $46.33\,\pm\,1.07$ & $+1.27$ & $80.08\,\pm\,0.93$ & $+3.46$ \\
static $f{=}0.5$ & $2\times 10^{-5}$ & $44.08\,\pm\,0.48$ & $-0.98$ & $80.69\,\pm\,1.27$ & $+4.07$ \\
\addlinespace[2pt]
\multicolumn{6}{l}{\textit{Magicoder}} \\
no filter & $3\times 10^{-5}$ & $48.49\,\pm\,0.58$ & --- & $77.85\,\pm\,0.70$ & --- \\
static $f{=}0.25$ & $2\times 10^{-5}$ & $47.69\,\pm\,0.19$ & $-0.80$ & $78.66\,\pm\,1.83$ & $+0.81$ \\
static $f{=}0.5$ & $2\times 10^{-5}$ & $46.60\,\pm\,0.66$ & $-1.89$ & $80.49\,\pm\,1.61$ & $+2.64$ \\
\bottomrule
\end{tabular}
\caption{HumanEval+ ($n{=}164$) SFT filtering grid.  Epochs are $4$ (BigCode) and $5$
(Magicoder); OCI HumanEval+ appears in \cref{tab:code-sft-grid-oci}.  Values are
percent, mean $\pm$ sample standard deviation over three seeds;
$\Delta$ is the change against the no-filter run for the same dataset; the
\emph{base (no SFT)} row is the untuned OLMo-3 7B base under the same protocol.}
\label{tab:code-sft-grid-heplus}
\end{table}

\begin{table}[t]
\centering
\small
\begin{tabular}{llrrrr}
\toprule
Filter & LR & pass@1 & $\Delta$ & pass@16 & $\Delta$ \\
\midrule
base (no SFT) & --- & $25.67\,\pm\,0.62$ & --- & $70.50\,\pm\,1.54$ & --- \\
\addlinespace[2pt]
\multicolumn{6}{l}{\textit{BigCode}} \\
no filter & $3\times 10^{-5}$ & $29.79\,\pm\,0.48$ & --- & $61.71\,\pm\,2.27$ & --- \\
static $f{=}0.125$ & $2\times 10^{-5}$ & $30.65\,\pm\,0.26$ & $+0.86$ & $63.54\,\pm\,0.76$ & $+1.83$ \\
static $f{=}0.25$ & $2\times 10^{-5}$ & $30.43\,\pm\,0.15$ & $+0.64$ & $65.79\,\pm\,0.26$ & $+4.08$ \\
static $f{=}0.5$ & $2\times 10^{-5}$ & $29.76\,\pm\,0.35$ & $-0.03$ & $69.29\,\pm\,1.92$ & $+7.58$ \\
\addlinespace[2pt]
\multicolumn{6}{l}{\textit{Magicoder}} \\
no filter & $3\times 10^{-5}$ & $28.50\,\pm\,0.13$ & --- & $59.75\,\pm\,0.62$ & --- \\
static $f{=}0.25$ & $2\times 10^{-5}$ & $26.48\,\pm\,0.45$ & $-2.02$ & $68.08\,\pm\,0.94$ & $+8.33$ \\
static $f{=}0.5$ & $2\times 10^{-5}$ & $25.63\,\pm\,0.56$ & $-2.88$ & $69.79\,\pm\,0.38$ & $+10.04$ \\
\addlinespace[2pt]
\multicolumn{6}{l}{\textit{OCI}} \\
no filter & $2\times 10^{-5}$ & $29.46\,\pm\,0.46$ & --- & $68.08\,\pm\,1.77$ & --- \\
ramp $0{\to}0.5$ & $2\times 10^{-5}$ & $30.17\,\pm\,0.64$ & $+0.71$ & $71.58\,\pm\,1.91$ & $+3.50$ \\
\bottomrule
\end{tabular}
\caption{CruxEval-I ($n{=}800$) SFT filtering grid.  Epochs are $4$ / $5$ / $3$ for
BigCode / Magicoder / OCI.  Values are percent, mean $\pm$ sample standard
deviation over three seeds; $\Delta$ is the change against the
no-filter run for the same dataset; the \emph{base (no SFT)} row is the untuned
OLMo-3 7B base under the same protocol.}
\label{tab:code-sft-grid-cruxi}
\end{table}

\begin{table}[t]
\centering
\small
\begin{tabular}{llrrrr}
\toprule
Filter & LR & pass@1 & $\Delta$ & pass@16 & $\Delta$ \\
\midrule
base (no SFT) & --- & $4.22\,\pm\,0.74$ & --- & $32.08\,\pm\,1.98$ & --- \\
\addlinespace[2pt]
\multicolumn{6}{l}{\textit{BigCode}} \\
no filter & $3\times 10^{-5}$ & $4.16\,\pm\,0.44$ & --- & $24.21\,\pm\,2.38$ & --- \\
static $f{=}0.125$ & $2\times 10^{-5}$ & $12.02\,\pm\,0.11$ & $+7.86$ & $39.75\,\pm\,0.76$ & $+15.54$ \\
static $f{=}0.25$ & $2\times 10^{-5}$ & $13.18\,\pm\,0.30$ & $+9.02$ & $41.00\,\pm\,1.35$ & $+16.79$ \\
static $f{=}0.5$ & $2\times 10^{-5}$ & $7.35\,\pm\,0.37$ & $+3.20$ & $36.54\,\pm\,0.14$ & $+12.33$ \\
\addlinespace[2pt]
\multicolumn{6}{l}{\textit{Magicoder}} \\
no filter & $3\times 10^{-5}$ & $12.86\,\pm\,0.19$ & --- & $38.08\,\pm\,0.94$ & --- \\
static $f{=}0.25$ & $2\times 10^{-5}$ & $16.16\,\pm\,0.53$ & $+3.30$ & $47.92\,\pm\,1.66$ & $+9.83$ \\
static $f{=}0.5$ & $2\times 10^{-5}$ & $10.12\,\pm\,0.37$ & $-2.74$ & $41.67\,\pm\,0.63$ & $+3.58$ \\
\addlinespace[2pt]
\multicolumn{6}{l}{\textit{OCI}} \\
no filter & $2\times 10^{-5}$ & $9.30\,\pm\,0.49$ & --- & $40.12\,\pm\,0.00$ & --- \\
ramp $0{\to}0.5$ & $2\times 10^{-5}$ & $10.84\,\pm\,0.37$ & $+1.54$ & $44.46\,\pm\,0.47$ & $+4.33$ \\
\bottomrule
\end{tabular}
\caption{CruxEval-O ($n{=}800$) SFT filtering grid.  Epochs are $4$ / $5$ / $3$ for
BigCode / Magicoder / OCI.  Values are percent, mean $\pm$ sample standard
deviation over three seeds; $\Delta$ is the change against the
no-filter run for the same dataset; the \emph{base (no SFT)} row is the untuned
OLMo-3 7B base under the same protocol.}
\label{tab:code-sft-grid-cruxo}
\end{table}

\begin{table}[t]
\centering
\small
\begin{tabular}{llrrrr}
\toprule
Filter & LR & pass@1 & $\Delta$ & pass@16 & $\Delta$ \\
\midrule
base (no SFT) & --- & $5.50\,\pm\,0.42$ & --- & $32.30\,\pm\,0.93$ & --- \\
\addlinespace[2pt]
\multicolumn{6}{l}{\textit{BigCode}} \\
no filter & $3\times 10^{-5}$ & $10.74\,\pm\,0.46$ & --- & $30.39\,\pm\,0.86$ & --- \\
static $f{=}0.125$ & $2\times 10^{-5}$ & $11.77\,\pm\,0.44$ & $+1.03$ & $31.54\,\pm\,0.71$ & $+1.14$ \\
static $f{=}0.25$ & $2\times 10^{-5}$ & $11.49\,\pm\,0.38$ & $+0.75$ & $33.12\,\pm\,0.34$ & $+2.72$ \\
static $f{=}0.5$ & $2\times 10^{-5}$ & $11.45\,\pm\,0.42$ & $+0.71$ & $33.39\,\pm\,0.25$ & $+3.00$ \\
\addlinespace[2pt]
\multicolumn{6}{l}{\textit{Magicoder}} \\
no filter & $3\times 10^{-5}$ & $14.84\,\pm\,0.11$ & --- & $31.81\,\pm\,0.34$ & --- \\
static $f{=}0.25$ & $2\times 10^{-5}$ & $14.32\,\pm\,0.38$ & $-0.51$ & $33.22\,\pm\,0.34$ & $+1.42$ \\
static $f{=}0.5$ & $2\times 10^{-5}$ & $14.81\,\pm\,0.22$ & $-0.02$ & $34.48\,\pm\,1.28$ & $+2.67$ \\
\addlinespace[2pt]
\multicolumn{6}{l}{\textit{OCI}} \\
no filter & $2\times 10^{-5}$ & $12.24\,\pm\,0.36$ & --- & $34.59\,\pm\,0.90$ & --- \\
ramp $0{\to}0.5$ & $2\times 10^{-5}$ & $11.49\,\pm\,0.84$ & $-0.75$ & $33.50\,\pm\,0.16$ & $-1.09$ \\
\bottomrule
\end{tabular}
\caption{LiveCodeBench ($n{=}612$) SFT filtering grid.  Epochs are $4$ / $5$ / $3$ for
BigCode / Magicoder / OCI.  Values are percent, mean $\pm$ sample standard
deviation over three seeds; $\Delta$ is the change against the
no-filter run for the same dataset; the \emph{base (no SFT)} row is the untuned
OLMo-3 7B base under the same protocol.}
\label{tab:code-sft-grid-lcb}
\end{table}

\paragraph{Evaluation protocol}
We evaluate with the OLMES~\citep{gu2025olmes} EvalPlus configuration used across the OLMo-3 code
suite, so the appendix numbers are directly comparable to that standard.  Every
code result is sampled at temperature $1.0$, top-$p=1.0$, with $16$ generations
per problem for each of three independent evaluation seed sets; throughout the
appendix, reported means and standard deviations are computed over these three
seed sets, which we refer to simply as three seeds.  MBPP+ uses the $378$-problem EvalPlus test split with the
assistant prefix
\texttt{Here is the completed function:\textbackslash n\textbackslash n```python\textbackslash n},
maximum generation length $2048$, context length $4096$, and stop strings
\texttt{```}, newline-triple-quote, newline-\texttt{assert}, and newline-comment.
HumanEval+ uses the $164$-problem EvalPlus split with the same prefix, generation
length $1024$, and context $4096$.  CruxEval-I and CruxEval-O use $800$ instances
each with OLMo-3 multiturn few-shot prompts (two shots for input prediction, one
for output), generation length $512$, and context $4096$.  LiveCodeBench uses
$612$ code-generation problems with an expert-Python system prompt, generation
length $8192$, and context $32768$.  We report pass@$k$ with the standard unbiased
estimator implemented in OLMES, $\mathrm{pass}@k=1-\binom{n-c}{k}\big/\binom{n}{k}$
for a document with $c$ correct completions out of $n$ samples, averaged over
documents rather than as a best-of-$k$ maximum.  With $n{=}16$ samples this makes
pass@1 the mean per-document pass rate and pass@16 the fraction of documents with at
least one correct completion.  Each generation length sits above the lengths these
tasks actually decode to, so completions are not cut off.

\begin{table}[t]
\centering
\footnotesize
\setlength{\tabcolsep}{4pt}
\begin{tabular}{llrrrr}
\toprule
SFT data & Benchmark & Standard p@1 & \tailsft{} p@1 & Standard p@16 & \tailsft{} p@16 \\
\midrule
BigCode & MBPP+ ($n=378$) & $53.04_{\pm0.93}$ & $51.36_{\pm0.14}$ & $74.69_{\pm1.10}$ & $78.84_{\pm1.06}$ \\
BigCode & HumanEval+ ($n=164$) & $45.06_{\pm0.57}$ & $46.33_{\pm1.07}$ & $76.63_{\pm0.70}$ & $80.08_{\pm0.93}$ \\
BigCode & CruxEval-I ($n=800$) & $29.79_{\pm0.48}$ & $30.43_{\pm0.15}$ & $61.71_{\pm2.27}$ & $65.79_{\pm0.26}$ \\
BigCode & CruxEval-O ($n=800$) & $4.16_{\pm0.44}$ & $13.18_{\pm0.30}$ & $24.21_{\pm2.38}$ & $41.00_{\pm1.35}$ \\
BigCode & LiveCodeBench ($n=612$) & $10.74_{\pm0.46}$ & $11.49_{\pm0.38}$ & $30.39_{\pm0.86}$ & $33.12_{\pm0.34}$ \\
\addlinespace[1pt]
Magicoder & MBPP+ ($n=378$) & $55.35_{\pm0.47}$ & $54.60_{\pm0.55}$ & $75.31_{\pm0.93}$ & $78.66_{\pm1.25}$ \\
Magicoder & HumanEval+ ($n=164$) & $48.49_{\pm0.58}$ & $47.69_{\pm0.19}$ & $77.85_{\pm0.70}$ & $78.66_{\pm1.83}$ \\
Magicoder & CruxEval-I ($n=800$) & $28.50_{\pm0.13}$ & $26.48_{\pm0.45}$ & $59.75_{\pm0.62}$ & $68.08_{\pm0.94}$ \\
Magicoder & CruxEval-O ($n=800$) & $12.86_{\pm0.19}$ & $16.16_{\pm0.53}$ & $38.08_{\pm0.94}$ & $47.92_{\pm1.66}$ \\
Magicoder & LiveCodeBench ($n=612$) & $14.84_{\pm0.11}$ & $14.32_{\pm0.38}$ & $31.81_{\pm0.34}$ & $33.22_{\pm0.34}$ \\
\addlinespace[1pt]
OCI & CruxEval-I ($n=800$) & $29.46_{\pm0.46}$ & $30.17_{\pm0.64}$ & $68.08_{\pm1.77}$ & $71.58_{\pm1.91}$ \\
OCI & CruxEval-O ($n=800$) & $9.30_{\pm0.49}$ & $10.84_{\pm0.37}$ & $40.12_{\pm0.00}$ & $44.46_{\pm0.47}$ \\
OCI & LiveCodeBench ($n=612$) & $12.24_{\pm0.36}$ & $11.49_{\pm0.84}$ & $34.59_{\pm0.90}$ & $33.50_{\pm0.16}$ \\
\bottomrule
\end{tabular}
\caption{Post-SFT code results, matching \cref{tab:sft-main}.  Values are
percentages, mean $\pm$ sample standard deviation over three seeds.}
\label{tab:code-sft-results}
\end{table}

\paragraph{The coverage advantage tends to widen with $k$}
\Cref{tab:sft-passk-delta-code} reports the \tailsft{} minus Standard SFT
pass@$k$ gap at $k\in\{1,2,4,8,16\}$ for every benchmark.  On most cells the gap
starts near zero (or slightly negative) at $k{=}1$ and tends to increase with $k$---most
strikingly CruxEval-O, where BigCode grows from $+9.02$ to $+16.79$---so the
advantage is a coverage effect rather than a pass@1 shift.  The two OCI transfer
benchmarks (HumanEval+ and LiveCodeBench) are the main exception, where \tailsft{}
trails at every $k$.

\begin{table}[t]
\centering
\footnotesize
\caption{SFT pass@k deltas for code benchmarks, reported as \tailsft{} minus Standard SFT in percentage points at the main-table settings. Deltas are computed with the unbiased OLMES~\citep{gu2025olmes} pass@k estimator.}
\label{tab:sft-passk-delta-code}
\begin{tabular}{@{}llrrrrr@{}}
\toprule
SFT data & Benchmark & $\Delta$p@1 & $\Delta$p@2 & $\Delta$p@4 & $\Delta$p@8 & $\Delta$p@16 \\
\midrule
BigCode & MBPP+ ($n{=}378$) & $-1.68$ & $+0.64$ & $+2.19$ & $+3.19$ & $+4.14$ \\
 & HumanEval+ ($n{=}164$) & $+1.27$ & $+3.37$ & $+4.34$ & $+4.44$ & $+3.46$ \\
 & CruxEval-I ($n{=}800$) & $+0.64$ & $+1.61$ & $+2.57$ & $+3.25$ & $+4.08$ \\
 & CruxEval-O ($n{=}800$) & $+9.02$ & $+12.58$ & $+15.45$ & $+16.95$ & $+16.79$ \\
 & LiveCodeBench ($n{=}612$) & $+0.75$ & $+1.39$ & $+1.86$ & $+2.21$ & $+2.72$ \\
\midrule
Magicoder & MBPP+ ($n{=}378$) & $-0.75$ & $+1.73$ & $+2.95$ & $+3.25$ & $+3.35$ \\
 & HumanEval+ ($n{=}164$) & $-0.80$ & $+0.81$ & $+1.55$ & $+1.64$ & $+0.81$ \\
 & CruxEval-I ($n{=}800$) & $-2.02$ & $-0.10$ & $+2.36$ & $+5.28$ & $+8.33$ \\
 & CruxEval-O ($n{=}800$) & $+3.30$ & $+4.97$ & $+6.70$ & $+8.38$ & $+9.83$ \\
 & LiveCodeBench ($n{=}612$) & $-0.51$ & $+0.30$ & $+0.80$ & $+1.08$ & $+1.42$ \\
\midrule
OCI & MBPP+ ($n{=}378$) & $-2.44$ & $-1.21$ & $-0.17$ & $+0.65$ & $+1.15$ \\
 & HumanEval+ ($n{=}164$) & $-2.82$ & $-2.38$ & $-1.89$ & $-2.02$ & $-2.74$ \\
 & CruxEval-I ($n{=}800$) & $+0.71$ & $+1.54$ & $+2.46$ & $+3.19$ & $+3.50$ \\
 & CruxEval-O ($n{=}800$) & $+1.54$ & $+2.33$ & $+3.14$ & $+3.86$ & $+4.33$ \\
 & LiveCodeBench ($n{=}612$) & $-0.75$ & $-0.86$ & $-0.91$ & $-0.91$ & $-1.09$ \\
\bottomrule
\end{tabular}
\end{table}

\subsubsection{Math}
\label{app:sft-math}

\paragraph{Model and optimization}
Math SFT trains at sequence length $4096$ with weight decay $0.1$, a per-rank
microbatch of $2$ with $4$ gradient-accumulation steps, for an effective batch of
$64$ and a $16$-example selection batch.  The learning rate is fixed at
$3\times10^{-5}$.  We train for $2$ epochs and evaluate the final checkpoint.  Two
passes fit the $\sim\!350$k-example subset without the overfitting seen at higher
counts, and the count is held identical across every arm so differences come only
from filtering.  Each supervised example is a two-turn chat.  The user message is
the problem followed by the instruction ``\texttt{Present the answer in LaTex
format: \textbackslash boxed\{Your answer\}}'', and the assistant target is the
provided chain-of-thought solution.

\paragraph{Data}
The OpenMathInstruct-2 (OMI) subset is built from the OMI \texttt{train\_1M}
shards by
keeping \texttt{problem\_source} in \{\texttt{math}, \texttt{augmented\_math}\},
exact-match decontaminating against MATH-500 problem strings, dropping rows whose
problem plus solution exceeds $2048$ tokens under the OLMo-3 tokenizer, and
sampling a $350$k-example stratified subset with a $95/5$ train/validation split.
No MATH-500 rows were found during decontamination.  Since rows over $2048$ tokens
were removed during data construction, the $4096$-token training sequence length
does not truncate supervised targets.  The final SFT parquets add
\texttt{init\_ce}, the base-model per-example cross-entropy used by the filtering
rule.

\begin{table}[h]
\centering
\small
\begin{tabular}{@{}llrr@{}}
\toprule
Dataset & Source/filter & Train rows & Validation rows \\
\midrule
OMI math subset & OpenMathInstruct-2 \texttt{math}/\texttt{augmented\_math}, MATH-500 decontaminated & 332{,}500 & 17{,}500 \\
\bottomrule
\end{tabular}
\caption{In-domain math SFT data.  The annotated training split contains
$326{,}725$ \texttt{augmented\_math} and $5{,}775$ \texttt{math} rows; the
validation split contains $17{,}192$ and $308$, respectively.}
\label{tab:app-sft-math-data}
\end{table}

\paragraph{Hyperparameter search and selection}
For math we fixed the learning rate at $3\times10^{-5}$ and swept the filtering
configuration over no filtering, static $f\in\{0.0625,0.125,0.1875,0.25,0.5\}$,
and the linear ramps $0\!\to\!f$ for the same endpoints.  The optimizer, sequence
length, batch size, and $2$-epoch schedule are held fixed as shown in
\cref{tab:app-sft-math-hp-grid}.  
\Cref{tab:sft-main} then reports the no-filter Standard checkpoint against the
selected \tailsft{} checkpoint.  We use the ramp $0\!\to\!0.5$ arm.

\begin{table}[t]
\centering
\small
\begin{tabular}{p{0.32\linewidth} p{0.55\linewidth}}
\toprule
Hyperparameter & Values \\
\midrule
Base model & OLMo-3 7B base \\
Learning rate & fixed at $3\times10^{-5}$ \\
Filter schedule & none; static; ramp $0\!\to\!\cdot$ \\
Filter fraction $f$ & $0$ (none); static $\{0.0625,0.125,0.1875,0.25,0.5\}$; ramp $0\!\to\!\{0.0625,0.125,0.1875,0.25,0.5\}$ \\
Epochs & $2$ \\
Sequence length & $4096$ \\
Effective batch & $64$ ($8\times2\times4$) \\
Warmup schedule & $3\%$ linear warmup, then constant \\
Optimizer & AdamW $(0.9,0.95)$, wd $0.1$, clip $1.0$ \\
\bottomrule
\end{tabular}
\caption{Math SFT hyperparameter grid.}
\label{tab:app-sft-math-hp-grid}
\end{table}

\paragraph{Per-benchmark grids}
Tables~\ref{tab:math-sft-grid}--\ref{tab:math-sft-grid-aime} report every
completed math SFT filtering evaluation in-domain math
benchmarks.  The broadest grid is MATH-500.  Its $500$-problem aggregate
(\cref{tab:math-sft-grid}) sits near ceiling, so
\cref{tab:math-sft-grid-p16-levels,tab:math-sft-grid-p1-levels} break the same
sweep out by the five MATH difficulty levels.  No-filter pass@16 is $100\%$ at
Level~1 and stays above $96\%$ through Level~3, so the easy levels dominate the
aggregate and hide where filtering acts.  The unreached coverage, and the gains
from filtering, concentrate at the hardest levels: at Level~5 filtering matches
or improves no-filter pass@16 in $8$ of $10$ arms and strictly improves it in
$7$, while the near-saturated levels move little in either direction.  No
MATH-500 pass@16 cell drops by more than $1.8$~pp, inside the seed noise.
The completed AIME rows all improve pass@16, while the frozen OMEGA-500 arm is
essentially tied with no filtering.

\begin{table}[t]
\centering
\small
\begin{tabular}{lrrrr}
\toprule
Filter & pass@1 & $\Delta$ & pass@16 & $\Delta$ \\
\midrule
no filter & $53.55\,\pm\,0.49$ & --- & $86.27\,\pm\,0.61$ & --- \\
static $f{=}0.0625$ & $53.66\,\pm\,0.28$ & $+0.11$ & $87.00\,\pm\,0.87$ & $+0.73$ \\
static $f{=}0.125$ & $53.60\,\pm\,0.39$ & $+0.05$ & $87.20\,\pm\,0.40$ & $+0.93$ \\
static $f{=}0.1875$ & $53.15\,\pm\,0.33$ & $-0.41$ & $86.20\,\pm\,0.69$ & $-0.07$ \\
static $f{=}0.25$ & $53.51\,\pm\,0.41$ & $-0.04$ & $86.33\,\pm\,1.29$ & $+0.07$ \\
static $f{=}0.5$ & $52.02\,\pm\,0.13$ & $-1.54$ & $85.60\,\pm\,0.72$ & $-0.67$ \\
ramp $0{\to}0.0625$ & $53.80\,\pm\,0.29$ & $+0.25$ & $85.87\,\pm\,0.31$ & $-0.40$ \\
ramp $0{\to}0.125$ & $53.37\,\pm\,0.73$ & $-0.19$ & $86.47\,\pm\,0.50$ & $+0.20$ \\
ramp $0{\to}0.1875$ & $54.13\,\pm\,0.50$ & $+0.57$ & $87.27\,\pm\,0.95$ & $+1.00$ \\
ramp $0{\to}0.25$ & $53.91\,\pm\,0.19$ & $+0.35$ & $86.13\,\pm\,0.46$ & $-0.13$ \\
ramp $0{\to}0.5$ & $53.59\,\pm\,0.41$ & $+0.03$ & $86.27\,\pm\,1.01$ & $+0.00$ \\
\bottomrule
\end{tabular}
\caption{MATH-500 full ($n{=}500$) SFT filtering grid, $4096$-token generations.
Values are percent, mean $\pm$ sample standard deviation over three generation
seeds; $\Delta$ is the change against no filtering.}
\label{tab:math-sft-grid}
\end{table}

\begin{table}[t]
\centering
\small
\setlength{\tabcolsep}{5pt}
\begin{tabular}{lrrrrr}
\toprule
Filter & Level 1 ($n{=}43$) & Level 2 ($n{=}90$) & Level 3 ($n{=}105$) & Level 4 ($n{=}128$) & Level 5 ($n{=}134$) \\
\midrule
no filter & $100.00\,\pm\,0.00$ & $97.78\,\pm\,1.11$ & $96.51\,\pm\,1.98$ & $85.68\,\pm\,1.63$ & $66.67\,\pm\,1.88$ \\
static $f{=}0.0625$ & $100.00\,\pm\,0.00$ & $98.15\,\pm\,0.64$ & $96.19\,\pm\,0.95$ & $85.68\,\pm\,0.90$ & $69.40\,\pm\,3.25$ \\
static $f{=}0.125$ & $100.00\,\pm\,0.00$ & $97.41\,\pm\,0.64$ & $96.83\,\pm\,1.10$ & $85.68\,\pm\,2.51$ & $70.15\,\pm\,1.29$ \\
static $f{=}0.1875$ & $100.00\,\pm\,0.00$ & $97.78\,\pm\,0.00$ & $95.56\,\pm\,1.46$ & $84.38\,\pm\,0.78$ & $68.41\,\pm\,1.88$ \\
static $f{=}0.25$ & $100.00\,\pm\,0.00$ & $97.41\,\pm\,0.64$ & $96.51\,\pm\,0.55$ & $85.94\,\pm\,3.40$ & $66.92\,\pm\,2.62$ \\
static $f{=}0.5$ & $100.00\,\pm\,0.00$ & $96.30\,\pm\,1.70$ & $95.56\,\pm\,1.98$ & $85.68\,\pm\,3.25$ & $65.92\,\pm\,4.24$ \\
ramp $0{\to}0.0625$ & $99.22\,\pm\,1.34$ & $97.04\,\pm\,0.64$ & $97.14\,\pm\,0.95$ & $84.38\,\pm\,0.78$ & $66.67\,\pm\,1.55$ \\
ramp $0{\to}0.125$ & $100.00\,\pm\,0.00$ & $97.04\,\pm\,1.70$ & $96.83\,\pm\,1.46$ & $83.85\,\pm\,1.19$ & $69.40\,\pm\,2.69$ \\
ramp $0{\to}0.1875$ & $100.00\,\pm\,0.00$ & $97.41\,\pm\,0.64$ & $96.51\,\pm\,0.55$ & $85.68\,\pm\,1.63$ & $70.65\,\pm\,4.97$ \\
ramp $0{\to}0.25$ & $100.00\,\pm\,0.00$ & $97.78\,\pm\,1.11$ & $97.78\,\pm\,1.46$ & $84.90\,\pm\,0.45$ & $65.92\,\pm\,1.14$ \\
ramp $0{\to}0.5$ & $100.00\,\pm\,0.00$ & $96.67\,\pm\,0.00$ & $97.14\,\pm\,0.00$ & $85.42\,\pm\,1.97$ & $67.16\,\pm\,1.97$ \\
\bottomrule
\end{tabular}
\caption{MATH-500 SFT \textbf{pass@16} by difficulty level, from the same $4096$-token
SFT filtering sweep as \cref{tab:math-sft-grid}.  Columns are the five MATH difficulty
levels, with the number of problems in each header.  Values are percent, mean
$\pm$ sample standard deviation over three seeds.  pass@16 is at or
near ceiling through Level~3, so the aggregate is dominated by the easy levels;
the gains from filtering concentrate at Levels~4--5.  The frozen main table
reports the selected ramp arm on Level~5, re-evaluated with $8192$-token
generations.}
\label{tab:math-sft-grid-p16-levels}
\end{table}

\begin{table}[t]
\centering
\small
\setlength{\tabcolsep}{5pt}
\begin{tabular}{lrrrrr}
\toprule
Filter & Level 1 ($n{=}43$) & Level 2 ($n{=}90$) & Level 3 ($n{=}105$) & Level 4 ($n{=}128$) & Level 5 ($n{=}134$) \\
\midrule
no filter & $85.27\,\pm\,1.46$ & $76.25\,\pm\,0.43$ & $65.95\,\pm\,1.59$ & $47.02\,\pm\,0.65$ & $24.66\,\pm\,0.63$ \\
static $f{=}0.0625$ & $85.66\,\pm\,1.05$ & $76.44\,\pm\,1.15$ & $64.98\,\pm\,1.93$ & $46.92\,\pm\,0.21$ & $25.67\,\pm\,0.72$ \\
static $f{=}0.125$ & $84.98\,\pm\,0.89$ & $76.69\,\pm\,0.70$ & $64.46\,\pm\,0.77$ & $47.28\,\pm\,0.47$ & $25.56\,\pm\,0.73$ \\
static $f{=}0.1875$ & $84.79\,\pm\,0.75$ & $75.83\,\pm\,1.05$ & $65.06\,\pm\,0.51$ & $46.14\,\pm\,1.65$ & $25.11\,\pm\,0.66$ \\
static $f{=}0.25$ & $85.08\,\pm\,0.69$ & $75.62\,\pm\,0.77$ & $65.08\,\pm\,0.34$ & $47.18\,\pm\,0.86$ & $25.51\,\pm\,0.33$ \\
static $f{=}0.5$ & $83.58\,\pm\,0.58$ & $74.31\,\pm\,0.90$ & $62.38\,\pm\,1.11$ & $46.48\,\pm\,0.13$ & $24.08\,\pm\,0.03$ \\
ramp $0{\to}0.0625$ & $85.61\,\pm\,1.54$ & $76.06\,\pm\,0.51$ & $66.03\,\pm\,0.72$ & $47.27\,\pm\,0.30$ & $25.30\,\pm\,0.56$ \\
ramp $0{\to}0.125$ & $84.20\,\pm\,0.67$ & $76.41\,\pm\,1.28$ & $65.24\,\pm\,1.17$ & $47.20\,\pm\,1.10$ & $24.58\,\pm\,0.82$ \\
ramp $0{\to}0.1875$ & $85.90\,\pm\,0.52$ & $76.39\,\pm\,0.84$ & $65.85\,\pm\,0.30$ & $48.00\,\pm\,0.09$ & $25.65\,\pm\,1.10$ \\
ramp $0{\to}0.25$ & $85.85\,\pm\,1.24$ & $76.44\,\pm\,0.26$ & $65.66\,\pm\,0.53$ & $47.58\,\pm\,0.31$ & $25.37\,\pm\,0.69$ \\
ramp $0{\to}0.5$ & $85.27\,\pm\,0.22$ & $75.93\,\pm\,0.54$ & $65.28\,\pm\,0.61$ & $47.84\,\pm\,0.79$ & $24.75\,\pm\,1.02$ \\
\bottomrule
\end{tabular}
\caption{MATH-500 SFT \textbf{pass@1} by difficulty level, from the same sweep as
\cref{tab:math-sft-grid-p16-levels}.  Values are percent, mean $\pm$ sample
standard deviation over three seeds.}
\label{tab:math-sft-grid-p1-levels}
\end{table}

\paragraph{Reading the Level-4 numbers}
Level~4 is the one subset where filtering neither clearly helps nor hurts, and
this is an estimator artifact rather than a real regression.  pass@16 is scored
as a binary hit per seed and averaged over three seeds, so every
per-problem value is quantized to $\{0,\tfrac13,\tfrac23,1\}$ and a problem's
filtered-minus-no-filter difference can only be a multiple of $33$~pp.
The Level-4 mean is then set by a few knife's-edge problems that the base model
already solves on two of three seeds, where a single seed flip moves the
subset mean by a point or two.  Two such problems account for the whole gap:
under the arms that most help Level~5, the Level-4 pass@16 change moves from
$+0.0$~pp on the full subset to about $+0.8$~pp once those two problems are
excluded, in the same direction as Level~5.  Level~5 shows a larger and steadier
gain because it contains many more genuinely contested problems, so this
per-problem quantization averages out.

\begin{table}[t]
\centering
\small
\begin{tabular}{llrrrr}
\toprule
Eval family & Filter & pass@1 & $\Delta$ & pass@16 & $\Delta$ \\
\midrule
\multicolumn{2}{@{}l}{base (no SFT)$^{\dagger}$} & $5.16$ & --- & $20.91$ & --- \\
\addlinespace[2pt]
ramp sweep & no filter & $3.65\,\pm\,0.37$ & --- & $15.24\,\pm\,2.02$ & --- \\
ramp sweep & ramp $0{\to}0.1875$ & $3.82\,\pm\,0.20$ & $+0.16$ & $16.06\,\pm\,1.42$ & $+0.82$ \\
ramp sweep & ramp $0{\to}0.25$ & $3.78\,\pm\,0.20$ & $+0.13$ & $15.37\,\pm\,0.23$ & $+0.13$ \\
ramp sweep & ramp $0{\to}0.5$ & $4.03\,\pm\,0.02$ & $+0.37$ & $18.31\,\pm\,0.57$ & $+3.07$ \\
\addlinespace[2pt]
static & no filter & $3.65\,\pm\,0.34$ & --- & $15.52\,\pm\,2.00$ & --- \\
static & static $f{=}0.125$ & $3.75\,\pm\,0.18$ & $+0.10$ & $16.14\,\pm\,1.73$ & $+0.63$ \\
\bottomrule
\end{tabular}
\caption{AIME 2022--2025 ($n{=}120$) completed SFT filtering evaluations.  The
ramp sweep and the static pair were evaluated in separate runs, each against its
own matched no-filter baseline.  Values are percent, mean $\pm$ sample standard
deviation over three seeds; $\Delta$ is the change against the
matching no-filter row.  $^{\dagger}$The \emph{base (no SFT)} row is the untuned
OLMo-3 7B base under the same AIME protocol.}
\label{tab:math-sft-grid-aime}
\end{table}

\paragraph{Evaluation protocol}
We evaluate with the standard OLMES~\citep{gu2025olmes} math configuration used across the OLMo-3
math suite.  AIME contains $30$ problems from each year $2022$--$2025$ ($n=120$),
chat prompting with the boxed-answer instruction, temperature $0.6$, top-$p=0.95$,
maximum generation length and model context $8192$, and $32$ generations per
problem for each of three seeds.  MATH-500 uses the $500$-problem test split;
the main table reports the Level-5 subset ($n=134$), since the full benchmark is
near ceiling, with temperature $1.0$, top-$p=1.0$, maximum generation length
$8192$, model context $16384$, and $16$ generations per problem for each of three
seeds.  OMEGA-500 (the \texttt{saumyamalik/omega-500} $500$-problem set) uses
temperature $1.0$, top-$p=1.0$,
maximum generation length $4096$, model context $8192$, and $16$ generations per
problem for each of three seeds.  For AIME and MATH-500 we use the stored
per-document pass@$k$ metrics.  For OMEGA-500, whose aggregate files do not store
pass@$k$, we recompute pass@1 and pass@16 from the $16$ completions using the
extracted flexible boxed answer, falling back to the standard extracted answer,
and match it against the label.  In every case pass@$k$ is the same unbiased
estimator implemented in OLMES ($\mathrm{pass}@k=1-\binom{n-c}{k}\big/\binom{n}{k}$
for $c$ correct of $n$ samples), so AIME's $32$ samples are combined at $k{=}16$
rather than by best-of-$k$.  The sweep of \cref{tab:math-sft-grid}
uses MATH-500 full ($n=500$) with $4096$-token generations.

\begin{table}[h]
\centering
\small
\begin{tabular}{@{}llrrrr@{}}
\toprule
SFT data & Benchmark & Standard p@1 & \tailsft{} p@1 & Standard p@16 & \tailsft{} p@16 \\
\midrule
OMI & AIME 2022--2025 ($n=120$) & $3.65_{\pm0.37}$ & $4.03_{\pm0.02}$ & $15.24_{\pm2.02}$ & $18.31_{\pm0.57}$ \\
OMI & MATH-500 Level 5 ($n=134$) & $24.80_{\pm0.40}$ & $25.16_{\pm0.40}$ & $66.42_{\pm0.00}$ & $69.15_{\pm0.86}$ \\
OMI & OMEGA-500 ($n=500$) & $6.58_{\pm0.50}$ & $6.51_{\pm0.26}$ & $32.80_{\pm2.25}$ & $32.60_{\pm1.56}$ \\
\bottomrule
\end{tabular}
\caption{Post-SFT math results, matching \cref{tab:sft-main}.  Values are
percentages, mean $\pm$ sample standard deviation over three seeds.}
\label{tab:app-sft-math-results}
\end{table}

\paragraph{The coverage advantage tends to widen with $k$}
\Cref{tab:sft-passk-delta-math} reports the \tailsft{} minus Standard SFT
pass@$k$ gap at $k\in\{1,2,4,8,16\}$.  On AIME and MATH-500 the gap is small at
$k{=}1$ and tends to grow with $k$ (AIME $+0.37\!\to\!+3.07$; MATH-500 Level~5
$+0.36\!\to\!+2.74$), while OMEGA-500, where the two models are already matched,
stays flat.  The math gains therefore come from broader coverage that surfaces at
larger $k$ rather than from a shift in greedy accuracy.

\begin{table}[t]
\centering
\footnotesize
\caption{SFT pass@k deltas for math benchmarks, reported as \tailsft{} minus Standard SFT in percentage points at the main-table settings. Deltas are computed with the unbiased OLMES~\citep{gu2025olmes} pass@k estimator.}
\label{tab:sft-passk-delta-math}
\begin{tabular}{@{}lrrrrr@{}}
\toprule
Benchmark & $\Delta$p@1 & $\Delta$p@2 & $\Delta$p@4 & $\Delta$p@8 & $\Delta$p@16 \\
\midrule
AIME 2022--2025 ($n{=}120$) & $+0.37$ & $+0.72$ & $+1.20$ & $+1.93$ & $+3.07$ \\
MATH-500 Level 5 ($n{=}134$) & $+0.36$ & $+0.71$ & $+0.83$ & $+0.87$ & $+2.74$ \\
OMEGA-500 ($n{=}500$) & $-0.06$ & $-0.09$ & $-0.12$ & $-0.23$ & $-0.20$ \\
\bottomrule
\end{tabular}
\end{table}

\subsection{GRPO}
\label{app:grpo}

The in-domain math and code GRPO runs share one training procedure and differ only
in data, reward, hyperparameter grid, and evaluation.  We describe the shared
procedure here; the math (\cref{app:grpo-math}) and code (\cref{app:grpo-code})
subsections give the domain-specific details.  Every run is initialized from one of
the matched SFT checkpoints evaluated in \cref{tab:sft-main}: the Standard-SFT run
supplies the no-filter start and the paired \tailsft{} run supplies the filtered
start, so the two policies differ only in how their initialization was trained.
\Cref{tab:grpo-results} reports a single tuned configuration per domain; the
per-domain grids below give the tuned-configuration results in full.

\paragraph{Reinforcement learning with GRPO}
Each run optimizes the initialized policy against a verifiable reward with
GRPO~\citep{shao2024deepseekmath}.  For every prompt we sample a group of $n$
responses at temperature $1.0$, score each response with the domain reward, and use
the group-relative advantage---each response's reward centered and scaled within its
group---as the policy-gradient signal.  We use no KL regularization and no learned
reward model.  Optimization uses a batch of $128$ prompts, a mini-batch of $128$, and
$8$-way data parallelism, with the rollout group size $n$ and the actor learning rate
searched per domain and all other settings held fixed within a domain.  Checkpoints
are exported periodically during training and the reported checkpoint is chosen on a
held-out validation split, as described per domain.  Post-GRPO evaluation uses OLMES~\citep{gu2025olmes}
and reuses the same per-benchmark coverage protocols as \cref{app:sft}, with pass@$k$
estimated over three seeds; the exact per-benchmark sampling settings are
given below.

\subsubsection{Math}
\label{app:grpo-math}

\paragraph{Initialization}
The Standard start is the no-filter OMI SFT checkpoint and the \tailsft{} start is the
matched ramp $0\!\to\!0.5$ checkpoint (\cref{app:sft-math}); these are the two OMI
checkpoints reported in \cref{tab:sft-main}.

\paragraph{Data and reward}
GRPO trains on the MATH training split (the MATH-lighteval release) with the MATH-500
test problems removed by exact-match decontamination, leaving $11{,}396$ training and
$600$ validation prompts, so the evaluation set is never seen during RL.  Each prompt
uses the same boxed-answer instruction as math SFT, and the reward is $1$ when a
response's boxed final answer matches the reference under the verifiable math grader
and $0$ otherwise.

\paragraph{Optimization}
We use a maximum prompt length of $512$, a maximum response length of $4096$, and
train for $3$ epochs.  At rollout group size $n{=}4$ we search the actor learning
rate; \cref{tab:grpo-grid-math} reports the tuned configuration (actor learning rate
$2\times10^{-5}$).  Doubling the rollout group size to $n{=}8$ leaves the ordering
unchanged---the \tailsft{} initialization retains a post-GRPO \passat{1} advantage of
about $+2$ points on MATH-500 Level 5---so the gain is not an artifact of the group
size.  \Cref{tab:grpo-math-hp-grid} summarizes the configuration.

\begin{table}[t]
\centering
\small
\begin{tabular}{p{0.34\linewidth} p{0.55\linewidth}}
\toprule
Setting & Value \\
\midrule
Initialization & matched Standard / \tailsft{} OMI SFT checkpoints (\cref{tab:sft-main}) \\
Algorithm & GRPO, group-relative advantage, no KL \\
Training prompts & MATH-lighteval train, MATH-500 removed ($11{,}396$ train / $600$ val) \\
Reward & verifiable boxed-answer exact match ($1$/$0$) \\
Rollout group size $n$ & $4$ \\
Actor learning rate & searched $\{1,2\}\times10^{-5}$; reported $2\times10^{-5}$ \\
Max prompt / response length & $512$ / $4096$ \\
Epochs & $3$ \\
Prompt batch / mini-batch & $128$ / $128$ \\
Checkpoint selection & mean-best (highest mean validation reward) \\
Evaluation & MATH-500 L5 (temp $1.0$, $16$ samples) \& AIME (temp $0.6$, $32$ samples); $8192$-token, $3$ seeds \\
\bottomrule
\end{tabular}
\caption{Math GRPO configuration.  All runs share the fixed settings; only the actor
learning rate is searched, and \cref{tab:grpo-grid-math} reports the tuned
configuration.}
\label{tab:grpo-math-hp-grid}
\end{table}

\paragraph{Checkpoint selection and evaluation}
We select the mean-best checkpoint---the one with the highest mean validation reward
over the sampled group---which averages over the small held-out validation split to
reduce selection noise.  We evaluate on the MATH-500 Level-5 subset ($n{=}134$) and
AIME 2022--2025 ($n{=}120$).  Because these are long-form reasoning tasks we use a
generation budget of $8192$ tokens and report only these long-generation evaluations,
since shorter budgets truncate reasoning traces and understate pass rates; at $8192$
tokens truncation is not a concern, as no AIME generation and fewer than $4\%$ of
MATH-500 generations reach the limit.  Matching the math SFT protocol
(\cref{app:sft-math}), MATH-500 Level~5 is sampled at temperature $1.0$,
top-$p{=}1.0$ with $16$ samples per problem, and AIME at temperature $0.6$,
top-$p{=}0.95$ with $32$ samples per problem, each over three seeds.  At the tuned
configuration the \tailsft{}
initialization improves both pass@1 and pass@16 on both benchmarks
(\cref{tab:grpo-grid-math}).

\begin{table}[t]
\centering
\small
\providecommand{\dgain}[1]{\textcolor{ForestGreen}{#1}}
\providecommand{\dnull}[1]{\textcolor{yellow!65!black}{#1}}
\begin{tabular}{l rrr rrr}
\toprule
 & \multicolumn{3}{c}{\textbf{pass@1}} & \multicolumn{3}{c}{\textbf{pass@16}} \\
\cmidrule(lr){2-4}\cmidrule(lr){5-7}
Benchmark & Standard & \tailsft{} & $\Delta$ & Standard & \tailsft{} & $\Delta$ \\
\midrule
MATH-500 Level 5 ($n{=}134$) & $57.70\,\pm\,0.44$ & $60.26\,\pm\,1.02$ & \dgain{$+2.56$} & $83.83\,\pm\,1.72$ & $87.06\,\pm\,0.86$ & \dgain{$+3.23$} \\
AIME 2022--2025 ($n{=}120$)  & $14.40\,\pm\,0.26$ & $15.61\,\pm\,0.44$ & \dgain{$+1.21$} & $32.76\,\pm\,0.88$ & $36.06\,\pm\,0.98$ & \dgain{$+3.30$} \\
\bottomrule
\end{tabular}
\caption{Math GRPO at the tuned configuration (rollout $n{=}4$, actor learning rate
$2\times10^{-5}$, mean-best checkpoint, $8192$-token generation), evaluated from the
matched Standard and \tailsft{} initializations of \cref{tab:sft-main}.  Values are
percent, mean $\pm$ standard deviation over three seeds; $\Delta$ is the
\tailsft{} improvement.}
\label{tab:grpo-grid-math}
\end{table}

\subsubsection{Code}
\label{app:grpo-code}

\paragraph{Initialization}
For each dataset the Standard start is the no-filter SFT checkpoint and the \tailsft{}
start is the matched filtered checkpoint from \cref{app:sft-code} (static $f{=}0.25$
for BigCode and Magicoder, the ramp $0\!\to\!0.5$ for OCI); these are the checkpoints
reported in \cref{tab:sft-main}.

\paragraph{Data and reward}
GRPO trains on the MBPP+ training split with a held-out $100$-problem validation
subset.  The reward is $1$ when a sampled program passes the full MBPP+ unit-test
suite for its problem and $0$ otherwise.

\paragraph{Optimization}
We use a maximum prompt and response length of $1024$ each and train for $10$ epochs.
We search the rollout group size $n\in\{2,4\}$ and the actor learning rate
$\in\{5\times10^{-6},1\times10^{-5},2\times10^{-5}\}$; \cref{tab:grpo-grid-code}
reports the tuned configuration for each dataset.  \Cref{tab:grpo-code-hp-grid}
summarizes the configuration.

\begin{table}[t]
\centering
\small
\begin{tabular}{p{0.34\linewidth} p{0.55\linewidth}}
\toprule
Setting & Value \\
\midrule
Initialization & matched Standard / \tailsft{} SFT checkpoints per dataset (\cref{tab:sft-main}) \\
Algorithm & GRPO, group-relative advantage, no KL \\
Training prompts & MBPP+ train ($100$-problem held-out validation) \\
Reward & full MBPP+ unit-test pass ($1$/$0$) \\
Rollout group size $n$ & searched $\{2,4\}$; reported $4$ \\
Actor learning rate & searched $\{5\times10^{-6},1\times10^{-5},2\times10^{-5}\}$; reported $2\times10^{-5}$ \\
Max prompt / response length & $1024$ / $1024$ \\
Epochs & $10$ \\
Prompt batch / mini-batch & $128$ / $128$ \\
Checkpoint selection & highest validation pass rate \\
Evaluation & EvalPlus MBPP+ ($n{=}378$); $2048$-token, temp $1.0$, top-$p\,1.0$, $16$ samples $\times$ $3$ seeds \\
\bottomrule
\end{tabular}
\caption{Code GRPO configuration.  All runs share the fixed settings; the rollout
group size and actor learning rate are searched, and \cref{tab:grpo-grid-code}
reports the tuned configuration for each dataset.}
\label{tab:grpo-code-hp-grid}
\end{table}

\paragraph{Checkpoint selection and evaluation}
We select the checkpoint with the highest validation pass rate over the small
$100$-problem held-out split, applying the same rule to both initializations.  Evaluation uses
EvalPlus MBPP+ ($378$-problem test split) at temperature $1.0$, top-$p{=}1.0$, with
$16$ samples per problem over three seeds and a $2048$-token generation budget;
MBPP+ solutions are short relative to this budget, so generations terminate well
within the limit and truncation does not affect the reported rates.  At the tuned
configurations the \tailsft{} initialization improves post-GRPO pass@1 while generally retaining
its pass@16 coverage advantage (\cref{tab:grpo-grid-code}).

\begin{table}[t]
\centering
\small
\providecommand{\dgain}[1]{\textcolor{ForestGreen}{#1}}
\providecommand{\dnull}[1]{\textcolor{yellow!65!black}{#1}}
\begin{tabular}{l cc rrr rrr}
\toprule
 & & & \multicolumn{3}{c}{\textbf{pass@1}} & \multicolumn{3}{c}{\textbf{pass@16}} \\
\cmidrule(lr){4-6}\cmidrule(lr){7-9}
SFT data (\tailsft{} filter) & $n$ & LR & Standard & \tailsft{} & $\Delta$ & Standard & \tailsft{} & $\Delta$ \\
\midrule
BigCode (static $f{=}0.25$)   & $4$ & $2\times10^{-5}$ & $69.57\,\pm\,0.29$ & $73.50\,\pm\,0.70$ & \dgain{$+3.93$} & $75.93\,\pm\,0.00$ & $78.66\,\pm\,0.15$ & \dgain{$+2.73$} \\
Magicoder (static $f{=}0.25$) & $2$ & $1\times10^{-5}$ & $64.08\,\pm\,0.32$ & $66.02\,\pm\,0.55$ & \dgain{$+1.94$} & $75.84\,\pm\,0.31$ & $79.28\,\pm\,0.55$ & \dgain{$+3.44$} \\
Magicoder (static $f{=}0.25$) & $4$ & $2\times10^{-5}$ & $70.52\,\pm\,0.25$ & $73.24\,\pm\,0.09$ & \dgain{$+2.72$} & $78.22\,\pm\,0.61$ & $80.60\,\pm\,0.55$ & \dgain{$+2.38$} \\
OCI (ramp $0{\to}0.5$)        & $4$ & $2\times10^{-5}$ & $74.67\,\pm\,0.08$ & $76.30\,\pm\,0.05$ & \dgain{$+1.62$} & $84.22\,\pm\,0.40$ & $84.04\,\pm\,0.15$ & \dnull{$-0.18$} \\
\bottomrule
\end{tabular}
\caption{Code GRPO (MBPP+, $n{=}378$, $2048$-token generation) from the matched
Standard and \tailsft{} initializations of \cref{tab:sft-main}, at tuned
configurations.  $n$ is the GRPO rollout group size and LR the actor learning rate.
Values are percent, mean $\pm$ standard deviation over three seeds; $\Delta$ is the
\tailsft{} improvement, with yellow marking a change within one standard deviation.}
\label{tab:grpo-grid-code}
\end{table}

\paragraph{Training efficiency}
\Cref{fig:grpo-reward-trajectories} tracks the mean training reward per GRPO step for the Standard and TailSFT initializations. The TailSFT reward rises at least as fast as Standard early in training, and in some settings the early reward increases up to roughly $2.5\times$ faster.


\begin{figure}[t]
\centering
\includegraphics[width=\textwidth]{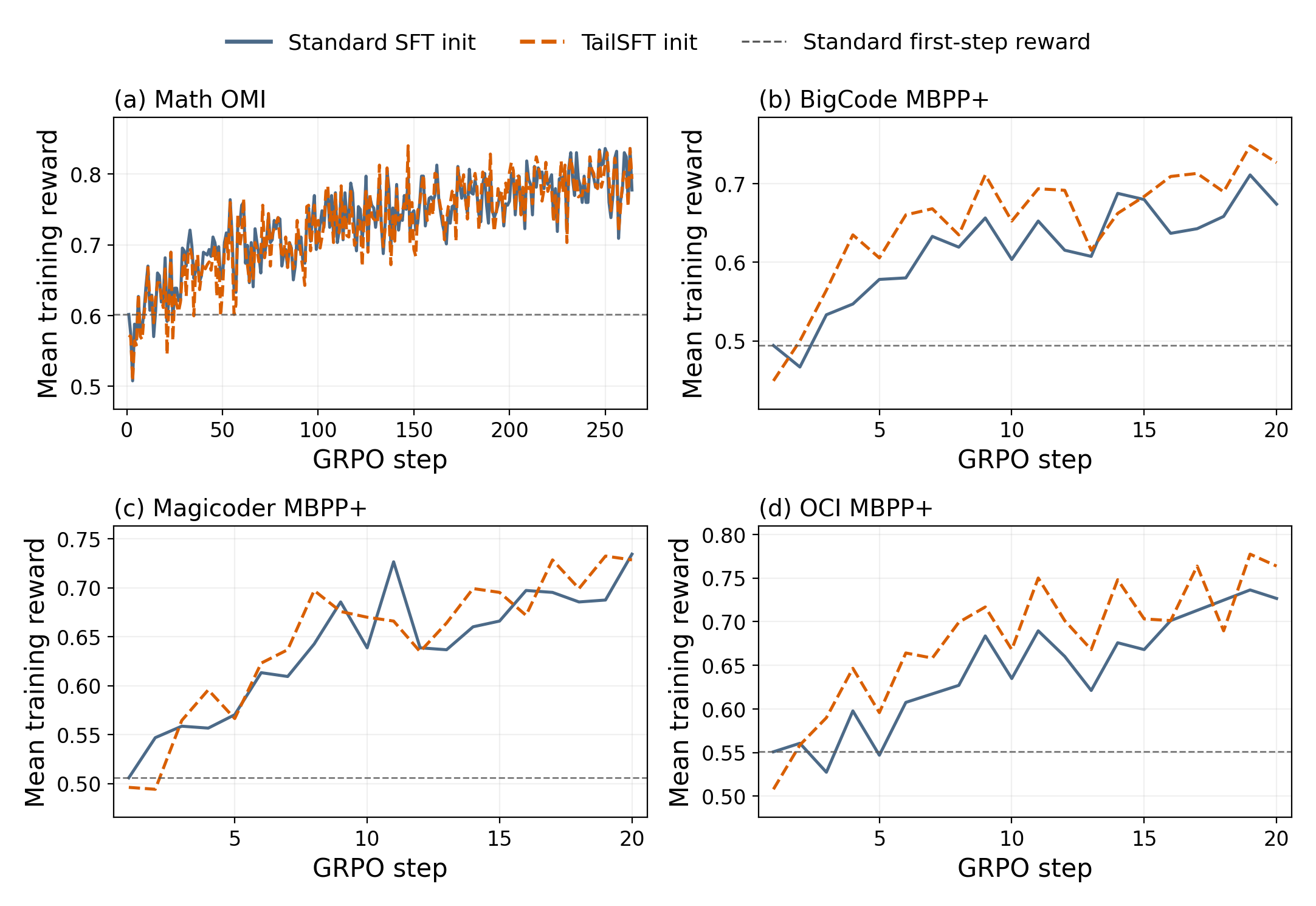}
\caption{\textbf{GRPO training reward from Standard and \tailsft{} initializations.}
Mean training reward (\texttt{critic/rewards/mean}) versus GRPO step for the four
main-table settings.  The horizontal dashed line marks the Standard
initialization's first-step reward.  \tailsft{} begins below Standard  but recovers Standard's starting reward within a small fraction of
training and rises at least as fast thereafter.}
\label{fig:grpo-reward-trajectories}
\end{figure}

\subsection{Coverage ratio diagnostic: computation}
\label{app:coverage-ratio-details}

This subsection records exactly how each point in \Cref{fig:lm-clip-predictor}
was produced from the evaluation runs, filling in the aggregation and
run-selection choices that \Cref{def:coverage-ratio} leaves implicit.  The generation settings (temperature,
top-$p$, generation length, and sample count) for each benchmark are exactly
those documented in \Cref{app:sft-code,app:sft-math}; we do not re-specify them
here.

\paragraph{Per-problem scoring and seed aggregation}
For each problem $i$ and model $\pi$ we draw at least $16$ samples per evaluation
seed, compute the empirical \passat{1} and \passat{16} within each seed,
and average those per-problem values across the available seeds to form
$P_{i,1}(\pi)$ and $P_{i,16}(\pi)$; this is the $P_{i,K}(\pi)$ of
\Cref{def:coverage-ratio}.  Every run uses three
seeds for base, standard, and \tailsft{} on every benchmark.  

\paragraph{Horizontal axis ($\rho_{16}$)}
The coverage ratio is computed exactly as in \Cref{def:coverage-ratio} from
$\pi_0$ and $\pi_{\mathrm{SFT}}$: the base-reachable set
$\mathcal R_0=\{i:0.05<P_{i,16}(\pi_0)<0.95\}$ uses the \emph{empirical}
base \passat{16}, while $L$ and $G$ use the plug-in
$f_{16}\bigl(P_{i,1}(\cdot)\bigr)$ of the empirical \passat{1}.  Standard SFT is
the no-filter ($f{=}0$) run.  The size $\lvert\mathcal R_0\rvert$ of the
base-reachable set varies from $16$ (AIME) to $237$ (CruxEval-O) and is listed
per point in \Cref{tab:coverage-ratio-points}.

\paragraph{Vertical axis (coverage gain $\Delta$)}
The plotted gain is the mean over $\mathcal R_0$ of the empirical \passat{16}
difference $P_{i,16}(\pi_{\mathrm{Tail}})-P_{i,16}(\pi_{\mathrm{SFT}})$, in
percentage points.  The \tailsft{} and standard runs use the same clip settings
surfaced in \Cref{tab:sft-main}: static $f{=}0.25$ for BigCode and Magicoder,
ramp $0\!\to\!0.5$ for OCI, ramp $0\!\to\!0.5$ for AIME, OMEGA-500, and MATH-500 Level~5.

\paragraph{Special cases}
Two points require extra care.  For \emph{MATH-500 Level~5}, the base model was
only evaluated at a $4096$-token generation length, so to keep $\pi_0$ and
$\pi_{\mathrm{SFT}}$ length-matched inside $\rho_{16}$ we compute
$\mathcal R_0$, $L$, and $G$ from a $4096$-token base run and a $4096$-token
no-filter run, restricted to the $134$ Level-5 problems; the plotted gain
$\Delta$ still uses the $8192$-token standard and \tailsft{} runs of
\Cref{tab:sft-main}.  Recomputing $\rho_{16}$ from the $8192$-token no-filter run
instead leaves it essentially unchanged ($2.28$ vs.\ $2.04$, both $>1$), so the
point's placement is robust to this choice.  For \emph{OMEGA-500}, per-problem
\passat{1} and \passat{16} are recomputed from the stored generations with the
same exact-match (flexible) grader used for \Cref{tab:sft-main}, rather than read
from cached metric files, so the diagnostic and the main table score the runs
identically.
Each point is thus a triple $(\rho_{16},\Delta,\lvert\mathcal R_0\rvert)$;
\Cref{tab:coverage-ratio-points} lists all eighteen. 

\begin{table}[t]
\centering
\footnotesize
\begin{tabular}{llrrr}
\toprule
SFT dataset & Benchmark & $\lvert\mathcal R_0\rvert$ & $\rho_{16}$ & $\Delta$ (pp) \\
\midrule
OMI & AIME & 16 & $7.60$ & $+9.92$ \\
 & MATH-500 L5 & 31 & $2.28$ & $+3.23$ \\
 & OMEGA-500 & 140 & $0.90$ & $-0.95$ \\
\midrule
BigCode & CruxEval-I & 199 & $1.63$ & $+3.02$ \\
 & CruxEval-O & 237 & $2.87$ & $+28.69$ \\
 & HumanEval+ & 29 & $1.39$ & $+9.20$ \\
 & LiveCodeBench & 105 & $1.37$ & $+3.81$ \\
 & MBPP+ & 50 & $1.30$ & $+14.00$ \\
\midrule
Magicoder & CruxEval-I & 199 & $2.26$ & $+12.40$ \\
 & CruxEval-O & 237 & $0.73$ & $+18.42$ \\
 & HumanEval+ & 29 & $1.04$ & $-0.00$ \\
 & LiveCodeBench & 105 & $1.22$ & $+1.59$ \\
 & MBPP+ & 50 & $0.77$ & $+11.33$ \\
\midrule
OCI & CruxEval-I & 199 & $1.35$ & $+6.37$ \\
 & CruxEval-O & 237 & $0.55$ & $+9.00$ \\
 & HumanEval+ & 29 & $0.23$ & $-6.32$ \\
 & LiveCodeBench & 105 & $0.97$ & $-1.27$ \\
 & MBPP+ & 50 & $0.54$ & $+2.00$ \\
\bottomrule
\end{tabular}
\caption{Exact values plotted in \Cref{fig:lm-clip-predictor}: the base-reachable
set size $\lvert\mathcal R_0\rvert$, the coverage ratio $\rho_{16}$
(\Cref{def:coverage-ratio}), and the coverage gain $\Delta$ (mean empirical
\passat{16} difference of \tailsft{} over standard SFT on $\mathcal R_0$, in
percentage points), for each (SFT dataset, benchmark) pair.}
\label{tab:coverage-ratio-points}
\end{table}


\end{document}